\date{}
\newtheorem{theorem}{Theorem}[section]
\newtheorem{lemma}[theorem]{Lemma}
\newtheorem{definition}[theorem]{Definition}
\newtheorem{fact}[theorem]{Fact}
\newtheorem{remark}[theorem]{Remark}
\newtheorem{claim}[theorem]{Claim}
\newtheorem{property}[theorem]{Property}
\newcommand{\wh}{\widehat}
\newcommand{\wt}{\widetilde}
\newcommand{\R}{\mathbb{R}}
\renewcommand{\varepsilon}{\epsilon}
\renewcommand{\tilde}{\wt}
\renewcommand{\hat}{\wh}
\newcommand{\cU}{\mathcal{U}}
\DeclareMathOperator*{\E}{{\bf {E}}}
\DeclareMathOperator{\poly}{poly}
\DeclareMathOperator{\nnz}{nnz}
\DeclareMathOperator{\rank}{rank}
\definecolor{darkgreen}{RGB}{0,100,0}
\newcommand*{\RN}[1]{\expandafter\@slowromancap\romannumeral #1@}
\newcommand{\define}[4][ignore]{%
  \ifstrequal{#1}{ignore}{}{
  \@namedef{thmtitle@#2}{#1}}%
  \@namedef{thm@#2}{#4}%
  \@namedef{thmtypen@#2}{lemma}%
  \newtheorem{thmtype@#2}[theorem]{#3}%
  \newtheorem*{thmtypealt@#2}{#3~\ref{#2}}%
}
\newcommand{\state}[1]{%
  \@namedef{curthm}{#1}
  \@ifundefined{thmtitle@#1}{
  \begin{thmtype@#1}
    }{
  \begin{thmtype@#1}[\@nameuse{thmtitle@#1}]
  }
    \label{#1}
    \@nameuse{thm@#1}
  \end{thmtype@#1}
  \@ifundefined{thmdone@#1}{
  \@namedef{thmdone@#1}{stated}%
  }{}
}
\newcommand{\restate}[1]{%
  \@namedef{curthm}{#1}
  \@ifundefined{thmtitle@#1}{
    \begin{thmtypealt@#1}
    }{
  \begin{thmtypealt@#1}[\@nameuse{thmtitle@#1}]
  }
    \@nameuse{thm@#1}
  \end{thmtypealt@#1}
  \@ifundefined{thmdone@#1}{
  \@namedef{thmdone@#1}{stated}%
  }{}
}
\newcommand{\thmlabel}[1]{
  \@ifundefined{thmdone@\@nameuse{curthm}}{\label{#1}
    }{\tag*{\eqref{#1}}}
}
\begin{document}

\runningauthor{Lin, Song, Yang}

\twocolumn[
\aistatstitle{Towards a Theoretical Understanding of Hashing-Based Neural Nets}

\aistatsauthor{
  Yibo Lin\\
  \texttt{yibolin@utexas.edu}
  \And
  Zhao Song \\
  \texttt{zhaos@g.harvard.edu}
  \And
  Lin F. Yang \\
  \texttt{lin.yang@princeton.edu}
  
}
\aistatsaddress{UT-Austin
\And Harvard \& UT-Austin
\And Princeton University}
]

\begin{abstract}
Parameter reduction has been an important topic in deep learning due to the ever-increasing size of deep neural network models and the need to train and run them on resource limited machines.
Despite many efforts in this area, there were no rigorous theoretical guarantees on why existing neural net compression methods should work.
In this paper, we provide provable guarantees on some hashing-based parameter reduction methods in neural nets.
First, we introduce a neural net compression scheme based on random linear sketching (which is usually implemented efficiently via hashing), and show that the sketched (smaller) network is able to approximate the original network on all input data coming from any smooth and well-conditioned low-dimensional manifold.
The sketched network can also be trained directly via back-propagation.
Next, we study the previously proposed HashedNets architecture and show that the optimization landscape of one-hidden-layer HashedNets has a local strong convexity property similar to a normal fully connected neural network.
We complement our theoretical results with empirical verifications. 

\end{abstract}

\section{Introduction}
%
In the past decade, deep neural networks have become the new standards for many machine learning applications, including computer vision \cite{krizhevsky2012imagenet,hzrs16}, natural language processing \cite{zsv14,gehring2017convs2s}, speech recognition \cite{gmh13,amodei2016deep}, robotics \cite{lhphetsw15}, game playing \cite{alphago16,alphago17}, etc.
Such model usually contains an enormous number of parameters, which is often much larger than the number of available training samples. 
Therefore, these networks are usually trained on modern computer clusters which have a huge amount of memory and computation power.
On the other hand, there is an increasing need to train and run personalized machine learning models on mobile and embedded devices instead of transferring mobile data to a remote computation center on which all the computations are performed.
This is because real-time processing of deep learning models on mobile devices brings the benefits of better privacy and less Internet bandwidth. 
However, mobile devices like smart phones do not have the memory or computation capability of training large neural networks or even storing these models. 

These trends motivate the study of \emph{neural network compression}, with the goal of reducing the memory overhead required to train, store and run neural networks.
There is a recent line of research in this direction, for example \cite{cwtwc15,ihmwadwk16,hmd16}.
Despite their empirical effectiveness, there is little theoretical understanding on why these methods perform well.

The goal of this paper is to bridge the gap between theory and practice in neural network compression.
Our focus is on \emph{hashing-based methods}, which have been studied empirically in e.g. \cite{cwtwc15,chen2016compressing}, with the hope that the randomness in hash functions helps preserve the properties of neural networks despite a reduction in the number of effective parameters.
We make this intuition formal by giving theoretical guarantees on the \emph{approximation power} and the \emph{parameter recovery} of such networks.

First, we propose a neural net compression scheme based on random linear sketching, which can be efficiently implemented using a hash function.
Similar idea has been proposed in \cite{knj17} and demonstrated high performance empirically, but no formal theoretical guarantee was known.
We show that such compression has strong approximation power. Namely, the small network obtained after sketching can approximate the original network on \emph{all} input data coming from \emph{any low-dimensional manifold} with some regularity properties.
The sketched network is also directly trainable via back-propagation.
In fact, sketching is a principled technique for dimensionality reduction, which has been shown to be very powerful in solving various problems arising in statistics \cite{rm16,wgm17} and numerical linear algebra \cite{w14}.
Given its theoretical success, it is natural to ask whether sketching can be applied to the context of neural net compression with theoretical guarantees. Our result makes partial progresses on this question.

Next we study HashedNets, a simple method proposed in \cite{cwtwc15} which appears to perform well in practice.
HashedNets directly applies a random hash function on the connection weights in a neural net and to enforce all the weights mapped to the same hash bucket to take the same value.
In this way the number of trainable parameters is reduced to be the number of different hash buckets, and training can still be performed via back-propagation while taking the weight sharing structure into account.
From the perspective of optimization, we show that the training objective for a one-hidden-layer hashed neural net has a local strong convexity property, similar to that of a normal fully connected network \cite{zsjbd17}.
Additionally, we can apply the initialization algorithm in \cite{zsjbd17} to obtain a good initialization for training.
Therefore it implies that the parameters in one-hidden-layer HashedNets can be provably recovered under milde assumptions.





Below we describe our contributions in more detail.
\paragraph{Approximation Power}
Our result on the approximation power of sketched nets is based on a classical concept, ``subspace embedding'', which originally appears in numerical linear algebra \cite{s06}. Roughly speaking, it says that there exist a wide family of random matrices $S \in \R^{s\times n}$, such that for \emph{any} $d$-dimensional subspace $\cU \subset \R^n$, with probability $1-\delta$ we have $\left\langle Sx, Sx' \right\rangle = \langle x, x' \rangle \pm \epsilon \|x\|_2  \|x'\|_2$ for all $x, x' \in \cU$, provided $s = \Omega\left(\left(d+\log 1/\delta\right)/\epsilon^2\right)$.
This result means that the inner product between every two points in a subspace can be approximated \emph{simultaneously} after applying a random \emph{sketching matrix} $S$, which is interesting if $s\ll n$.
There has been a line of work trying to do subspace embedding using different sketching matrices (e.g. \cite{nn13,c16}).
 \emph{Sparse matrices} are of particular interests, since for a sparse matrix $S$, one can compute $Sx$ more efficiently.
For example, \cite{nn13} showed that it is possible to construct $S$ with only $\tilde{O}(1/\epsilon)$ nonzero entries per column, which significantly improves the trivial upper bound $\tilde{O}(d/\epsilon^2)$.
Furthermore, many of these sketching matrices can be efficiently implemented by $k$-wise independent hash functions where $k$ is very small, which only takes a small amount of space to store, and multiplying $S$ with a vector can be computed efficiently.

We extend the idea of subspace embedding to deep learning and show that a feed-forward fully connected network with Lipschitz-continuous activation functions can be approximated using random sketching on all input data coming from a low-dimensional subspace.
Below we describe our result for one-hidden-layer neural nets, and this can be generalized to multiple layers.

Consider a one-hidden-layer neural net with input dimension $n$ and $k$ hidden nodes. It can be parameterized by a weight matrix $W \in \R^{n\times k}$ and a weight vector $v \in \R^k$, and the function this network computes is $x \mapsto v^\top \phi(W^\top x)$, where $x\in \R^n$ is the input, and $\phi$ should be viewed as a nonlinear activation function acting coordinate-wise on a vector.
Our result says that under appropriate assumptions, one can choose a random sketching matrix $S \in \R^{s\times n}$, such that for any $d$-dimensional subspace $\cU\subset \R^n$, we have
\begin{align*}
\left| v^\top \phi(W^\top x) - v^\top \phi(W^\top S^\top Sx)\right| \le \epsilon, \forall x \in \cU, \|x\|_2\le1.
\end{align*}
This result essentially says that the weight matrix $W^\top$ can be replaced by $W^\top S^\top S$, which has rank $s$.
When $s<n$, this means that the effective number of parameters can be reduced from $kn$ to $ks$.
As we mentioned, the sketching matrix $S$ can be implemented by hash functions in small space and multiplying it with a vector is efficient.
The sketched network is also directly trainable, because we can train the $s\times k$ matrix $\hat W = SW$, regarding another factor $S$ in the decomposition $W^\top S^\top S = \hat W S$ as a known layer.

This result can be generalized to multi-layer neural nets, and we present the details in Section~\ref{sec:neural_subspace_embedding}.
We also note that our result can be easily generalized to low-dimensional manifolds under some regularity condition (see Definition 2.3 in \cite{bw09}), which is a much more realistic assumption on data.

\paragraph{Parameter Recovery.}
It is known that training a neural net is NP-hard in the worst case, even if it only has $3$ hidden nodes \cite{br93}. Recently, there has been some theoretical progress on understanding the optimization landscapes of shallow neural nets under special input distributions.
In particular, \cite{zsjbd17} gave a recovery guarantee for one-hidden-layer neural nets. They showed that if the input distribution is Gaussian and the ground-truth weight vectors corresponding to hidden nodes are linearly independent, then the true parameters can be recovered in polynomial time given finite samples.
This was proved by showing that the training objective is locally strongly convex and smooth around the ground-truth point, together with an initialization method that can output a point inside the locally ``nice'' region.
In this work, we show that local strong convexity and smoothness continue to hold if we replace the fully connected network by HashedNets which has a weight sharing structure enforced by a hash function.
We present this result in Section~\ref{sec:recovery}.

\subsection{Related Works}
\paragraph{Parameter Reduction in Deep Learning.}
There has been a series of empirical works on reducing the number of free parameters in deep neural networks:
\cite{denil2013predicting} show a method to learn low-rank decompositions of weight matrices in each layer,
\cite{cwtwc15} propose an approach to use a hash function to enforce parameter sharing,
\cite{cheng2015exploration} adopt a circulant matrix structure for parameter reduction,
\cite{sindhwani2015structured} study a more general class of structured matrices for parameter reduction.
\paragraph{Sketching and Neural Networks.}
\cite{dlst16} show that any linear or sparse polynomial function on sparse binary data can be computed by a small single-layer neural net on a linear sketch of the data.
\cite{knj17} apply a random sketching on weight matrices/tensors, but they only prove that given a fixed layer input, the output of this layer using sketching matrices is an unbiased estimator of the original output of this layer and has bounded variance; however, this does not provide guarantees on the approximation power of the whole sketching-based deep net.
\paragraph{Subspace Embedding.}
Subspace embedding \cite{s06} is a fundamental tool for solving numerical linear algebra problems, e.g. linear regression, matrix low-rank approximation \cite{cw13,nn13,rsw16,swz17}, tensor low-rank approximation \cite{swz19}.
See also \cite{w14} for a survey on this topic.
\paragraph{Recovery Guarantee of Neural Networks.}
Since learning a neural net is NP-hard in the worst case \cite{br93}, many attempts have been made to design algorithms that learns a neural net provably in polynomial time and sample complexity under additional assumptions, e.g., \cite{sa14, zlwj15, jsa15, gkkt17, gk17a, gk17b}.
Another line of work focused on analyzing (stochastic) gradient descent on shallow networks for Gaussian input distributions, e.g., \cite{bg17, zsd17, zsjbd17, tian2017analytical, ly17, dltps17, soltanolkotabi2017learning}.

\paragraph{Other Related Works}
Instead of understanding the parameter reduction as our work, there are several results working on developing over-parameterization theory of deep ReLU neural networks, e.g. \cite{als18a,als18b}. Thirty years ago, Blum and Rivest proved training neural network is NP-hard \cite{br93}. Later, neural networks have been shown hard in several different perspectives \cite{ks09,lss14,d16,ds16,gkkt17,svwx17,kbdjk17,wzcshbdd18,mr18} in the worst case regime. 

Arora et al. proved a stronger generalization for deep nets via a compression approach \cite{agnz18}. There is a long line of works targeting on explaining GAN from theoretical perspective \cite{az17,arz17,aglmz17,bjpd17,lmps2018,ssm18,vjpvd18,xzz18}. There is also a long line of provable results about adversarial examples \cite{mmstv18,bpr18,blpr18,wzcshbdd18,ssttm18,tlm18}.

\section{Preliminaries}
For any positive integer $n$, we use $[n]$ to denote the set $\{1,2,\cdots,n\}$.
Let $a\pm b$ represent any number in the interval $[a-b, a+b]$.
For any vector $ x\in \mathbb{R}^n$, we use $\|x\|_2$, $\| x\|_1$ and $\| x \|_{\infty}$ to denote its $\ell_2$, $\ell_1$ and $\ell_{\infty}$ norms, respectively.
For $x, y \in \R^n$,
we use $\langle x, y \rangle$ to denote the standard Euclidean inner product $x^\top y$.


For a matrix $A$,
let $\det(A)$ denote its determinant (if $A$ is a square matrix), 
let $A^\dagger$ denote the Moore-Penrose pseudoinverse of $A$, 
and let $\| A\|_F$ and $\| A\|_2$ denote respectively the Frobenius norm and the spectral norm of $A$.
Denote by $\sigma_i(A)$ the $i$-th largest singular value of $A$. We use $\nnz(A)$ to denote the number of non-zero entries in $A$.

For any function $f$, we define $\widetilde{O}(f)$ to be $f\cdot \log^{O(1)}(f)$. In addition to $O(\cdot)$ notation, for two functions $f,g$, we use the shorthand $f\lesssim g$ (resp. $\gtrsim$) to indicate that $f\leq C g$ (resp. $\geq$) for an absolute constant $C$. We use $f\eqsim g$ to mean $cf\leq g\leq Cf$ for constants $c,C$.

We define the $\ell_1$ and $\ell_2$ balls in $\R^n$ as: 
	$ 
	{\cal B}_1(B,n) =  \{ x \in \R^n ~|~ \| x \|_1 \leq B \}, 
	{\cal B}_2(B,n) = \{ x \in \R^n ~|~ \| x \|_2 \leq B \}.
	$
We also need the definitions of Lipschitz-continuous functions and  $k$-wise independent hash families.

\begin{definition}\label{def:lipshitz}
	A function $f : \R \rightarrow \R$ is $L$-Lipshitz continuous, if for all $x_1, x_2 \in \R$,
	$
	| f( x_1 ) - f( x_2 ) | \leq L | x_1 - x_2 |.
	$
\end{definition}

\begin{definition} 
A family of hash functions ${\cal H} \subseteq \{ h \mid h: U \to [B] \}$ is said to be {$k$-wise independent} if for any $x_1, \ldots, x_k \in U$ and any
$y_1, \ldots, y_k \in [B]$ we have
$
    \Pr_{h \sim {\cal H}} [h(x_i) = y_i, \forall i\in[k]] = \frac{1}{B^{k}}.
$
\end{definition}

\section{
The Approximation Power of Parameter-Reduced Neural Networks
}\label{sec:neural_subspace_embedding}
In this section, we study 
the approximation power of parameter-reduced neural nets based on hashing.
Any weight matrix $W$ in a neural net acts on a vector $x$ as $Wx$. We replace $Wx$ by $WS^\top S x$ for some sketching ($\#$rows $<$ $\#$columns) matrix $S$ defined in the following section.
Then the new weight matrix $W S^\top$ has much fewer parameters.  
We show that if $S$ is chosen properly as a \emph{subspace embedding} (formally defined later in this section), the sketched network can approximate the original network on all inputs coming from a low-dimensional subspace or manifold. 
Our sketching matrix is chosen as a Johnson-Lindenstrauss (JL) \cite{jl84} transformation matrix. 
In Section~\ref{sec:jl_transform_subspace_embedding}, we provide some preliminaries on subspace embedding.
In Sections~\ref{sec:one_hidden_1} and \ref{sec:one_hidden_2}, we present our result on one-hidden-layer neural nets.
Then in Section~\ref{sec:multiple_hidden} we extend this result to multi-layer neural nets and show a similar approximation guarantee.
This provides a theoretical guarantee for hashing-based parameter-reduced networks used in practice.
\subsection{Subspace Embedding}\label{sec:jl_transform_subspace_embedding}
We first present some basic definitions of sketching and subspace embedding. 
These mathematical tools are building blocks for us to understand parameter-reduced neural networks.
\begin{definition}[Subspace Embedding] 
	\label{def:subspace}
	A $(1\pm \epsilon)$ $\ell_2$-subspace embedding for the column space of an $n \times d$ matrix $U$ is a matrix $S$ for which for all $x \in \mathrm{colspan}(U)$,
	$
	\| S x \|_2^2 = ( 1 \pm \epsilon ) \| x \|_2^2,
	$
	or equivalently, for all $x, x' \in \mathrm{colspan}(U)$,
	$
	\langle Sx, Sx' \rangle = \langle x, x' \rangle \pm \epsilon \|x\|_2\|x'\|_2.
	$
\end{definition}
Constructions of subspace embedding can be found in e.g. \cite{nn13} from which there is the following theorem.
\begin{theorem}[\cite{nn13}]\label{thm:sparse_embedding_subspace}
	There is a $(1\pm \epsilon)$ oblivious\footnote{The construction of $S$ is oblivious to the subspace $U$.} $\ell_2$-subspace embedding for $n \times d$ matrix $U$ with $s = d \cdot \poly\log( d / (\epsilon \delta) ) /\epsilon^2$ rows and error probability $\delta$. Further, $S \cdot U$ can be computed in time $O(\nnz(U) \poly\log ( d / (\epsilon \delta) ) /\epsilon)$. We call $S$ a $\textsc{SparseEmbedding}$ matrix. 
\end{theorem}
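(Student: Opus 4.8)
The plan is to reduce the subspace-embedding guarantee to a single operator-norm concentration statement and then establish that concentration by the moment (trace) method. First I would observe that it suffices to treat the case where $U \in \R^{n\times d}$ has orthonormal columns: replacing $U$ by any matrix with the same column span leaves $\mathrm{colspan}(U)$ unchanged, and a thin QR factorization produces such an orthonormal basis without affecting the (oblivious) sketch $S$. With $U^\top U = I_d$, writing an arbitrary $x \in \mathrm{colspan}(U)$ as $x = Uy$, the defining inequality $\|Sx\|_2^2 = (1\pm\eps)\|x\|_2^2$ becomes $y^\top(U^\top S^\top S U - I_d)y = \pm\,\eps\, \|y\|_2^2$ for all $y\in\R^d$. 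Hence the entire theorem is equivalent to the spectral bound
\[
\norm{U^\top S^\top S U - I_d}_2 \le \eps
\]
holding with probability at least $1-\delta$.

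Next I would fix the construction of $S$. Take $S\in\R^{s\times n}$ to be an OSNAP-type sparse matrix: each column has exactly $m$ nonzero entries, placed in independently hashed rows, each nonzero being a random sign $\pm 1/\sqrt{m}$, so that $\E[S^\top S]=I_n$ and therefore $\E[U^\top S^\top S U]=I_d$. The two free parameters are the number of rows $s$ and the column sparsity $m$; the target settings are $s = d\cdot\poly\log(d/(\eps\delta))/\eps^2$ and $m = \poly\log(d/(\eps\delta))/\eps$. To control $A := U^\top S^\top S U - I_d$, I would use the trace method: for an even integer $\ell$, since $A$ is symmetric we have $\norm{A}_2^\ell \le \Tr(A^\ell)$, so by Markov's inequality
\[
\Pr\left[\norm{A}_2 > \eps\right] \le \eps^{-\ell}\,\E[\Tr(A^\ell)].
\]

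The core of the argument is bounding $\E[\Tr(A^\ell)]$. Expanding $A^\ell$ and taking the expectation produces a sum indexed by closed walks over the $d$ subspace coordinates and the $n$ ambient coordinates, where the random signs and hashed locations only survive when indices coincide in matched pairs. Grouping the surviving terms by the combinatorial \emph{dot-product graph} they induce, and bounding the contribution of each isomorphism class as a function of $s$, $m$, $\ell$, and the orthonormality of $U$ (so that the squared row norms of $U$ sum to $d$), yields a bound of the form $\E[\Tr(A^\ell)] \le \delta\,\eps^\ell$ for the stated choices of $s$, $m$, and $\ell = \Theta(\log(1/\delta))$. This is exactly the delicate step: a naive count would force $m$ to grow like the CountSketch sparsity, whereas obtaining $m = \tilde O(1/\eps)$ requires the careful class-by-class accounting of \cite{nn13}, and I expect it to be the main obstacle.

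Finally, the running-time claim follows from the sparsity of $S$: computing $S\cdot U$ touches each nonzero of $U$ exactly $m$ times, giving $O(m\cdot\nnz(U)) = O(\nnz(U)\,\poly\log(d/(\eps\delta))/\eps)$. As a cleaner fallback for the concentration step I would keep in reserve either establishing the JL moment property for $S$ on a fixed vector and invoking the generic reduction from that property to approximate matrix multiplication and hence to subspace embedding, or replacing the trace method outright by a matrix-Chernoff / intrinsic-dimension Bernstein inequality applied to $S^\top S$ restricted to the subspace, trading the hand combinatorics for a black-box matrix concentration bound.
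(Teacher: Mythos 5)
This statement is not proved anywhere in the paper: Theorem~\ref{thm:sparse_embedding_subspace} is imported verbatim from \cite{nn13} and used as a black box, so there is no in-paper proof to compare your attempt against. Judged on its own terms, your outline does follow the route actually taken in \cite{nn13}: reduce to $U$ with orthonormal columns, observe that the subspace-embedding property is equivalent to $\norm{U^\top S^\top S U - I_d}_2 \le \eps$, instantiate $S$ as an OSNAP matrix with $s$ rows and $m$ nonzeros per column, and control the spectral norm by bounding $\E[\Tr(A^\ell)]$ for $A = U^\top S^\top S U - I_d$ via Markov. The running-time claim and the reserve option (JL moment property plus approximate matrix multiplication, or a matrix-Bernstein shortcut) are also accurate reflections of the known alternatives.

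That said, as a self-contained proof your write-up has a genuine gap exactly where you flag it: the bound $\E[\Tr(A^\ell)] \le \delta\,\eps^\ell$ for $s = d\cdot\poly\log(d/(\eps\delta))/\eps^2$, $m = \poly\log(d/(\eps\delta))/\eps$, and $\ell = \Theta(\log(d/\delta))$ is the entire content of the theorem, and you assert it rather than derive it. The expansion of $\Tr(A^\ell)$ into closed walks, the classification of surviving terms by dot-product graphs, and the per-isomorphism-class accounting that lets the column sparsity be $\tilde O(1/\eps)$ instead of $\Omega(d/\eps^2)$ occupy most of \cite{nn13}; a naive monomial count at this step really does fail, as you anticipate. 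Also note one small mismatch: to get a union over all $d$ directions of the subspace (rather than a fixed vector) with failure probability $\delta$, the moment order must be taken as $\ell = \Theta(\log(d/\delta))$, not merely $\Theta(\log(1/\delta))$. Since the paper itself treats this theorem as a citation, the appropriate resolution is either to cite \cite{nn13} for the trace bound explicitly or to carry out the graph-counting argument in full; as written, the proposal is a correct roadmap but not a proof.
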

There are also other subspace embedding matrices, e.g., \textsc{CountSketch}.
We provide additional definitions and examples in Section~\ref{sec:nural_subspace_embdding_defs}.
\begin{remark}
We remark that the subspace embedding in Definition~\ref{def:subspace} naturally extends to low dimensional manifolds. 
For example, for a $d$-dimensional Riemannian submanifold of $\R^{n}$ with volumn $V$ and geodesic covering regularity $R$ (see Definition 2.3 in \cite{bw09}), Theorem~\ref{thm:sparse_embedding_subspace} holds by replacing $d$ with $Rd\log(V)$.
For ease of presentation, we only present our results for subspaces. 
All our results can be extended to low-dimensional manifolds satisfying regularity conditions. 
\end{remark}

\subsection{One Hidden Layer - Part \RN{1}}\label{sec:one_hidden_1}

We consider one-hidden-layer neural nets in the form $\sum_i v_i \phi_i(w_i^\top x)$, where $x$ is the input vector, $w_i$ is a weight vector, $v_i$ is a weight scalar, and $\phi_i: \R\to\R$ is a nonlinear activation function.
In this subsection, we show how to sketch the weights between the input layer and the hidden layer with guaranteed approximation power. The main result is Theorem~\ref{thm:subspace_one_hidden_I_formal} and its proofs are in Appendix~\ref{sec:app_subspace}.

\begin{theorem}
\label{thm:subspace_one_hidden_I_formal}
Given parameters $n_2 \geq 1, n_1\geq 1 ,\epsilon \in (0,1), \delta \in (0,1)$ and $n=\max(n_2,n_1)$.
 Given $n_2$ activation functions $\phi_i :\R \rightarrow \R$ that are $L$-Lipshitz-continuous,
 a fixed matrix $U \in \R^{n_1 \times d}$, weight matrix $W \in \R^{n_1 \times n_2}$ with $\| W \|_2 \leq B$, $v \in \R^{n_2}$ with $\| v\|_1 \leq B$. Choose a \textsc{SparseEmbedding} matrix $S \in \R^{s \times n_1}$ with $s = O( d L^2 B^4 A^2 \epsilon^{-2} \poly \log (n L B A / (\epsilon \delta) )  ) $, then with probability $1-\delta$, we have : for all $x \in \mathrm{colspan}(U) \cap {\cal B}_2(A, n_1)$, 
\begin{align*}
| \langle v, \phi(W^\top x) \rangle - \langle v, \phi(W^\top S^\top S x ) \rangle | \leq \epsilon.
\end{align*}
\end{theorem}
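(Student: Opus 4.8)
The plan is to reduce the claim to a coordinate-wise inner-product preservation statement that follows from subspace embedding, handling the nonlinearity by Lipschitz continuity. Writing $w_i$ for the $i$-th column of $W$, so that $w_i = We_i \in \R^{n_1}$, the network computes $\langle v, \phi(W^\top x)\rangle = \sum_{i=1}^{n_2} v_i \phi_i(w_i^\top x)$. First I would expand the difference and peel off the activations using $L$-Lipschitz continuity:
\begin{align*}
\left| \langle v, \phi(W^\top x)\rangle - \langle v, \phi(W^\top S^\top Sx)\rangle \right|
&\le \sum_{i=1}^{n_2} |v_i|\, |\phi_i(w_i^\top x) - \phi_i(w_i^\top S^\top Sx)| \\
&\le L \sum_{i=1}^{n_2} |v_i|\, |w_i^\top x - w_i^\top S^\top Sx| \\
&\le L\|v\|_1 \max_{i\in[n_2]} |\langle w_i, x\rangle - \langle Sw_i, Sx\rangle|,
\end{align*}
where the last step is H\"older's inequality with the $\ell_1/\ell_\infty$ pairing. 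Since $\|v\|_1 \le B$, it suffices to bound $|\langle w_i, x\rangle - \langle Sw_i, Sx\rangle|$ uniformly over $i$ and over all $x \in \mathrm{colspan}(U)\cap{\cal B}_2(A,n_1)$.

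The main obstacle is that $w_i$ need not lie in $\mathrm{colspan}(U)$, so the subspace embedding guarantee for $U$ does not directly control $\langle Sw_i, Sx\rangle$. I would resolve this by enlarging the subspace: for each $i$ set $\cU_i := \mathrm{colspan}(U) + \mathrm{span}(w_i)$, which has dimension at most $d+1$. Applying Theorem~\ref{thm:sparse_embedding_subspace} with error parameter $\epsilon'$ (to be chosen) and failure probability $\delta/n_2$ to the $(d+1)$-dimensional space $\cU_i$, and taking a union bound over $i\in[n_2]$, a single draw of the oblivious matrix $S$ is, with probability $1-\delta$, simultaneously a $(1\pm\epsilon')$ subspace embedding for every $\cU_i$. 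On this event, the inner-product form of Definition~\ref{def:subspace} applied to $\cU_i$ gives, for all $x\in\mathrm{colspan}(U)\subseteq\cU_i$ (and $w_i\in\cU_i$),
\begin{align*}
|\langle w_i, x\rangle - \langle Sw_i, Sx\rangle| \le \epsilon' \|w_i\|_2 \|x\|_2.
\end{align*}
Crucially this holds for \emph{all} such $x$ at once, which is exactly the advantage of subspace embedding over a Johnson--Lindenstrauss guarantee for a finite point set.

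To finish, I would bound $\|w_i\|_2 = \|We_i\|_2 \le \|W\|_2 \le B$ and $\|x\|_2 \le A$, so that $|\langle w_i, x\rangle - \langle Sw_i, Sx\rangle| \le \epsilon' B A$ for every $i$. Chaining this with the first display yields
\begin{align*}
\left| \langle v, \phi(W^\top x)\rangle - \langle v, \phi(W^\top S^\top Sx)\rangle \right| \le \epsilon' L B^2 A.
\end{align*}
Choosing $\epsilon' = \epsilon/(LB^2A)$ makes the right-hand side at most $\epsilon$, and substituting this $\epsilon'$ together with the per-subspace failure probability $\delta/n_2$ into the row count $s = O\big((d+1)\poly\log((d+1)/(\epsilon'\cdot\delta/n_2))/\epsilon'^2\big)$ of Theorem~\ref{thm:sparse_embedding_subspace} reproduces the stated $s = O(dL^2B^4A^2\epsilon^{-2}\poly\log(nLBA/(\epsilon\delta)))$, since $1/\epsilon'^2 = L^2B^4A^2/\epsilon^2$ and the extra $\log n_2$ is absorbed into the polylog factor (recall $n=\max(n_1,n_2)$). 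The only genuinely delicate point is the subspace-augmentation step; everything else is a routine combination of Lipschitz continuity, H\"older's inequality, and the norm bounds on $W$, $v$, and $x$.
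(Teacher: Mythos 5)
Your proposal is correct and follows essentially the same route as the paper's proof: peel off the activations via Lipschitz continuity, apply H\"older with the $\ell_1$ bound on $v$, and invoke the subspace embedding guarantee with $\epsilon' = \Theta(\epsilon/(LB^2A))$ to get the stated row count. The one place you are more careful than the paper is the augmentation $\cU_i = \mathrm{colspan}(U) + \mathrm{span}(w_i)$ together with a union bound over $i \in [n_2]$; the paper's proof asserts the corresponding guarantee (simultaneously for the $n_2$ fixed vectors $w_i$ and all $x$ in the column span) without spelling out this step.
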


\subsection{One-hidden layer - Part \RN{2}}\label{sec:one_hidden_2}
In this section, we show the approximation power of the compressed network if the weight matrices of both the input layer and output layer are sketched. One of the core idea in the proof is a recursive $\epsilon$-net argument, which plays a crucial role in extending the result to multiple hidden layer. The goal of this section is to prove the following theorem and present the recursive $\epsilon$-net argument.
\begin{theorem}\label{thm:subspace_one_hidden_II}
Given parameters $n_2 \geq 1, n_2\geq 1 ,\epsilon \in (0,1), \delta \in (0,1)$ and $n=\max(n_2,n_1)$.
 Given $n_2$ activation functions $\phi_i :\R \rightarrow \R$ with $L$-Lipshitz and normalized by $1/\sqrt{n_2}$, a fixed matrix $U \in \R^{n_1 \times d}$, and weight matrix $W \in \R^{n_2 \times n_1}$ with $\| W \|_2 \leq B$,
$v \in \R^{n_2}$ with $\| v\|_2 \leq B$. Choose a \textsc{SparseEmbedding} matrix $S_1 \in \R^{s_1 \times n_1}$ and $S_2 \in \R^{s_2 \times n_2}$ with $s_1,s_2 = O( d  L^2 B^4 A^2 \epsilon^{-2} \poly \log (n L B A / (\epsilon \delta) )  ) $, then with probability $1-\delta$, we have : for all $x \in \mathrm{colspan}(U) \cap {\cal B}_2(A, n_1)$, 
\begin{align*}
 \left| \langle v, \phi(W^\top x) \rangle - \langle S_2 v, S_2 \phi(W^\top S_1^\top S_1 x) \rangle \right|
\leq  \epsilon  .
\end{align*}
\end{theorem}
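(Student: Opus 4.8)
The plan is to split the target error into an \emph{input-sketch} contribution (the effect of replacing $W^\top x$ by $W^\top S_1^\top S_1 x$) and an \emph{output-sketch} contribution (the effect of replacing the inner product $\langle v,\cdot\rangle$ by $\langle S_2 v, S_2 \cdot\rangle$), and to bound each by $\epsilon/2$. Writing $a := \phi(W^\top x)$ for the exact hidden activation and $b := b(x) := \phi(W^\top S_1^\top S_1 x)$ for the activation after sketching the input, the triangle inequality gives
\begin{align*}
\left|\langle v, a\rangle - \langle S_2 v, S_2 b\rangle\right| \le \underbrace{\left|\langle v, a-b\rangle\right|}_{\text{input sketch}} + \underbrace{\left|\langle v,b\rangle - \langle S_2 v, S_2 b\rangle\right|}_{\text{output sketch}}.
\end{align*}
The first term is handled exactly by the argument behind Theorem~\ref{thm:subspace_one_hidden_I_formal}: coordinate-wise, $|\phi_i(w_i^\top x)-\phi_i(w_i^\top S_1^\top S_1 x)|\le L'\,|\langle w_i,x\rangle - \langle S_1 w_i, S_1 x\rangle|$ by Definition~\ref{def:lipshitz}, where $L'=L/\sqrt{n_2}$ absorbs the $1/\sqrt{n_2}$ normalization. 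Taking $S_1$ to be a $(1\pm\epsilon')$ subspace embedding of each $(d{+}1)$-dimensional space $\mathrm{colspan}(U)+\mathrm{span}(w_i)$ (union bounding over $i\in[n_2]$, which only costs a $\log n_2$ factor in $s_1$) bounds the inner-product distortion by $\epsilon'\|w_i\|_2\|x\|_2\le \epsilon' B A$. Summing and using $\|v\|_1\le\sqrt{n_2}\,\|v\|_2\le \sqrt{n_2}\,B$, the $\sqrt{n_2}$ cancels the $1/\sqrt{n_2}$ in $L'$, yielding an input-sketch error $\lesssim L\epsilon' A B^2$; choosing $\epsilon' \eqsim \epsilon/(L A B^2)$ gives exactly the stated $s_1$.

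The output-sketch term is the genuinely new part, and is where the recursive $\epsilon$-net argument enters. I would define the scalar function $g(x):=\langle S_2 v, S_2 b(x)\rangle - \langle v, b(x)\rangle = \langle (S_2^\top S_2 - I)v,\, b(x)\rangle$ and aim to show $|g(x)|\le \epsilon/2$ uniformly over $x\in\mathrm{colspan}(U)\cap{\cal B}_2(A,n_1)$. The difficulty is that $\{b(x)\}$ is the \emph{nonlinear} image of a $d$-dimensional ball under $\phi\circ (W^\top S_1^\top S_1\,\cdot)$, so it is not a subspace and Definition~\ref{def:subspace} cannot be applied to it directly. To get around this, I would build an $\epsilon_0$-net ${\cal N}$ of the domain with $|{\cal N}|\le (3A/\epsilon_0)^d$, and treat the net points and the gap between them separately. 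For each fixed net point $x_j$ the pair $(v, b(x_j))$ is deterministic, so the Johnson--Lindenstrauss inner-product-preservation property of $S_2$ gives $|g(x_j)|\le \epsilon''\|v\|_2\|b(x_j)\|_2$ with failure probability $\delta''$; a union bound with $\delta''=\delta/(2|{\cal N}|)$ makes $\log(1/\delta'')=O(d\log(A/\epsilon_0)+\log(1/\delta))$, so that $s_2=O(\epsilon''^{-2}(d\log(A/\epsilon_0)+\log(1/\delta)))$. The norm bound $\|b(x_j)\|_2\lesssim L B A$ again follows by writing the $i$-th coordinate as $\langle S_1 w_i, S_1 x_j\rangle$, bounding it by $(1+\epsilon')B A$ via the embedding of $\mathrm{colspan}(U)+\mathrm{span}(w_i)$, and using the $1/\sqrt{n_2}$ normalization to cancel the $\sqrt{n_2}$ from the $n_2$ coordinates; with $\epsilon''\eqsim \epsilon/(LAB^2)$ this matches the stated $s_2$.

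To pass from the net to all $x$, I would show $g$ is Lipschitz: $|g(x)-g(x')|\le \|(S_2^\top S_2 - I)v\|_2\cdot \|b(x)-b(x')\|_2 \le \Lambda\|x-x'\|_2$, where $\|b(x)-b(x')\|_2\lesssim L B\|x-x'\|_2$ by the same coordinate-wise computation and $\|(S_2^\top S_2 - I)v\|_2 \le (\|S_2\|_2^2+1)\|v\|_2$. Using the deterministic operator-norm bound $\|S_2\|_2\le\sqrt{n_2}$ (each column of a \textsc{SparseEmbedding} matrix has few nonzeros, so $\|S_2 z\|_2^2 \le \max_{\text{bucket}}(\text{load})\cdot\|z\|_2^2$), the Lipschitz constant $\Lambda$ is only $\poly(n_2, L, B)$. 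Choosing $\epsilon_0 \eqsim \epsilon/\Lambda$ then yields $|g(x)|\le |g(x_j)|+\Lambda\epsilon_0\le \epsilon/2$ for the nearest net point $x_j$. Crucially, $\epsilon_0$ enters $s_2$ only through $\log(A/\epsilon_0)=O(\poly\log(nLBA/\epsilon))$, so a polynomially large $\Lambda$ is affordable and the embedding dimension stays at the claimed order. Combining the two $\epsilon/2$ bounds completes the proof. \textbf{The main obstacle} is precisely the curved, non-subspace structure of the hidden activations $\{b(x)\}$, which defeats a direct subspace embedding; the net-plus-Lipschitz device resolves it, and because each layer feeds its (curved) output into the next, the same construction nests recursively and is what later drives the multi-layer extension.
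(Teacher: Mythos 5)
Your proposal is correct and follows essentially the same route as the paper's proof: the identical triangle-inequality split into an input-sketch term (handled exactly as in Theorem~\ref{thm:subspace_one_hidden_I_formal}) and an output-sketch term, with the latter controlled by a $2^{O(d\log(1/\gamma))}$-size net of the input ball, Johnson--Lindenstrauss inner-product preservation of $S_2$ at net points, and Lipschitz continuity of the sketched network. The only cosmetic difference is that the paper pushes the domain net forward to a net of the activation set $P_2$ (its ``recursive $\epsilon$-net'' claim) and then asserts the embedding over all of $P_2$, whereas you keep the net in the domain and extend via an explicit Lipschitz bound on $g(x)=\langle (S_2^\top S_2 - I)v,\,b(x)\rangle$ using $\|S_2\|_2\le\sqrt{n_2}$ --- your version spells out the net-to-continuum step that the paper leaves implicit.
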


The high level idea is as follows. 
Firstly, we prove that for any fixed input $x$, the theorem statement holds with high probability. 
Then we build an sufficiently fine $\epsilon$-net over the input space of $x$ and argue that our statement holds for every input point $x$ from the $\epsilon$-net. 
Condition on this event, the statement holds by applying the Lipshitz continuity of the activation function.
The detailed proof is presented in Appendix~\ref{sec:app_subspace}.

\subsection{Multiple hidden layer}\label{sec:multiple_hidden}
In this section, we generalize our approximation power result to a multi-layer neural network and delay the proofs to Appendix~\ref{sec:app_subspace}. Inspired by the batch normalization \cite{is15}, which has been widely used in practice\footnote{\url{https://www.tensorflow.org/api_docs/python/tf/nn/batch_normalization}}, we make an additional assumption by requiring the activations to be normalized by $1/\sqrt{n_{j+1}}$ at each layer $j$. The way we deal with multiple hidden layers is, first recursively argue an $\epsilon$-net can be constructed for all the layers with the same size.
Then we use triangle inequality to split error into $q+1$ terms and bounding them separately. The result is the following theorem. 
\begin{theorem}[]\label{thm:subspace_multiple_hidden_informal}
Given parameters $q\geq 1$, $n_j \geq 1,\forall j \in [q+1] ,\epsilon \in (0,1), \delta \in (0,1)$ and $n=\max_{j \in [q+1]} n_j$. For each $j \in [q]$, for each $i \in [n_{j+1}]$
let $\phi_{j,i} :\R \rightarrow \R$ denote an activation function  with $L$-Lipshitz and normalized by $1/\sqrt{n_{j+1}}$.
Given a fixed matrix $U \in \R^{n_1 \times d}$, $q$ weight matrices $W_j \in \R^{n_{j+1} \times n_j}, \forall j \in [q]$ with (the $i_j$-th column of $W_j$) $w_{j,i_j} \in {\cal B}_2(B,n_j)$, a weight vector $v\in \R^{n_{j+1}}$ with $v \in {\cal B}_{2}(B,n_{j+1})$. For each $j \in [q+1]$, we choose a \textsc{SparseEmbedding} matrix $S_j \in \R^{s_j \times n_j}$ with 
 \begin{align*}
 s_j= O( d q^2 L^{2q} B^{2q+2} A^2 \epsilon^{-2} \poly \log (n q L B A / (\epsilon \delta) )  ). 
 \end{align*}
Then with probability $1-\delta$, we have : for all $x \in \mathrm{colspan}(U) \cap {\cal B}_2(A, n_1)$, 
\begin{align*}
| \langle v, f^{(q)}(x) \rangle - \langle S_{q+1} v, S_{q+1} \widetilde{f}^{(q)}(x) \rangle | \leq  \epsilon,
\end{align*}
where $f^{(q)}(x)$ and $\wt{f}^{(q)}(x)$ are defined inductively. The base case is $f^{(0)}(x) =  \widetilde{f}^{(0)}(x) = x,$ and the inductive case is
\begin{align*}
    f^{(j)}(x) &=  \phi_j(W_j^\top f^{(j-1)}(x) ), \forall j \in [q] \\
    \mathrm{~and~}
    \widetilde{f}^{(j)}(x) & =  \phi_j(W_j^\top S_j^\top S_j \widetilde{f}^{(j-1)}(x) ), \forall j \in [q].
\end{align*}
\end{theorem}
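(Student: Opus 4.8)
The plan is to prove Theorem~\ref{thm:subspace_multiple_hidden_informal} by combining the single-layer sketching guarantee of Theorem~\ref{thm:subspace_one_hidden_II} with a recursive $\epsilon$-net argument that propagates control of the layer outputs through depth. The central object to track is the deviation $\|f^{(j)}(x) - \wt{f}^{(j)}(x)\|_2$ at each layer. First I would fix a single input $x$ with $\|x\|_2 \le A$ and establish, via the subspace-embedding property $\|S_j y\|_2^2 = (1\pm\epsilon)\|y\|_2^2$, that for that fixed $x$ the per-layer substitution $W_j^\top y \mapsto W_j^\top S_j^\top S_j y$ perturbs the pre-activation by at most $O(\epsilon B \|y\|_2)$; the $L$-Lipschitz activations (normalized by $1/\sqrt{n_{j+1}}$) then convert this into a bound on $\|f^{(j)}(x)-\wt{f}^{(j)}(x)\|_2$ in terms of $\|f^{(j-1)}(x)-\wt{f}^{(j-1)}(x)\|_2$ plus a fresh $O(\epsilon)$ error. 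Unrolling this recursion across $q$ layers, the Lipschitz factor $L$ and weight bound $B$ compound multiplicatively, which is exactly where the $L^{2q} B^{2q+2}$ dependence in $s_j$ originates: to make the final accumulated error at most $\epsilon$, each layer's embedding error must be shrunk by the product of all downstream amplification factors.

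The key difficulty is that Theorem~\ref{thm:subspace_one_hidden_II} holds with high probability only for inputs drawn from a \emph{fixed} $d$-dimensional subspace, but at layer $j\ge 2$ the effective input $\wt{f}^{(j-1)}(x)$ depends on the random matrices $S_1,\dots,S_{j-1}$ and on $x$, so one cannot directly union-bound over the continuum of reachable inputs. To handle this I would build a sufficiently fine $\epsilon$-net $\N$ over $\mathrm{colspan}(U)\cap{\cal B}_2(A,n_1)$ and argue in two stages: (i) establish the per-input deviation bound simultaneously for every net point $x \in \N$ by a union bound, absorbing the $\log|\N| = O(d\log(A\cdot\mathrm{poly}))$ factor into the $\poly\log$ term of $s_j$; and (ii) extend from the net to all of $\mathrm{colspan}(U)\cap{\cal B}_2(A,n_1)$ using the Lipschitz continuity of the whole composed map, so that a nearby net point controls an arbitrary input. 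Crucially, I would verify that both the true network $x \mapsto \langle v, f^{(q)}(x)\rangle$ and the sketched network $x \mapsto \langle S_{q+1}v, S_{q+1}\wt{f}^{(q)}(x)\rangle$ are Lipschitz in $x$ with a modulus bounded by $\poly(n, L, B, A)$, since the final triangle-inequality step on the net relies on this to keep the required net resolution only $\poly\log$ in the relevant parameters.

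With these pieces in place, the final argument is a clean $q+1$-term telescoping. I would introduce the hybrid quantities obtained by sketching the first $j$ layers but leaving the remaining layers exact, write the total error $|\langle v, f^{(q)}(x)\rangle - \langle S_{q+1}v, S_{q+1}\wt{f}^{(q)}(x)\rangle|$ as a telescoping sum over these hybrids, and bound each of the $q+1$ consecutive differences separately by the single-layer guarantee applied at the appropriate depth, picking up the final $S_{q+1}$ sketch of the output vector $v$ (with $\|v\|_2\le B$) as the last term. Setting $s_j$ as stated forces each term to be $O(\epsilon/q)$, so the union over all layers and the $\N$-union both succeed with the claimed probability $1-\delta$.

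I expect the main obstacle to be controlling the \emph{compounding} of errors through the nonlinearities: because $\wt{f}^{(j-1)}(x)$ differs from $f^{(j-1)}(x)$, at layer $j$ the embedding property must be invoked on a vector that is itself only approximately equal to a net point and itself a random perturbation, so one must carefully separate the embedding error (from $S_j$) from the input-propagation error (inherited from earlier layers) and ensure the normalization $1/\sqrt{n_{j+1}}$ keeps the intermediate norms $\|f^{(j)}(x)\|_2$ bounded by roughly $B^j$ rather than blowing up with the layer widths. Getting this bookkeeping right is what justifies the exponential-in-$q$ but width-independent sketch sizes $s_j$.
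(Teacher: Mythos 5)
Your proposal follows essentially the same route as the paper: a telescoping decomposition into $q+1$ hybrid terms (exact layers applied on top of sketched prefixes), each bounded by the single-layer subspace-embedding guarantee with per-layer error amplified by the accumulated Lipschitz/norm factors $L^qB^{q+1}$, combined with a recursively constructed $\epsilon$-net (the paper builds the net at layer $j$ as the image of the layer-$(j-1)$ net under the sketched map, which is exactly your Lipschitz-extension argument phrased layer by layer). No substantive differences to report.
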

Note that similar results also hold for the case without using $S_{q+1}$. In other words, we only choose $q$ matrices for $q$ hidden layers.

\section{Recovery Guarantee}\label{sec:recovery}
In this section, we study the recovery guarantee of parameter-reduced neural nets. In particular, we study whether (stochastic) gradient descent can learn the true parameters in a one-hidden-layer HashedNets when starting from a sufficiently good initialization point, under appropriate assumptions.
We show that even under the special weight sharing structure depicted by the hash function, the resulting neural net still has sufficiently nice properties - namely, local strong convexity and smoothness around the minimizer.
Our proof technique is by reducing our case to that of the fully connected network studied in \cite{zsjbd17}.
After that, the recovery guarantee follows similarly. We present our result here and give the detailed proof in Appendix \ref{app:recovery}.

We consider the following regression problem : given a set of $m$ samples
$
S = \{ (x_1,y_1), (x_2,y_2), \cdots, (x_m,y_m) \} \subset \R^n \times \R.
$
Let ${\cal D}$ be an underlying distribution over $\R^n \times \R$ with parameters $w^* \in \R^B$
 and $v^* \in \R^k$, such that each sample $(x,y) \in S$ is sampled i.i.d. from this distribution, with
$
x \sim {\cal N}(0,I), y = \sum_{i=1}^k v_i^* \cdot \phi ( \sum_{j=1}^n w_{h(i,j)}^* \cdot x_j ).
$
Here $h : [k] \times [n] \rightarrow [B]$ is a random hash function drawn from a $t$-wise independent hash family ${\cal H}$, where $t = \Theta(\log(nk))$, and $\phi$ is an activation function.

Note that $w^*$ has a corresponding matrix $\hat W^* \in \R^{k\times n}$ defined as $\hat W^*_{ij} = w^*_{h(i, j)}$, which is the actual weight matrix in the HashedNets with a weight sharing structure. 


Our goal is to recover the ground-truth parameters $w^*, v^*$ given the sample $S$. Note that how to recover $v^*$ has been discussed in \cite{zsd17, zsjbd17}; their method also applies to our situation. Therefore we focus on recovering $w^*$ in this section, assuming $v^*$ is known.

For a given weight vector $w\in \R^B$,
we define its {\it expected risk} and {\it empirical risk}  as {\small
\begin{align*}
    F_{\cal D}(w) &= \frac{1}{2} \E_{(x,y) \sim {\cal D}} \left[ \left( \sum_{i=1}^k v_i^* \cdot \phi \left( \sum_{j=1}^n w_{h(i,j) } \cdot x_j \right) - y \right)^2 \right] ,  \\
    F_{S}(w) &= \frac{1}{2} \sum_{(x,y) \sim S} \left[ \left( \sum_{i=1}^k v_i^* \cdot \phi \left( \sum_{j=1}^n w_{h(i,j) } \cdot x_j \right) - y \right)^2 \right].
\end{align*}}

We first show a structural result for $t$-wise independent hash family, which says the pre-image of  each bucket is pretty balanced.
\begin{lemma}[Concentration of hashing buckets, part of Lemma~\ref{lem:concentration_of_hashing_buckets}]\label{lem:concentration_of_hashing_buckets_informal}
Given integers $N$ and $B \lesssim N / \log N$. Let $h : [N] \rightarrow [B]$ denote a $t$-wise independent hash function such that 
$
\Pr_{h \sim {\cal H} }[ h(i) = j] = 1/B, \forall i \in [N], \forall j \in [B].
$
Then, if $t = \Theta(\log N)$, with probability at least $1 - 1 / \poly(N)$, we have for all $j \in [B]$,
$
0.9 N/B \leq \sum_{i=1}^N {\bf 1}_{h(i) = j} \leq 1.1 N/B.
$
\end{lemma}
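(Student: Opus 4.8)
The plan is to reduce the statement to a tail bound for a single bucket and then take a union bound over all $B$ buckets, controlling each bucket's deviation by a high-moment (Markov) argument that uses only $t$-wise independence. First I would fix $j \in [B]$ and write the load of bucket $j$ as $X_j = \sum_{i=1}^N {\bf 1}_{h(i)=j}$. The marginal hypothesis $\Pr[h(i)=j]=1/B$ gives $\mu := \E[X_j] = N/B$, and the target event $0.9N/B \le X_j \le 1.1 N/B$ is exactly $|X_j-\mu|\le 0.1\mu$. Because we only assume $t$-wise (not full) independence, the usual exponential-moment Chernoff argument is unavailable, so instead I would bound one fixed even moment $\E[(X_j-\mu)^t]$ and apply Markov's inequality.

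For the moment bound, set $Y_i = {\bf 1}_{h(i)=j} - 1/B$, so the $Y_i$ are $t$-wise independent, mean zero, satisfy $|Y_i|\le 1$, and have $\E[Y_i^2] = \frac1B(1-\frac1B)\le 1/B$. Expanding $\E[(\sum_i Y_i)^t] = \sum_{i_1,\dots,i_t}\E[Y_{i_1}\cdots Y_{i_t}]$, any monomial in which some index appears exactly once factors through the at most $t$ distinct indices present (here is where $t$-wise independence is used) and vanishes because $\E[Y_i]=0$; hence only monomials whose distinct indices each appear at least twice survive. Counting these monomials and bounding each surviving factor by $\E[Y_i^2]\le 1/B$ yields the standard limited-independence estimate $\E[(X_j-\mu)^t]\le (C\,t\,(\mu+t))^{t/2}$ for an absolute constant $C$; equivalently one may directly invoke the Chernoff--Hoeffding bound for $t$-wise independent sums (Schmidt--Siegel--Srinivasan). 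Combined with $\Pr[|X_j-\mu|\ge 0.1\mu]\le \E[(X_j-\mu)^t]/(0.1\mu)^t$, this gives a per-bucket tail of order $(C' t/\mu)^{t/2}$ in the regime $t\lesssim \mu$.

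To finish, I would balance the parameters. The hypothesis $B\lesssim N/\log N$ (with a sufficiently small absolute constant) forces $\mu = N/B \gtrsim \log N$, so I can take $t=\Theta(\log N)$ even while keeping $t\le \mu/(2C')$; then the per-bucket failure probability is at most $(1/2)^{t/2}=N^{-\Omega(1)}$. Choosing the constant in $t=\Theta(\log N)$ (hence the small constant hidden in $B\lesssim N/\log N$) appropriately makes this exponent exceed the $\log B\le \log N$ lost to a union bound over all $B$ buckets, so the overall failure probability is at most $B\cdot N^{-\Omega(1)}\le 1/\poly(N)$, giving $0.9\mu\le X_j\le 1.1\mu$ simultaneously for every $j$.

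The main obstacle is the moment bound itself: making precise that $\E[(X_j-\mu)^t]$ is controlled using only order-$\le t$ joint moments, so that $t$-wise independence genuinely suffices, and tracking constants carefully enough that the favorable regime $t\lesssim \mu$ coexists with $t=\Theta(\log N)$ being large enough to defeat the union bound. This tension is precisely what pins down how small the absolute constant in $B\lesssim N/\log N$ must be taken.
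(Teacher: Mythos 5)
Your proposal is correct and follows essentially the same route as the paper: the paper also bounds each bucket's deviation via the moment-based tail inequality for $t$-wise independent sums of $[0,1]$-valued variables (it cites the Bellare--Rompel bound $\Pr[|X-\mu|>a] < 8((t\mu+t^2)/a^2)^{t/2}$, the same estimate you derive/invoke via Schmidt--Siegel--Srinivasan), sets $a=0.1N/B$, uses $N/B \gtrsim t = \Theta(\log N)$ to make the per-bucket failure probability $1/\poly(N)$, and finishes with a union bound over the $B \le N$ buckets. The only difference is that you sketch the proof of the moment bound rather than citing it, which is a matter of presentation, not substance.
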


The previous work \cite{zsjbd17} showed that a fully connected network whose ground-truth weight matrix $W^* \in \R^{k\times n}$ has rank $k$ has local strong convexity and smoothness around $W^*$ in its loss function (see their Lemma D.3).


Using Lemma~\ref{lem:concentration_of_hashing_buckets_informal} as our core reduction tool, we can reduce HashedNets to a fully connected net and obtain the following result:
\begin{theorem}[Local strong convexity and smoothness] \label{thm:hashnet-local-sc}
Suppose $\rank( \hat W^* ) = k$.
Then we have \begin{align*}
 0.5 ( kn / B ) \cdot A_{\min} \cdot I \preceq  \nabla^2 F_{\cal D}(w^*) \preceq 2 ( kn / B ) \cdot A_{\max} \cdot  I,
\end{align*}
where $A_{\max}$ and $A_{\min}$ are positive parameters that depend on $\hat W^*$ and the activation function $\phi$.
\end{theorem}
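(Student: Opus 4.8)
The plan is to reduce the HashedNets Hessian to the fully connected Hessian studied in \cite{zsjbd17} through a linear ``pooling'' map, and then to control the distortion this map introduces using the bucket-balance estimate of Lemma~\ref{lem:concentration_of_hashing_buckets_informal}.

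First I would exploit realizability. Since $y = \sum_i v_i^* \phi((\hat W^* x)_i)$, the residual $g(x) = \sum_i v_i^* \phi((\hat W x)_i) - y$ vanishes pointwise at $w = w^*$. Hence the usual second-order term of the form (residual)$\times$(second derivative) drops out of $\nabla^2 F_{\cal D}$, and the Hessian collapses to a Gram matrix of first derivatives, $\nabla^2 F_{\cal D}(w^*) = \E_x[\nabla_w g \, \nabla_w g^\top]$. The key structural observation is then that the map $w \mapsto \hat W$ with $\hat W_{ij} = w_{h(i,j)}$ is \emph{linear}. Writing $P \in \R^{B \times kn}$ for the pooling matrix with entries $P_{a,(i,j)} = \mathbf{1}_{h(i,j) = a}$, the chain rule gives $\nabla_w g = P\,\nabla_{\hat W} g$, viewing $\nabla_{\hat W} g$ as the vector in $\R^{kn}$ with entries $v_i^* \phi'((\hat W x)_i)\, x_j$. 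Because the map is linear there is no curvature correction, so
\[
\nabla^2 F_{\cal D}(w^*) = P \Big(\E_x[\nabla_{\hat W} g \, \nabla_{\hat W} g^\top]\Big) P^\top = P \, H_{\mathrm{full}} \, P^\top,
\]
where $H_{\mathrm{full}} = \nabla^2_{\hat W} F_{\mathrm{full}}(\hat W^*)$ is exactly the Hessian of the fully connected one-hidden-layer loss at the matrix $\hat W^*$. Since $\rank(\hat W^*) = k$, Lemma D.3 of \cite{zsjbd17} supplies two-sided eigenvalue bounds $A_{\min} I_{kn} \preceq H_{\mathrm{full}} \preceq A_{\max} I_{kn}$, with $A_{\min}, A_{\max} > 0$ depending only on $\hat W^*$ and $\phi$.

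It then remains to quantify how conjugation by $P$ rescales these bounds. A direct computation shows $P P^\top$ is \emph{diagonal}, with $(PP^\top)_{aa}$ equal to the size of bucket $a$, namely $|\{(i,j) : h(i,j) = a\}|$. Applying Lemma~\ref{lem:concentration_of_hashing_buckets_informal} to the hash function $h : [k] \times [n] \to [B]$ (so $N = kn$), with high probability every bucket has size in $[0.9\,kn/B,\ 1.1\,kn/B]$, hence $0.9 (kn/B)\, I_B \preceq PP^\top \preceq 1.1 (kn/B)\, I_B$. Since $A_{\min}, A_{\max} \ge 0$, sandwiching gives for every $u \in \R^B$ the chain $A_{\min}\, u^\top PP^\top u \le u^\top P H_{\mathrm{full}} P^\top u \le A_{\max}\, u^\top PP^\top u$, and combining with the bucket bounds yields $0.9 (kn/B) A_{\min} I \preceq \nabla^2 F_{\cal D}(w^*) \preceq 1.1 (kn/B) A_{\max} I$; the stated constants $0.5$ and $2$ then follow with room to spare.

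I expect the main difficulty to be bookkeeping rather than conceptual: carefully verifying the chain-rule identity $\nabla^2 F_{\cal D}(w^*) = P H_{\mathrm{full}} P^\top$ (in particular that the residual-weighted second-order term genuinely vanishes at $w^*$, so that no extra terms survive), and correctly importing the eigenvalue bounds of \cite{zsjbd17} with $A_{\min}, A_{\max}$ instantiated at $\hat W^*$. One should also confirm the hypothesis $B \lesssim kn/\log(kn)$ required for Lemma~\ref{lem:concentration_of_hashing_buckets_informal} to apply with $N = kn$, so that the diagonal of $PP^\top$ concentrates as claimed.
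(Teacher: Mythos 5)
Your proposal is correct and follows essentially the same route as the paper: the substitution $b_{i,j}=\sum_p a_p\mathbf{1}_{h(i,j)=p}$ in the paper's proof is exactly your pooling identity $\nabla^2 F_{\cal D}(w^*)=P\,H_{\mathrm{full}}\,P^\top$, the bucket-balance lemma controls $\|P^\top a\|_2^2=a^\top PP^\top a$ just as you describe, and the resulting extremal quadratic forms $A_{\min},A_{\max}$ are imported from \cite{zsjbd17} (Lemmas D.6 and D.7). The only cosmetic difference is that you kill the residual-weighted second-order term via realizability at $w^*$ while the paper invokes $\phi''=0$; both are valid here.
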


\begin{remark}
	A crucial assumption in Theorem~\ref{thm:hashnet-local-sc} is that the weight matrix $\hat W^*$ has rank $k$.
	In Section~\ref{sec:exp}, we use numerical experiment to verify this assumption in learned HashedNets.
\end{remark}

For the empirical risk $F_S$, we can show that its Hessian $\nabla^2 F_{S}(w^*)$ at the optimal point also satisfies similar properties given enough samples. See Theorem~\ref{thm:hashnet_strong_convex_S_formal} for details.


Using the tensor initialization method in \cite{zsjbd17}, we can find a point in the locally ``nice region'' around $w^*$, and
then we can show that gradient descent on the empirical risk function $F_S$ converges linearly to $w^*$.
The result is summarized as follows.
\begin{theorem} [Recovery guarantee]\label{thm:recovery_guarantee} 
There exist parameters $\gamma_1$ and $\gamma_2$ that depend on $\hat W^*$ and $\phi$ such that the following is true.
Let $w^c$ be any point satisfying 
$
\| w^c - w^* \|_2 \le \gamma_1 \|w^*\|_2,
$
and let $S$ denote a set of i.i.d. samples from the distribution ${\cal D}$. 
Define
$
m_0 = \Theta( \frac{kn}{B}A_{\min} ) \text{~and~} M_0 = \Theta(\frac{kn}{B} A_{\max})
$
where $A_{\max}$ and $A_{\min}$ are the same ones in Theorem~\ref{thm:hashnet-local-sc}. 
For any $t\geq 1$, if we choose
$
|S| \geq n \cdot \poly (\log n, t) \cdot k^2 \gamma_2 
$
and perform gradient descent with step size $1/M_0$ on $F_S$ and obtain the next iterate,
$
\wt{w} = w^c - \frac{1}{M_0} \nabla F_S(w^c),
$
then with probability at least $1-n^{-\Omega(t)}$, we have
$
\| \wt{w} - w^* \|_2^2 \leq ( 1 - m_0 / M_0 ) \| w^c - w^* \|_2^2.
$
\end{theorem}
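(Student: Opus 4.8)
The plan is to treat this as a textbook one-step linear-convergence bound for gradient descent on a locally strongly convex and smooth objective, where the two-sided Hessian control comes from the empirical analogue of Theorem~\ref{thm:hashnet-local-sc} (namely Theorem~\ref{thm:hashnet_strong_convex_S_formal}). The observation that makes everything clean is that we are in the \emph{realizable} (noiseless) setting: every label obeys $y = \sum_i v_i^* \phi(\sum_j w_{h(i,j)}^* x_j)$ exactly, so each squared residual vanishes at $w = w^*$ and hence $F_S(w^*) = 0$. Since $F_S$ is a sum of squares and therefore nonnegative, $w^*$ is a global minimizer of $F_S$ and $\nabla F_S(w^*) = 0$. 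This removes any statistical-error term from the gradient and reduces the whole argument to a deterministic contraction, carried out conditional on a single high-probability event that controls the empirical Hessian along the relevant segment.

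First I would invoke Theorem~\ref{thm:hashnet_strong_convex_S_formal} to obtain, with probability at least $1 - n^{-\Omega(t)}$ over the draw of $S$ and under the stated sample bound $|S| \ge n\poly(\log n, t)k^2\gamma_2$, the uniform estimate $m_0 \cdot I \preceq \nabla^2 F_S(w) \preceq M_0 \cdot I$ for all $w$ in the ball $\{w : \|w - w^*\|_2 \le \gamma_1\|w^*\|_2\}$. Because $w^c$ lies in this ball by hypothesis, the entire segment $\{w^* + \tau(w^c - w^*) : \tau\in[0,1]\}$ does as well. Conditioning on this event, the rest is deterministic: writing the gradient as a Hessian integral along the segment and using $\nabla F_S(w^*)=0$,
\begin{align*}
\nabla F_S(w^c) = \nabla F_S(w^c) - \nabla F_S(w^*) = H\,(w^c - w^*),
\end{align*}
where $H := \int_0^1 \nabla^2 F_S(w^* + \tau(w^c - w^*))\,d\tau$. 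Substituting into the update $\wt{w} = w^c - \frac{1}{M_0}\nabla F_S(w^c)$ gives
\begin{align*}
\wt{w} - w^* = \left(I - \tfrac{1}{M_0}H\right)(w^c - w^*).
\end{align*}
The averaged Hessian $H$ inherits $m_0 I \preceq H \preceq M_0 I$ from the segment-wise bound, so $0 \preceq I - \frac{1}{M_0}H \preceq (1 - m_0/M_0) I$ and thus $\|I - \frac{1}{M_0}H\|_2 \le 1 - m_0/M_0$. Taking norms and squaring yields $\|\wt{w} - w^*\|_2^2 \le (1 - m_0/M_0)^2\|w^c - w^*\|_2^2 \le (1 - m_0/M_0)\|w^c - w^*\|_2^2$, which is exactly the claimed contraction (in fact marginally stronger, since $0\le 1-m_0/M_0\le 1$).

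The main obstacle is the first step: upgrading the population Hessian bound of Theorem~\ref{thm:hashnet-local-sc}, which is asserted only at the single point $w^*$, to a \emph{uniform} empirical bound over the whole ball of radius $\gamma_1\|w^*\|_2$, and it is the interplay of two effects here that fixes $\gamma_1$, $\gamma_2$, and the sample size. A \emph{perturbation} argument must first show that $\nabla^2 F_{\cal D}(w)$ stays within a constant factor of $\nabla^2 F_{\cal D}(w^*)$ throughout the ball; this forces $\gamma_1$ to be small (depending on $\hat W^*$ and the smoothness of $\phi$) and is where the reduction to the fully connected network of \cite{zsjbd17} enters, with Lemma~\ref{lem:concentration_of_hashing_buckets_informal} ensuring each bucket receives $(1\pm 0.1)kn/B$ preimages so that their perturbation estimates transfer to the weight-shared Hessian up to the $kn/B$ factor visible in $m_0 = \Theta((kn/B)A_{\min})$ and $M_0 = \Theta((kn/B)A_{\max})$. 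A \emph{concentration} argument must then show $\nabla^2 F_S(w)$ is close to $\nabla^2 F_{\cal D}(w)$ uniformly over the ball, via a matrix-Bernstein bound on the sum of $|S|$ i.i.d.\ rank-structured terms together with an $\epsilon$-net over the ball; this is the step that dictates the sample requirement $|S| \ge n\poly(\log n, t)k^2\gamma_2$ and the failure probability $n^{-\Omega(t)}$. Once both are in place, the deterministic contraction above closes the proof.
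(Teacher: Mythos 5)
Your proposal is correct and follows essentially the same route as the paper, which itself only sketches this step by deferring to the linear-convergence argument of \cite{zsjbd17} after establishing the empirical Hessian bounds (Theorem~\ref{thm:hashnet_strong_convex_S_formal}): realizability gives $\nabla F_S(w^*)=0$, the gradient is written as an averaged Hessian times $w^c-w^*$, and the two-sided bound $m_0 I \preceq \nabla^2 F_S \preceq M_0 I$ on the ball yields the contraction. You also correctly identify the only real technical content — upgrading the point-wise population bound to a uniform empirical bound over the ball — which is exactly where the paper leans on the hashing-bucket concentration reduction and the perturbation/concentration machinery of \cite{zsjbd17}.
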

The above theorem states that once a constantly-accurate initialization point is specified, we can obtain a solution up to  precision $\exp(-\poly(n))$ in a polynomial number of gradient descent iterations.  This concludes the recovery guarantee. 
We give the formal statements and proofs in Section~\ref{app:recovery}. 

\section{Experiments}\label{sec:exp}
In this section, we perform some simple experiments on MNIST dataset to evaluate the performance of HashedNets, as well as empirically verify the full rank assumption (as in Theorem~\ref{thm:hashnet-local-sc}) on weight matrices in HashedNets.
Each image in MNIST dataset has a dimensionality of $28 \times 28$. 
The HashedNets in the experiment have single-hidden-layer, i.e., two fully connected layers. 
To validate the effectiveness of HashedNets, we construct two baselines. 
\begin{itemize}
    \item \textbf{SmallNets}. A single-hidden-layer network is constructed with the same amount of effective weights as that of HashedNets. 
        For example, for a HashedNets with $1000$ hidden units in the hidden layer with compression ratio 64, a corresponding SmallNets have $\lceil {\frac{1000}{64}} \rceil =16$ hidden units in the hidden layer. 
    \item \textbf{ThinNets}. A two-hidden-layer network is constructed with the same amount of effective weights as that of HashedNets. 
        By replacing the first fully connected layer in HashedNets with a thin hidden layer, a same amount of weights can be achieved. 
        For example, for a HashedNets with $1000$ hidden units in the hidden layer with compression ratio 64, a corresponding ThinNets have $\lceil {\frac{784 \times 1000}{(784+1000) \times 64}} \rceil =7$ hidden units for the first hidden layer and 1000 hidden units for the second hidden layer. 
\end{itemize}
The accuracy of HashedNets, SmallNets, and ThinNets is compared under various compression ratios. 

\begin{figure*}[t]
\centering
{\includegraphics[width=0.24\textwidth]{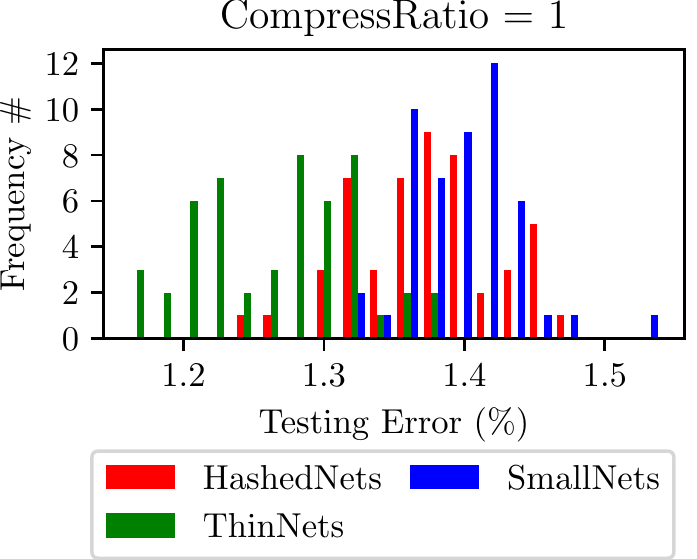}}\label{fig:testAccuracyHistogramCR1}
{\includegraphics[width=0.24\textwidth]{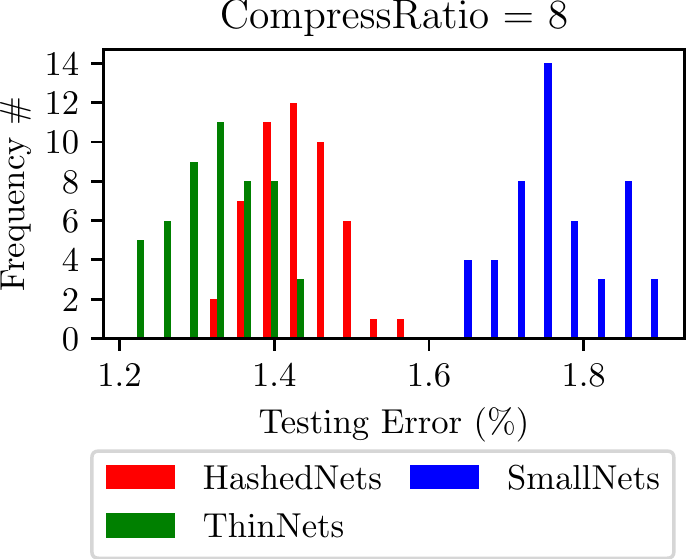}}\label{fig:testAccuracyHistogramCR8}
{\includegraphics[width=0.24\textwidth]{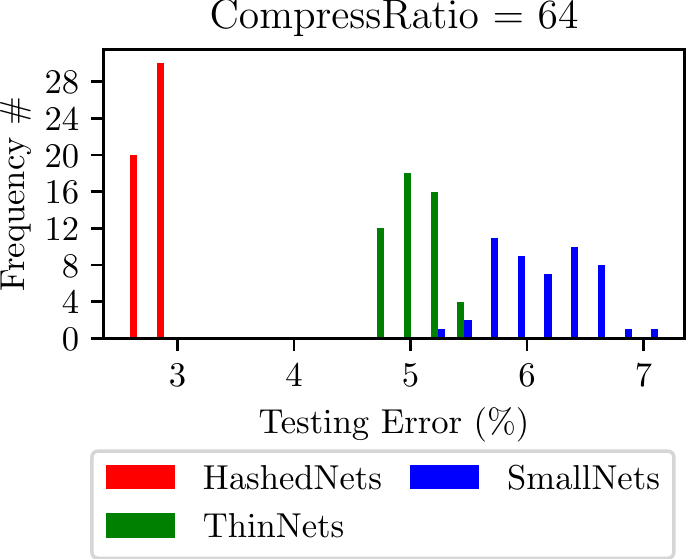}}\label{fig:testAccuracyHistogramCR64}
{\includegraphics[width=0.24\textwidth]{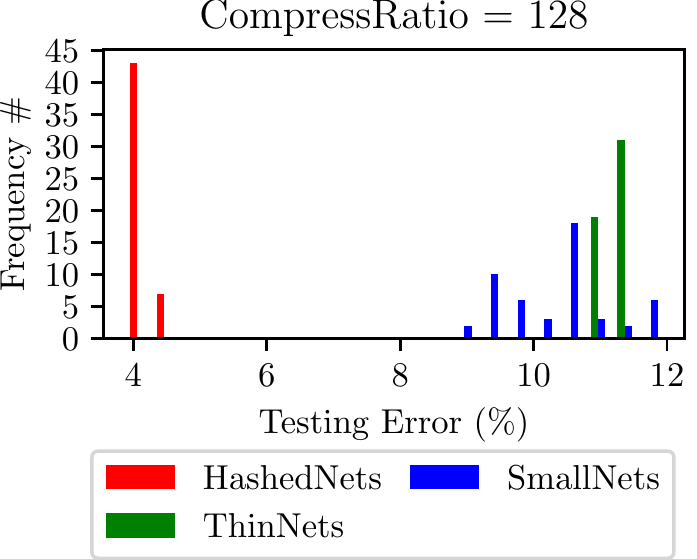}}\label{fig:testAccuracyHistogramCR128}
\caption{\small (a) Ratio$=1$; (b) ratio$=8$; (c) ratio$=64$; (d) ratio$=128$. We run two one-hidden layer algorithms on MNIST dataset. Comparison of accuracy distribution with different random seeds for HashedNets and SmallNets. Choose 50 random seeds in total. HashedNets have 1000 hidden units in this case. }
\label{fig:testAccuracyHistogram}
\end{figure*}

\begin{figure*}[t]
\centering
{\includegraphics[height=0.23\textwidth]{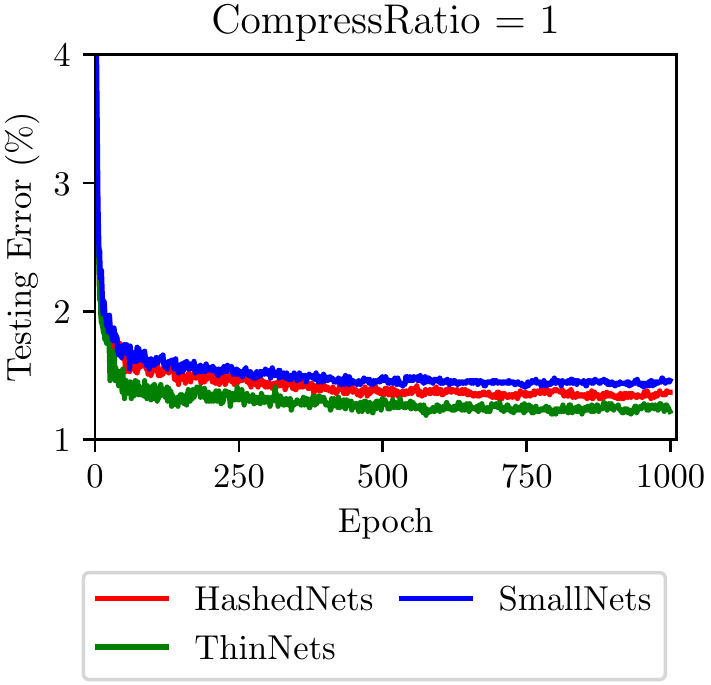}}\label{fig:testAccuracyCurveCR1}
{\includegraphics[height=0.23\textwidth]{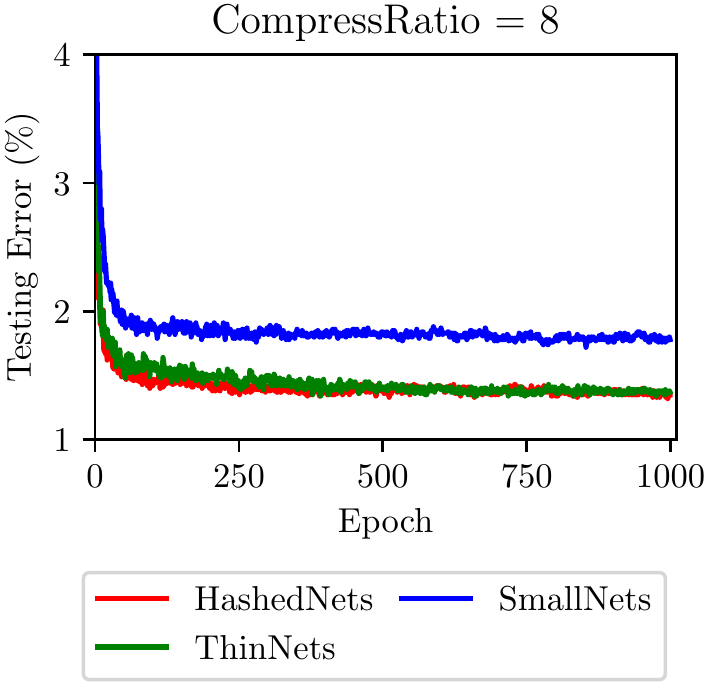}}\label{fig:testAccuracyCurveCR8}
{\includegraphics[height=0.23\textwidth]{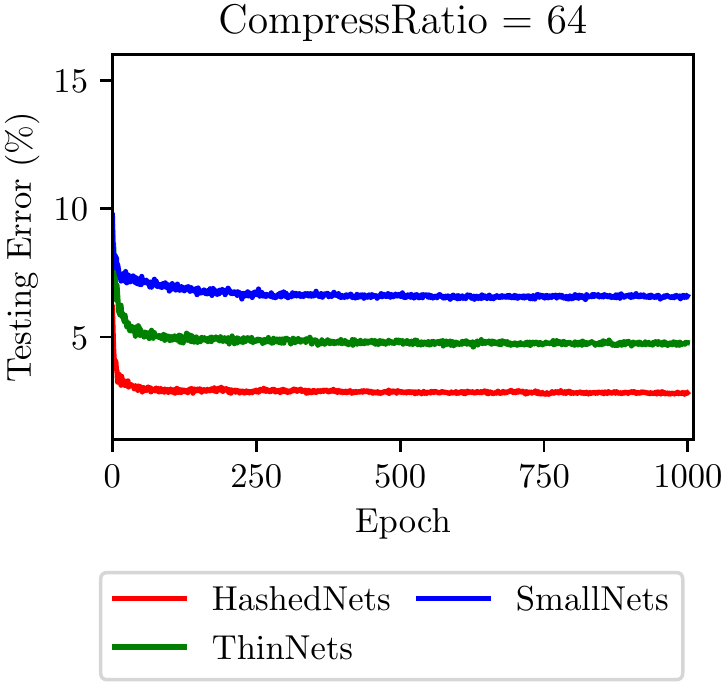}}\label{fig:testAccuracyCurveCR64}
{\includegraphics[height=0.23\textwidth]{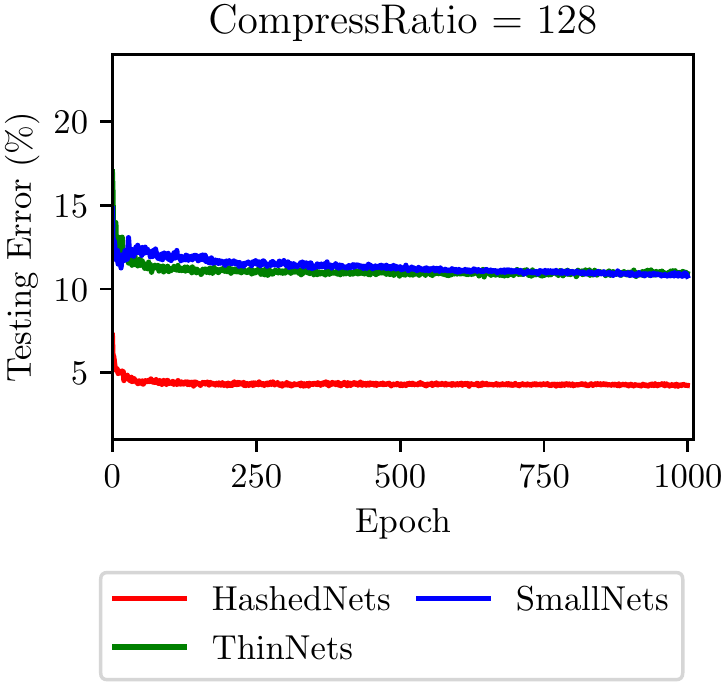}}\label{fig:testAccuracyCurveCR128}
\caption{\small (a) Ratio$=1$; (b) ratio$=8$; (c) ratio$=64$; (d) ratio$=128$. The testing error during training of different networks with random seed 100.}
\label{fig:testAccuracyCurve}
\end{figure*}




The HashedNets were implemented in Torch7 \cite{ckf11, cwtwc15} and validated on NVIDIA GTX 1080 GPU. 
We used 32 bit precision floating point numbers throughout the experiments. 
Stochastic gradient descent was adopted as the numerical optimizer with a dropout keep rate of 0.9, momentum of 0.9, and a batch size of 50. 
ReLU was used as the activation function. 
We ran 1000 epochs for each experiment and experiments on two single-hidden-layer HashedNets with 500 and 1000 hidden units are conducted, respectively. 
The amount of units in SmallNets and ThinNets is adjusted to match the amount of weights in HashedNets. 



\begin{figure*}[!t]
\centering
{\includegraphics[width=0.25\textwidth]{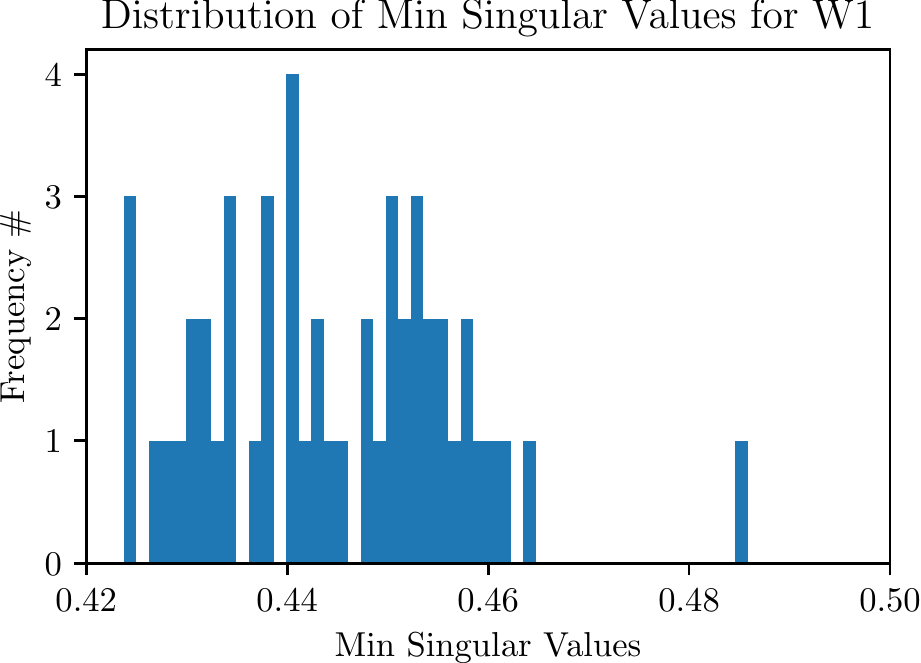}}\label{fig:Min_Singular_Values_for_W1_CR64}%
{\includegraphics[width=0.25\textwidth]{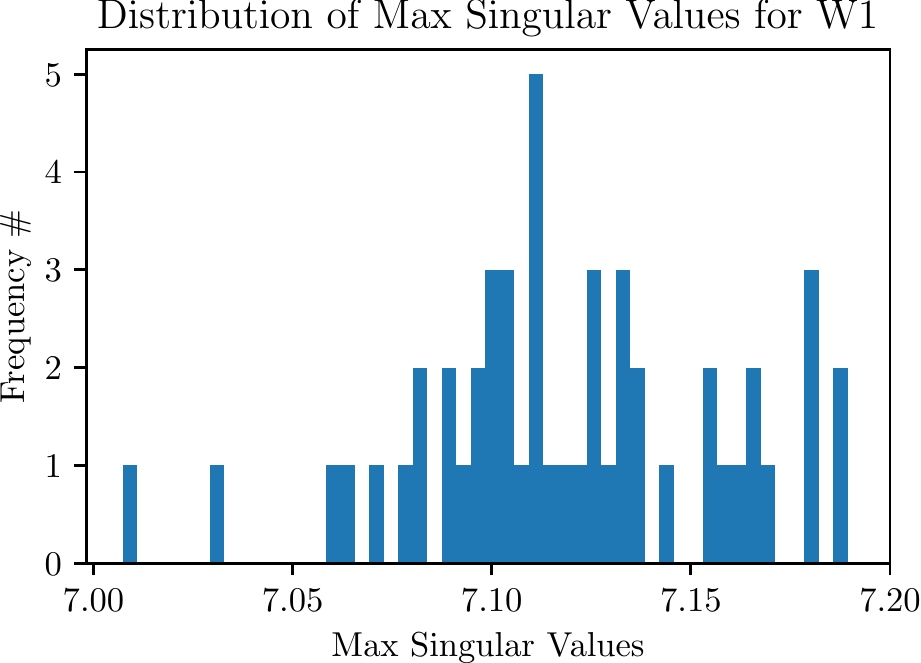}}\label{fig:Max_Singular_Values_for_W1_CR64}%
{\includegraphics[width=0.25\textwidth]{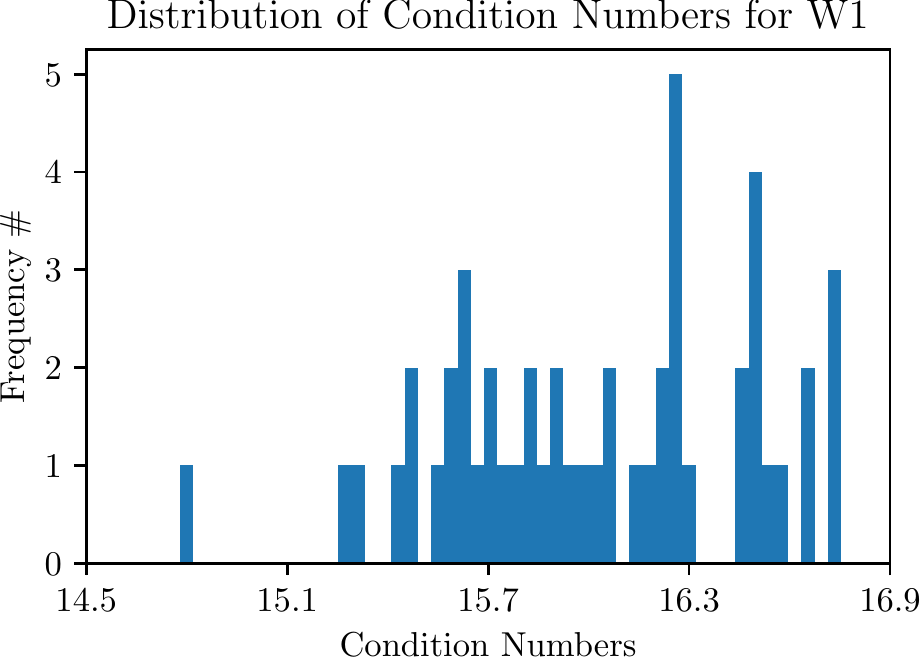}}\label{fig:Condition_Numbers_for_W1_CR64}%
{\includegraphics[width=0.25\textwidth]{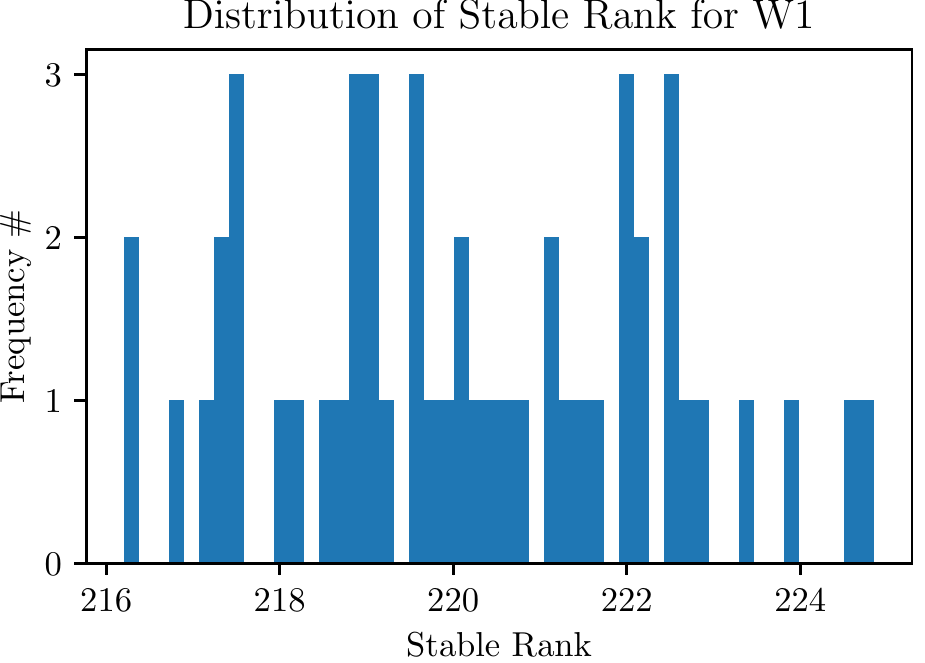}}\label{fig:Stable_Rank_for_W1_CR64}\\
\vspace{.1in}
{\includegraphics[width=0.25\textwidth]{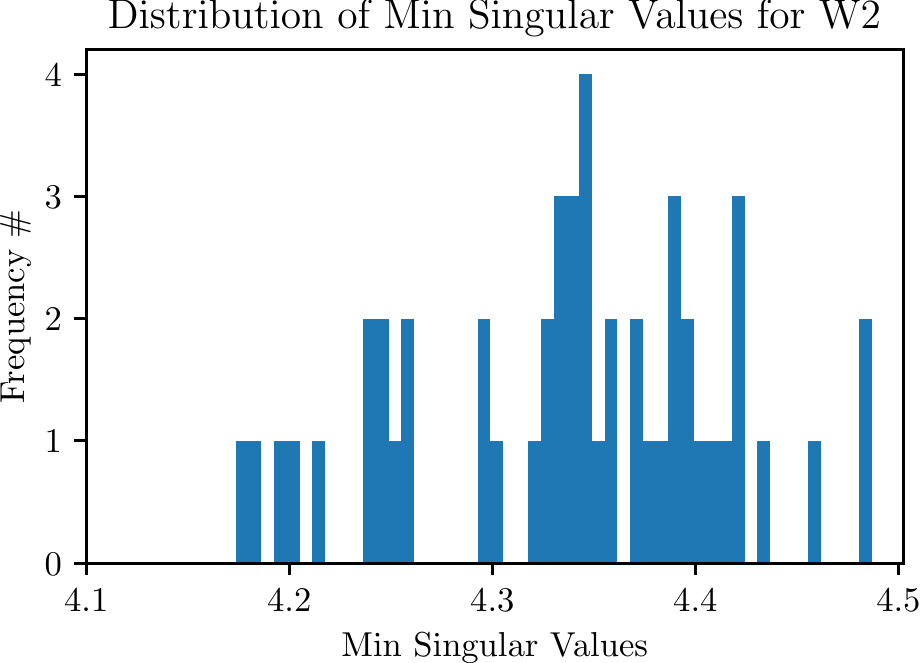}}\label{fig:Min_Singular_Values_for_W2_CR64}%
{\includegraphics[width=0.25\textwidth]{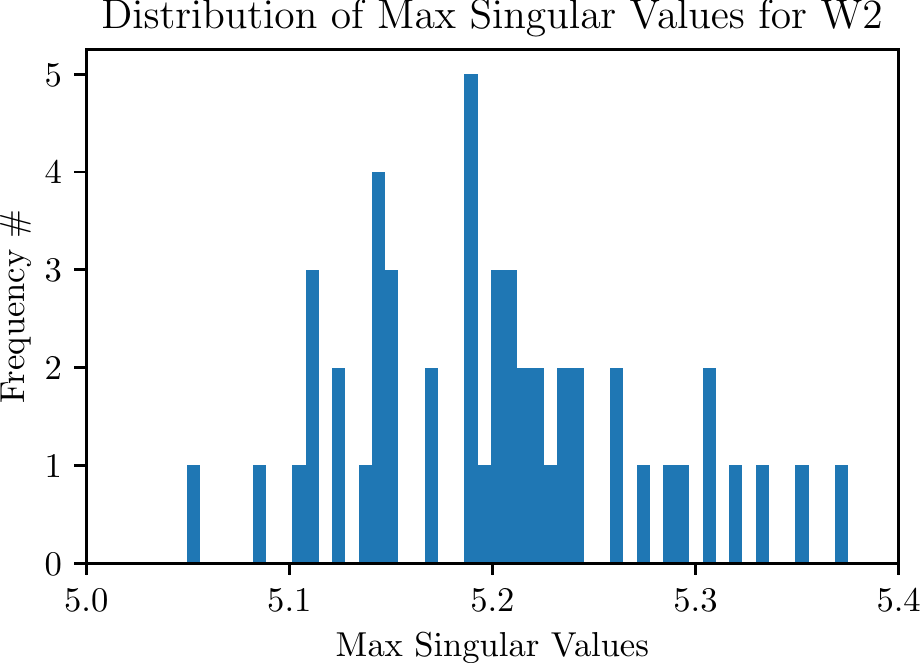}}\label{fig:Max_Singular_Values_for_W2_CR64}%
{\includegraphics[width=0.25\textwidth]{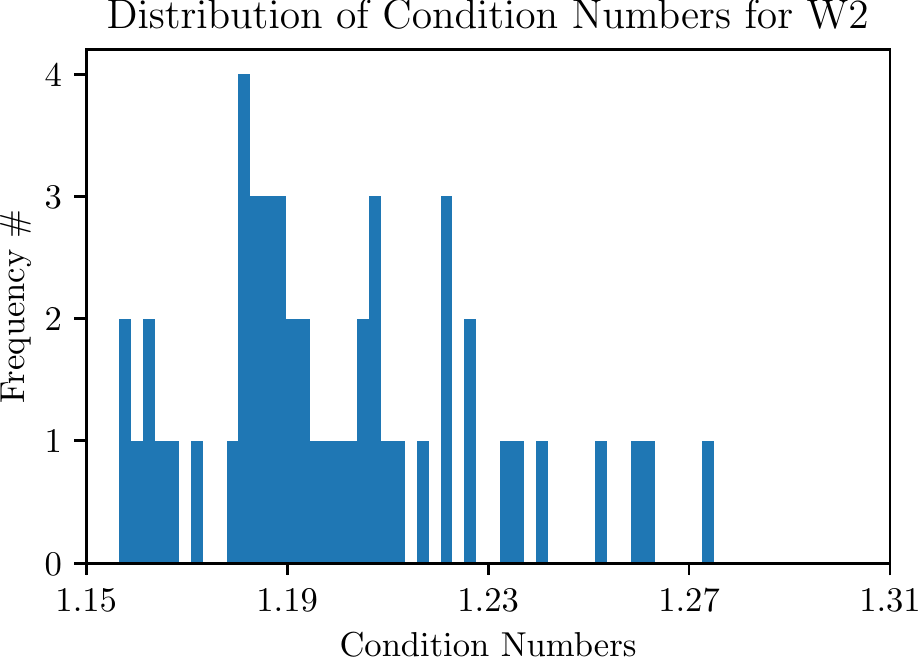}}\label{fig:Condition_Numbers_for_W2_CR64}%
{\includegraphics[width=0.25\textwidth]{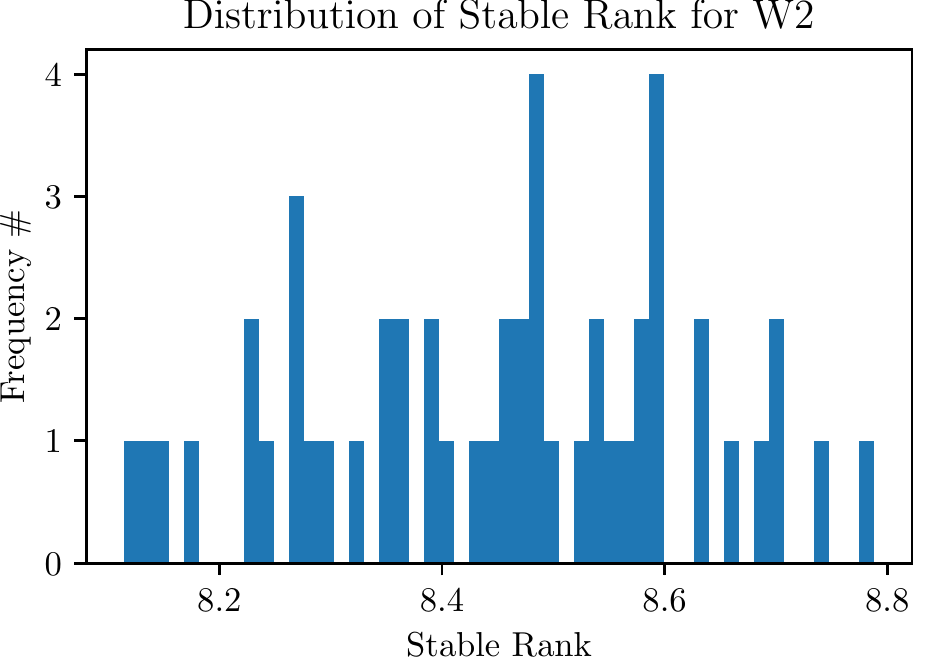}}\label{fig:Stable_Rank_for_W2_CR64}
\caption{\small Input dimension is 784. Distributions of singular values, condition numbers, and stable ranks for two weight matrices $W_1$ and $W_2$ in HashedNets with 1000 hidden units for 50 random seeds.}
\label{fig:ConditionNumbers}
\end{figure*}

For different compression ratios, we plot the distribution of testing errors for 50 runs of HashedNets, ThinNets, and SmallNets with 50 different random seeds, as shown in Figure~\ref{fig:testAccuracyHistogram}. 
Due to random initialization, SmallNets still gives different results with independent runs. 
In Figure~\ref{fig:testAccuracyHistogram}(a), when the compression ratio is 1, which indicating no compression, the distributions for both HashedNets and SmallNets are very close, 
i.e., with means of 1.37\% and 1.40\%, standard deviations of 0.050\% and 0.038\%, respectively. 
ThinNets provides slightly better testing error with a mean of 1.27\% and a standard deviation of 0.057\%. 
In Figure~\ref{fig:testAccuracyHistogram}(b), when the compression ratio is 8, HashedNets provides smaller testing errors than that of SmallNets, 
i.e., with means of 1.43\% v.s. 1.76\%, and standard deviations of 0.052\% v.s. 0.070\%, respectively. 
ThinNets provides slightly better testing errors than that of HashedNets, 
i.e., with means of 1.32\% v.s. 1.44\%, and standard deviations of 0.060\% v.s. 0.056\%. 
In other words, both HashedNets and ThinNets can achieve higher and more robust accuracy with improvements in both mean and standard deviation than SmallNets for this compression ratio. 
With the compression ratio increasing to 64, as shown in Figure~\ref{fig:testAccuracyHistogram}(c), the benefit of HashedNets is more significant. 
The mean of testing errors for HashedNets degrades to 2.80\%, while that for SmallNets increases to 6.09\%. 
The errors for SmallNets are more instable due to larger standard deviations. 
Meanwhile, HashedNets can also provide better accuracy than ThinNets, 
i.e., with means of 2.80\% v.s. 5.03\%, and standard deviations of 0.090\% v.s. 0.196\%. 
When the compression ratio is 128, as shown in Figure~\ref{fig:testAccuracyHistogram}(d), 
HashedNets achieves a mean accuracy of 4.20\% and a standard deviation of 0.116\%, 
which is much better than that of ThinNets, a mean accuracy of 11.09\% and a standard deviation of 0.160\%, 
and that of SmallNets, a mean accuracy of 10.28\% and a standard deviation of 0.810\%. 
In summary, from the aspect of accuracy degradation, when the compression ratio increases from 1 to 128, there is on average 2.83\% degradation in accuracy for HashedNets, 
while the accuracy of SmallNets degrades by 8.88\% and that of ThinNets degrades by 9.82\%. 
ThinNets may achieve comparable error to HashedNet for small compression ratios (e.g., 1 and 8), while for large compression ratio, HashedNet tends to be more stable.

Figure~\ref{fig:testAccuracyCurve} plots the training curves of different networks under different compression ratios  with random seed 100.  
The testing errors align with the observation from Figure~\ref{fig:testAccuracyHistogram}. 
That is, HashedNets provides high and stable accuracy across various compression ratios; 
ThinNets achieves good accuracy for small compression ratios (e.g., 1 and 8, the accuracy is close to that of HashedNets), 
while degrades significantly with the increase of compression ratios; 
SmallNets are also very sensitive to large compression ratios like ThinNets. 

We further verify the full rank assumption of weight matrices in HashedNets. 
Figure~\ref{fig:ConditionNumbers} plots the distributions of minimum and maximum singular values, condition numbers, and stable ranks of the two weight matrices $W_1$ and $W_2$ in HashedNets with 1000 hidden units.
The dimensions of $W_1$ is $1000 \times 784$ and that of $W_2$ is $10 \times 1000$. 
The distributions are extracted from the aforementioned 50 runs. 
Figure~\ref{fig:Singular_Values_for_W1_seed100_CR64} (in the Appendix) gives one example of all singular values sorted from large to small in one experiment. 
All the singular values and condition numbers are distributed in reasonable scales, i.e., neither too close to zero, nor too large.  
This experiment indicates that the assumption of full rank holds in practice. 
Same set of figures are also provided for HashedNets with 500 hidden units, as shown in Figure~\ref{fig:ConditionNumbers_nhu500} (in the Appendix), 
where the dimensions of $W_1$ is $500 \times 784$ and that of $W_2$ is $10 \times 500$. 

\section{Conclusion}
In this paper, we study the theoretical properties of hashing-based neural networks.
We show that (i) parameter-reduced neural nets have uniform approximation power on inputs from any low-dimensional subspace or smooth  and well-conditioned manifold; (ii) one-hidden-layer HashedNets have similar recovery guarantee to that of fully connected neural nets.
We also empirically explore an alternative compression scheme, ThinNets, which is a very interesting direction for further study, 
so we plan to explore its property and theoretical insights in the future.

\section*{Acknowledgments}

This project was done while Zhao was visiting Harvard and hosted by Prof. Jelani Nelson. In 2017, Prof. Jelani Nelson and Prof. Piotr Indyk co-taught a class ``Sketching Algorithms for Big Data''. One of the class project is, ``Understanding hashing trick of neural network.'' That is the initialization of this project. The authors would like to thank Zhixian Lei for inspiring us of this project. 

The authors would like to appreciate Wei Hu for his generous help and contribution to this project. 

The authors would like to thank Rasmus Kyng, Eric Price, Zhengyu Wang and Peilin Zhong for useful discussions.





%



\bibliographystyle{abbrvnat}
\bibliography{ref}

\appendix
\onecolumn
\section*{Appendix}
\section{Notation}

For any positive integer $n$, we use $[n]$ to denote the set $\{1,2,\cdots,n\}$.
For random variable $X$, let $\mathbb{E}[X]$ denote the expectation of $X$ (if this quantity exists).
For any vector $ x\in \mathbb{R}^n$, we use $\|x\|_2$ to denote its $\ell_2$ norm.

We provide several definitions related to matrix $A$.
Let $\det(A)$ denote the determinant of a square matrix $A$. Let $A^\top$ denote the transpose of $A$. Let $A^\dagger$ denote the Moore-Penrose pseudoinverse of $A$. Let $A^{-1}$ denote the inverse of a full rank square matrix. Let $\| A\|_F$ denote the Frobenius norm of matrix $A$. Let $\| A\|$ denote the spectral norm of matrix $A$. Let $\sigma_i(A)$ to denote the $i$-th largest singular value of $A$.

For any function $f$, we define $\widetilde{O}(f)$ to be $f\cdot \log^{O(1)}(f)$. In addition to $O(\cdot)$ notation, for two functions $f,g$, we use the shorthand $f\lesssim g$ (resp. $\gtrsim$) to indicate that $f\leq C g$ (resp. $\geq$) for an absolute constant $C$. We use $f\eqsim g$ to mean $cf\leq g\leq Cf$ for constants $c,C$.

We state a trivial fact that connects $\ell_2$ norm with $\ell_{\infty}$ norm.
\begin{fact}\label{fac:l2_sqrtn_linf}
For any vector $x \in \R^n$, we have $\| x \|_2 \leq \sqrt{n} \| x \|_{\infty}$.
\end{fact}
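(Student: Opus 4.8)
The plan is to bound the sum of squared coordinates directly by the maximal coordinate magnitude. First I would write out the $\ell_2$ norm in terms of coordinates, $\|x\|_2^2 = \sum_{i=1}^n x_i^2$, and observe that each term satisfies $x_i^2 \le \|x\|_\infty^2$ by the definition of the $\ell_\infty$ norm as $\|x\|_\infty = \max_{i \in [n]} |x_i|$. Summing this coordinate-wise bound over all $i \in [n]$ immediately yields $\|x\|_2^2 \le n \|x\|_\infty^2$.

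The remaining step is to take square roots of both sides. Since both $\|x\|_2$ and $\sqrt{n}\,\|x\|_\infty$ are nonnegative, the monotonicity of the square root function preserves the inequality, giving $\|x\|_2 \le \sqrt{n}\,\|x\|_\infty$, as claimed.

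There is no genuine obstacle here: the statement is a direct consequence of comparing each coordinate against the $\ell_\infty$ norm, and the only thing to be careful about is the nonnegativity needed to justify taking square roots, which holds automatically for norms. One could alternatively phrase the whole argument as a single chain,
\begin{align*}
\|x\|_2 = \left(\sum_{i=1}^n x_i^2\right)^{1/2} \le \left(\sum_{i=1}^n \|x\|_\infty^2\right)^{1/2} = \sqrt{n}\,\|x\|_\infty,
\end{align*}
which makes explicit that the inequality is tight precisely when every coordinate has the same magnitude.
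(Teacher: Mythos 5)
Your proof is correct and is exactly the standard coordinate-wise argument: bound each $x_i^2$ by $\|x\|_\infty^2$, sum, and take square roots. The paper states this fact without proof (treating it as trivial), and your argument is precisely the one it implicitly relies on, so there is nothing to compare beyond noting that you have filled in the omitted details correctly.
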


\section{Neural Subspace Embedding}\label{sec:app_subspace}

\subsection{Preliminaries}\label{sec:nural_subspace_embdding_defs}

\begin{definition}[Johnson Lindenstrauss Transform, \cite{jl84}]
A random matrix $S\in \R^{k\times n}$ forms a Johnson-Lindenstrauss transform with parameters $\epsilon,\delta,f$, or JLT($\epsilon,\delta,f$) for short, if with probability at least $1-\delta$, for any $f$-element subset $V \subset \R^n$, for all $v,v'\in V$ it holds that
	\begin{align*}
	| \langle S v , S v' \rangle - \langle v , v' \rangle | \leq \epsilon \| v \|_2 \| v' \|_2.
	\end{align*}
\end{definition}

The well-known \textsc{Count-Sketch} matrix from the data stream literature \cite{ccf02,tz12} is a sub-space embedding and JL matrix. The definition is provided as follows.
\begin{definition}[\textsc{Count-Sketch}]
Let $S$ denote a $s \times n$ matrix. We choose a random hash function $h : [n] \rightarrow [s]$, and choose a random hash function $\sigma : [n]\rightarrow \{-1,+1\}$. We set
\begin{align*}
	S_{j,i} =
	\begin{cases}
	\sigma(i) & \text{~if~} j = h(i), \\
	0 & \mathrm{~otherwise.}
	\end{cases}
\end{align*}
\end{definition}

\textsc{Count-Sketch} matrix gives the following subspace embedding result,
\begin{theorem}[\cite{cw13,nn13}]\label{thm:count_sketch_subspace}
For any $0 < \delta < 1$, and for $S$ a \textsc{Count-Sketch} matrix $s = O(d^2 /(\delta \epsilon^2))$ rows, then with probability $1-\delta$, for any fixed $n \times d$ matrix $U$, $S$ is a $(1\pm\epsilon)$ $\ell_2$-subspace embedding for $U$. The matrix product $S \cdot U$ can be computed in $O(\nnz(U))$ time. Further, all of this holds if the hash function $h$ defining $S$ is only pairwise independent, and the sign function $\sigma$ defining $S$ is only 4-wise independent.
\end{theorem}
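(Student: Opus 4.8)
The plan is to establish the subspace embedding property in its equivalent spectral form and then convert a second-moment estimate into a high-probability guarantee via Markov's inequality.

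First I would reduce to the case where $U$ has orthonormal columns. Since the embedding property concerns only $\mathrm{colspan}(U)$, and any matrix with the same column span yields an equivalent condition, I can replace $U$ by an orthonormal basis of its column space, so assume without loss of generality that $U^\top U = I_d$. Then $\|Ux\|_2 = \|x\|_2$ for every $x \in \R^d$, and the requirement $\|SUx\|_2^2 = (1\pm\epsilon)\|Ux\|_2^2$ for all $x$ becomes the single spectral-norm bound $\|U^\top S^\top S U - I_d\|_2 \le \epsilon$. Because $U^\top S^\top S U - I_d$ is symmetric, its spectral norm is dominated by its Frobenius norm, so it suffices to prove $\|U^\top S^\top S U - I_d\|_F \le \epsilon$ with probability $1-\delta$.

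The core step is the second-moment computation. Writing $(S^\top S)_{a,b} = \sigma(a)\sigma(b)\,\mathbf{1}_{h(a)=h(b)}$, the diagonal entries equal $1$, so the diagonal contribution cancels $I_d$ via $U^\top U = I_d$, and the $(k,l)$ entry of $Z := U^\top S^\top S U - I_d$ reduces to $Z_{k,l} = \sum_{a\ne b} U_{a,k} U_{b,l}\,\sigma(a)\sigma(b)\,\mathbf{1}_{h(a)=h(b)}$. I would then expand $\E[Z_{k,l}^2]$ as a double sum over pairs $(a,b)$ and $(c,d)$, separating the expectations over $\sigma$ and $h$, which are independent. This is exactly where the limited-independence hypotheses enter: $4$-wise independence of $\sigma$ forces $\E[\sigma(a)\sigma(b)\sigma(c)\sigma(d)]$ to vanish unless $\{a,b\}=\{c,d\}$, and pairwise independence of $h$ gives $\Pr[h(a)=h(b)] = 1/s$ on the surviving terms. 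Summing over $k,l$ and using $\|U\|_F^2 = d$ together with $\|UU^\top\|_F^2 = d$ (the latter from $U^\top U = I_d$), the two families of surviving terms collapse to $\E[\|Z\|_F^2] \le (d^2 + d)/s \le 2d^2/s$.

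Finally, Markov's inequality gives $\Pr[\|Z\|_F^2 \ge \epsilon^2] \le 2d^2/(s\epsilon^2)$, so choosing $s = O(d^2/(\delta\epsilon^2))$ makes this at most $\delta$, yielding the spectral bound and hence the subspace embedding property. The running-time claim follows from the sparsity of $S$: each nonzero entry $U_{a,k}$ contributes $\sigma(a) U_{a,k}$ to a single output entry in row $h(a)$, so $SU$ is assembled in $O(\nnz(U))$ time. I expect the main obstacle to be the bookkeeping in the second-moment expansion — precisely identifying which $(a,b,c,d)$ tuples survive after taking expectations and confirming that pairwise independence of $h$ and $4$-wise independence of $\sigma$ are exactly the amounts required, since the off-diagonal cross terms $\langle U_{a,:}, U_{b,:}\rangle$ are what force the $\|UU^\top\|_F^2 = d$ estimate rather than a cruder bound.
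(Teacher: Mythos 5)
The paper does not prove this statement; it is imported verbatim as a known result of \cite{cw13,nn13}, so there is no in-paper proof to compare against. Your argument is the standard moment-method proof of that result (reduce to orthonormal $U$, bound $\|U^\top S^\top S U - I_d\|_2$ by the Frobenius norm, compute $\E\|Z\|_F^2 \le (d^2+d)/s$ using $4$-wise independence of $\sigma$ and pairwise independence of $h$, and apply Markov), and it is correct, including the accounting of which tuples survive and the $O(\nnz(U))$ running time from the one-nonzero-per-column structure of $S$.
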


\begin{definition}[Oblivious Subspace Embedding(OSE), Definition 2.2 in \cite{w14}]
Suppose $\Pi$ is a distribution on $s \times n$ matrices $S$, where $s$ is a function of $n,d,\epsilon$ and $\delta$. Suppose that with probability at least $1-\delta$, for any fixed $n \times d$ matrix $U$, a matrix $S$ drawn from distribution $\Pi$ has the property that $S$ is a $(1\pm \epsilon)$ $\ell_2$-subspace embedding for $U$. Then we call $\Pi$ an $(\epsilon,\delta)$ oblivious $\ell_2$-subspace embedding.
\end{definition}

\cite{nn13} provides some other constructions for subspace embedding, 
\begin{definition}[\textsc{Sparse-Embedding}]
Let $S$ denote a $s \times n$ matrix. For each $i,j$, $S_{i,j} \in \{0, 1/\sqrt{t}, -1/\sqrt{t} \}$. For a random draw $S$, let $\delta_{i,j}$ be a indicator random variable for the event $S_{i,j} \neq 0$, and write $S_{i,j} = \delta_{i,j} \sigma_{i,j} / \sqrt{t}$, where the $\sigma_{i,j} $ are random signs. $S$ satisfies the following two properties, for each $j \in [n]$, $\sum_{i=1}^s \delta_{i,j} = t$; for any set $T \subseteq [s] \times [n]$, $\E[\prod_{(i,j) \in T} \delta_{i,j} \leq (t/m)^{|T|}]$.
\end{definition}

\begin{lemma}[Lemma 2.2 in \cite{w14}]\label{lem:net_lemma}
Let $A = \{ \R^n ~|~ y = U x, \forall x \in \R^d, \| y \|_2=1 \} $, for any $0 < \gamma < 1$, there exists a $\gamma$-net ${\cal N}$ of $A$ for which $|{\cal N}| \leq (1+ 4/\gamma)^d$.
\end{lemma}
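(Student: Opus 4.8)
The plan is to prove this classical covering‑number bound by a volumetric packing argument carried out inside the subspace $\mathrm{colspan}(U)$. First I would observe that $A$ is exactly the set of unit‑norm vectors in $\mathrm{colspan}(U)$, which is a linear subspace of $\R^n$ of dimension $r := \rank(U) \le d$. Fixing an orthonormal basis of this subspace identifies it isometrically with $\R^r$, under which $A$ becomes the Euclidean unit sphere in $\R^r$. All distances and volumes below are then computed within this intrinsic $r$‑dimensional space, not in the ambient $\R^n$.

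Next I would construct $\mathcal{N}$ greedily. Let $\mathcal{N} \subseteq A$ be a \emph{maximal} set of points that are pairwise at Euclidean distance strictly greater than $\gamma$; such a maximal set exists since $A$ is compact (or by Zorn's lemma). Maximality immediately gives that $\mathcal{N}$ is a $\gamma$-net: if some $y \in A$ had distance exceeding $\gamma$ from every point of $\mathcal{N}$, then $\mathcal{N} \cup \{y\}$ would still be $\gamma$-separated, contradicting maximality. Hence every $y \in A$ lies within distance $\gamma$ of some point of $\mathcal{N}$.

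The cardinality bound then follows from a packing estimate. Because the points of $\mathcal{N}$ are more than $\gamma$ apart, the open balls of radius $\gamma/2$ centered at the points of $\mathcal{N}$ are pairwise disjoint; and since each center has norm $1$, all of them are contained in the ball of radius $1 + \gamma/2$ about the origin. Comparing $r$-dimensional volumes and using that a radius-$\rho$ ball has volume proportional to $\rho^r$, I obtain
\begin{align*}
|\mathcal{N}| \cdot (\gamma/2)^r \le (1 + \gamma/2)^r,
\end{align*}
so that $|\mathcal{N}| \le (1 + 2/\gamma)^r \le (1 + 2/\gamma)^d \le (1 + 4/\gamma)^d$, where the last two inequalities use $r \le d$ and $\gamma > 0$.

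I do not expect a serious obstacle here, as this is a standard estimate. The only two points requiring care are: (i) computing the volume ratio in the \emph{intrinsic} dimension $r$ of the subspace rather than in the ambient $\R^n$, which is exactly what keeps the exponent equal to $r \le d$ instead of $n$; and (ii) the reduction to the possibly-smaller rank $r$, which lets the stated exponent $d$ absorb the true dimension. The stated constant $4$ is loose: the argument in fact yields the sharper $(1 + 2/\gamma)^d$, and the claimed bound follows since $1 + 2/\gamma \le 1 + 4/\gamma$.
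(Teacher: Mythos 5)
Your argument is correct. Note that the paper does not prove this lemma at all --- it is imported verbatim as Lemma~2.2 of \cite{w14} and used as a black box --- so there is no in-paper proof to compare against; your volumetric packing argument (maximal $\gamma$-separated set, disjoint balls of radius $\gamma/2$ inside the ball of radius $1+\gamma/2$, volume ratio computed in the intrinsic dimension $r=\rank(U)\le d$) is exactly the standard proof given in that reference, and it in fact yields the sharper bound $(1+2/\gamma)^d$, which implies the stated $(1+4/\gamma)^d$.
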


\subsection{Proof of Theorem~\ref{thm:subspace_one_hidden_I_formal}}

\begin{proof}
Using Theorem~\ref{thm:sparse_embedding_subspace}, we choose $s = d \epsilon'^{-2} \poly \log (n d / (\epsilon' \delta))$, with probability $1-\delta$, we have : for $n_2$ fixed vectors $w_1,\cdots,w_{n_2}$ and for all $x \in \text{colspan}(U)$, \vspace{-2mm}
\begin{align*}
| \langle S w_i, S x \rangle - \langle w_i, x \rangle | \leq \epsilon' \| w_i \|_2 \| x \|_2,  \forall i \in [n_1].
\end{align*}
Using the Lipschitz property of $\phi_i$, we can show that
\begin{align}\label{eq:one_hidden_I_phiSSx}
 | \phi_i(w_i^\top x) - \phi_i( \langle S w_i, S x \rangle ) | 
 \leq & ~ L | w_i^\top x - \langle S w_i, S x \rangle | \notag \\
 \leq & ~ L \epsilon' \| w_i \|_2 \| x \|_2 \notag \\
 \leq & ~ \epsilon' L B A, 
\end{align}
where the first step follows by Property of function $\phi_i$, the last step follows by $\| w_i \|_2 \leq B, \|x \|_2 \leq A$.
It remains to bound {
\begin{align*}
 \left| \sum_{i=1}^{n_2} v_i \phi_i( w_i^\top x ) - v_i \phi_i( w_i^\top S^\top S x ) \right|
\leq & ~ \sum_{i=1}^{n_2} |v_i| \cdot | \phi_i( w_i^\top x ) - \phi_i( w_i^\top S^\top S x ) | \\
\leq & ~ \| v \|_1 \max_{i \in [n_2] } | \phi_i( w_i^\top x ) - \phi_i( w_i^\top S^\top S x ) | \\
\leq & ~ \| v \|_1 \epsilon' L B A  \leq ~ \epsilon' L B^2 A \leq ~ \epsilon,
\end{align*}}
where the first step follows by triangle inequality, the third step follows by Eq.~\eqref{eq:one_hidden_I_phiSSx}, the fourth step follows by $\| v \|_1 \leq B$, and the last step follows from $\epsilon' = \Theta( \epsilon / (B^2 A L) )$. Therefore, it suffices to choose
\begin{align*}
s = d B^4 A^2 L^2 \epsilon^{-2} \poly\log (nBA L/(\epsilon \delta)).
\end{align*}
\end{proof}

\subsection{Proof of Theorem~\ref{thm:subspace_one_hidden_II}}
\begin{proof}
The proof includes three steps.
The first step is similar to the proof of Theorem~\ref{thm:subspace_one_hidden_I_formal}. 
Using Theorem~\ref{thm:sparse_embedding_subspace}, we choose $s_1 = d \epsilon_1^{-2} \poly \log (n d / (\epsilon_1 \delta))$, with probability $1-\delta/2$, we have : for $n_2$ fixed vectors $w_1,\cdots,w_{n_2}$ and for all $x \in \text{colspan}(U)$,
$
| \langle S_1 w_i, S_1 x \rangle - \langle w_i, x \rangle |\leq  \epsilon_1 \| w_i \|_2 \| x \|_2,  \forall i \in [n_2].
$
Next, we consider the column space of $U$, which we call $P_1$, defined as follows
\begin{align*}
P_1 = \{ y_1 \in \R^{n_1} ~|~ y_1 = U x, \| y_1 \|_2 \leq 1 , \forall x \in \R^d \}.
\end{align*}
Let ${\cal N}_1$ denote the $\gamma_1$-net of $P_1$, by Lemma~\ref{lem:net_lemma}, $|{\cal N}_1|= 2^{O(d \log ( 1/\gamma_1 ))}$.
By definition, for each $y_1 \in P_1$, there exists a vector $z \in {\cal N}_1$ such that
$
\| y_1 - z_1 \|_2 \leq \gamma_1.
$

Let $P_2$ and ${\cal N}_2$ be defined as follows,
\begin{align*}
P_2 = & ~ \{ y_2 \in \R^{n_2} ~|~ y_2 = \phi( W^\top S_1^\top S_1 y_1) , \forall y_1 \in P_1 \}. \\
{\cal N}_2 = & ~ \{ z_2 \in \R^{n_2} ~|~ z_2 = \phi( W^\top S_1^\top S_1 z_1 ) , \forall z_1 \in {\cal N}_1 \}.
\end{align*}
Then we want to show the following claim. (The proof can be found in Appendix~\ref{sec:app_one_hidden}) \vspace{-2mm}
\define{cla:subspace_one_hidden_II_recursive}{Claim}{{\rm(Recursive $\epsilon$-net){\bf.}}
Let $\gamma_2 = \gamma_1 / ( \sqrt{n_2} L (1+\epsilon_1) B)$, then
${\cal N}_2$ is an $\gamma_2$-net to $P_2$.
}
\state{cla:subspace_one_hidden_II_recursive}

\vspace{-2mm}
Now, we choose a sketching matrix $S_2 \in \R^{s_2 \times n_2}$ with $s_2 = d \epsilon_2^{-2} \poly(n d / (\epsilon_2 \delta))$, with probability $1-\delta/2$, we have : a vector $v\in \R^{n_2}$ and for all $y_1 \in P_2$,
\begin{align*}
| \langle S_2 v, S_2 y_2 \rangle - \langle v, y_2 \rangle | \leq \epsilon_2 \| v \|_2 \| y_2 \|_2 .
\end{align*}
Using triangle inequality, we can bound the error term,
\allowdisplaybreaks 
\begin{align*}
 | \langle v, \phi(W^\top x) \rangle - \langle S_2 v, S_2 \phi(W^\top S_1^\top S_1 x) \rangle | 
\leq & ~ | \langle v, \phi(W^\top x) \rangle - \langle v, \phi(W^\top S_1^\top S_1 x) \rangle | \\
 + & ~| \langle  v,  \phi(W^\top S_1^\top S_1 x) \rangle - \langle S_2 v, S_2 \phi(W^\top S_1^\top S_1 x) \rangle | \\
 \lesssim & ~ \epsilon_2 LB^2A + \epsilon_1 B^2 A L \leq ~ \epsilon,
 \end{align*}
 where the first step follows from triangle inequality, the second step follows from the Property of $S_2$, the third step follows from Claim~\ref{cla:subspace_one_hidden_II_1} and Claim~\ref{cla:subspace_one_hidden_II_2}, and the last step follows from $\epsilon_1, \epsilon_2 \lesssim \epsilon /( L B^2 A )$.
\end{proof}

We list the two Claims here and delay the proofs into Appendix~\ref{sec:app_one_hidden}.
\define{cla:subspace_one_hidden_II_1}{Claim}{
$| \langle  v,  \phi(W^\top S_1^\top S_1 x) \rangle - \langle S_2 v, S_2 \phi(W^\top S_1^\top S_1 x) \rangle | \lesssim \epsilon_2 L B^2 A.$
}
\state{cla:subspace_one_hidden_II_1}

\define{cla:subspace_one_hidden_II_2}{Claim}{
$
| \langle v, \phi(W^\top x) -\langle v, \phi(W^\top S_1^\top S_1 x) \rangle | \leq L B^2 A \epsilon_1.
$
}
\state{cla:subspace_one_hidden_II_2}

\subsection{One hidden layer}\label{sec:app_one_hidden}
In this Section, we provide the proofs of some Claims used for one hidden layer case.

\restate{cla:subspace_one_hidden_II_recursive}
\begin{proof}
For each point $y_2 \in P_2$, there must exists a point $y_1 \in P_1$ such that
$
y_2 = \phi (W^\top S_1^\top S_1 y_1).
$

Since $y_1 \in P_1$ and ${\cal N}_1$ is the $\epsilon_1$-net of $P_1$. Thus, there must exists a vector $z_1 \in {\cal N}_1$ such that
$
\| y_1 - z_1 \|_2 \leq \gamma_1.
$

According to the definition ${\cal N}_2$, there must exists a point $z_2 \in {\cal N}_2$ such that
$
z_2 = \phi(W_1^\top S_1^\top S_1 z_1).
$
Now, let's consider the $\| y_2-z_2\|_2$,
\allowdisplaybreaks {
\begin{align*}
 \| y_2-z_2\|_2 
\leq & ~ \sqrt{n_2} \| y_2 - z_2 \|_{\infty} \\
= & ~ \sqrt{n_2} \| \phi(W^\top S_1^\top S_1 y_1) - \phi(W^\top S_1^\top S_1 z_1) \|_{\infty} \\
= & ~ \sqrt{n_2} \max_{i \in [n_2]} | \phi_i (w_i^\top S_1^\top S_1 y_1) - \phi_i (w_i^\top S_1^\top S_1 z_1) | \\
\leq & ~ \sqrt{n_2} \max_{i \in [n_2]}  L | w_i^\top S_1^\top S_1 y_1 - w_i^\top S_1^\top S_1 z_1 | \\
= & ~ \sqrt{n_2} \max_{i \in [n_2]} L | \langle S_1 w_i, S_1 (y_1 - z_1) \rangle| \\
\leq & ~ \sqrt{n_2} \max_{i\in [n_2]} L (1+\epsilon_1) \| w_i \|_2 \| y_1 - z_1 \|_2 \\
\leq & ~ \sqrt{n_2} L (1+\epsilon_1) B \gamma_1 \leq  ~ \gamma_2,
\end{align*}}
where the first step follows from $\| \cdot \|_2 \leq \sqrt{\text{dim}} \cdot \| \cdot \|_{\infty}$, the second step follows from definition of $y_1$ and $z_1$, the third step follows from the definition of $\|\cdot\|_{\infty}$, the fourth step follows from the property of the activation function ($L$-Lipshitz), the sixth step follows from that $S_1$ provides a subspace embedding, and the last step follows from the definition of $\gamma_2$.
\end{proof}

\restate{cla:subspace_one_hidden_II_1}
\begin{proof}
\begin{align*}
 | \langle  v,  \phi(W^\top S_1^\top S_1 x) \rangle - \langle S_2 v, S_2 \phi(W^\top S_1^\top S_1 x) \rangle | 
\leq & ~ \epsilon_2 \| v\|_2  \cdot \| \phi(W^\top S_1^\top S_1 x) \|_2 \\
\leq & ~ \epsilon_2 \| v\|_2  \cdot L \max_{i\in [n_2]} | w_i^\top S_1^\top S_1 x | \\
\leq & ~ \epsilon_2 \| v\|_2 \cdot L \max_{i\in [n_2]} (1\pm\epsilon_1) \| w_i \|_2 \| x \|_2 \\
\lesssim & ~ \epsilon_2 L B^2 A,
\end{align*}
where the first step follows from that $S_2$ is a $(1\pm\epsilon_2)$ $\ell_2$ subspace embedding, the second step follows from the property of $\phi_i$ ($L$-lipshitz), the third step follows from that $S_1$ is $(1\pm\epsilon_1)$ $\ell_2$ subspace embedding, the fourth step follows from that $\| v \|_2, \| W \|_2 \leq B$ and $\| x \|_2 \leq A$.
\end{proof}

\restate{cla:subspace_one_hidden_II_2}

\begin{proof}
\begin{align*}
 | \langle v, \phi(W^\top x) -\langle v, \phi(W^\top S_1^\top S_1 x) \rangle | 
\leq & ~ \|v \|_2 \cdot \| \phi(W^\top x) -  \phi(W^\top S_1^\top S_1 x) \|_2 \\
\leq & ~ \| v\|_2 \cdot  L\max_{i\in [n_2]} | w_i^\top x -  w_i^\top S_1^\top S_1 x | \\
\leq & ~ \| v \|_2 L \epsilon_1 \max_{i\in [n_2]} \| w_i \|_2 \| x\|_2 \\
\leq & ~ L B^2 A \epsilon_1,
\end{align*}
where the first step follows from Cauchy-Shawrz inequality, the second step follow by property of activation function, the third step follows from $S_1$ is an $(1\pm\epsilon_1)$ subspace embedding, and last step follows from that $\| v \|_2, \|W \|_2 \leq B$, $\| x \|_2 \leq A$.
\end{proof}

\subsection{Two hidden layers}
The goal of section is to present Theorem~\ref{thm:subspace_two_hidden}.
\begin{theorem}[Neural oblivious subspace embedding two hidden layer neural networks]\label{thm:subspace_two_hidden}
Given parameters $n_3 \geq 1, n_2 \geq 1, n_2\geq 1 ,\epsilon \in (0,1), \delta \in (0,1)$ and $n=\max(n_3,n_2,n_1)$.
 Given $n_2$ activation functions $\phi_{1,i} :\R \rightarrow \R$ with $L$-Lipshitz and normalized by $1/\sqrt{n_2}$, $n_3$ activation functions $\phi_{2,i} : \R \rightarrow \R$ with $L$-Lipshitz and normalized by $1/\sqrt{n_3}$, a fixed matrix $U \in \R^{n_1 \times d}$, two weight matrices $W_1 \in \R^{n_2 \times n_1}$ and $W_2 \in \R^{n_3 \times n_2}$ with (the $i_1$-th column of $W_1$) $w_{1,i_1} \in {\cal B}_2(B,n_1)$ and (the $i_2$-th column of $W_2$) $w_{2,i_2} \in {\cal B}_2(B,n_2)$, a weight vector $v\in \R^{n_3}$ with $v \in {\cal B}_{2}(B,n_3)$. Choose a \textsc{SparseEmbedding} matrix $S_1 \in \R^{s_1 \times n_1}$, $S_2 \in \R^{s_2 \times n_2}$, $S_3 \in \R^{s_3 \times n_3}$ with 
 \begin{align*}
 s_1,s_2, s_3= O( d  L^4 B^6 A^2 \epsilon^{-2} \poly \log (n L B A / (\epsilon \delta) )  ) ,
 \end{align*}
 then with probability $1-\delta$, we have : for all $x \in \mathrm{colspan}(U) \cap {\cal B}_2(A, n_1)$, 
\begin{align*}
| \langle v, \phi_2 (W_2^\top \phi_1(W_1^\top x) ) \rangle - \langle S_3 v, S_3 \phi_2 (W_2^\top S_2^\top S_2  \phi_1 (W_1^\top S_1^\top S_1 x) ) \rangle | \leq  \epsilon  .
\end{align*}
\end{theorem}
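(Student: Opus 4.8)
The plan is to extend the two-step argument of Theorem~\ref{thm:subspace_one_hidden_II} by invoking the recursive $\epsilon$-net construction one additional time, once per hidden layer, and then to collect the total error into three contributions, one for each sketching matrix $S_1, S_2, S_3$. The structure mirrors the one-hidden-layer proof, but now the intermediate image set $P_2$ is itself the domain on which the next net and next embedding are built.

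First I would set up the nested nets. Write $P_1 = \{ Ux : \|Ux\|_2 \le 1\}$ for the normalized input subspace and take $\mathcal{N}_1$ to be the $\gamma_1$-net from Lemma~\ref{lem:net_lemma}, of size $2^{O(d\log(1/\gamma_1))}$. Choose $S_1$ as a \textsc{SparseEmbedding} matrix that is a $(1\pm\epsilon_1)$ embedding for $\mathrm{colspan}(U)$ together with the columns of $W_1$ (Theorem~\ref{thm:sparse_embedding_subspace}). Applying Claim~\ref{cla:subspace_one_hidden_II_recursive} with $\phi_1$ normalized by $1/\sqrt{n_2}$, the first-layer image $P_2 = \{ \phi_1(W_1^\top S_1^\top S_1 y_1) : y_1 \in P_1\}$ inherits a net $\mathcal{N}_2$ whose fineness $\gamma_2$ is $\gamma_1$ scaled by a factor $\sim L(1+\epsilon_1)B$ — the $\sqrt{n_2}$ blow-up from passing to the $\ell_\infty$ norm being exactly cancelled by the $1/\sqrt{n_2}$ normalization. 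I would then choose $S_2$ as an embedding valid on the span containing $\mathcal{N}_2$ and the columns of $W_2$, and repeat the recursive step once more: the second-layer image $P_3 = \{ \phi_2(W_2^\top S_2^\top S_2 y_2) : y_2 \in P_2\}$ inherits a net $\mathcal{N}_3$ of fineness $\gamma_3 \sim L^2(1+\epsilon_1)(1+\epsilon_2)B^2\gamma_1$, and finally $S_3$ is chosen as an embedding on $P_3$. Each net has size $2^{O(d\log(1/\gamma))}$, so a union bound keeps the total failure probability at $\delta$ while each $s_j$ pays only a $\poly\log$ dependence on $n$.

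Next I would bound the error pointwise. Fixing $x$, abbreviate $g = \phi_2(W_2^\top\phi_1(W_1^\top x))$, $\tilde g = \phi_2(W_2^\top S_2^\top S_2 \phi_1(W_1^\top S_1^\top S_1 x))$, and split by triangle inequality:
\begin{align*}
| \langle v, g\rangle - \langle S_3 v, S_3 \tilde g\rangle | \le | \langle v, g - \tilde g\rangle | + | \langle v, \tilde g\rangle - \langle S_3 v, S_3 \tilde g\rangle |.
\end{align*}
The second term is the output sketching error, controlled exactly as in Claim~\ref{cla:subspace_one_hidden_II_1} using that $S_3$ is a $(1\pm\epsilon_3)$ embedding and $\|\tilde g\|_2 \lesssim L^2 B^2 A$, giving a bound $\lesssim \epsilon_3 L^2 B^3 A$. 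For the first term I would introduce the hybrid $\hat g = \phi_2(W_2^\top \phi_1(W_1^\top S_1^\top S_1 x))$, which sketches only the first layer, and split $\|g - \tilde g\|_2 \le \|g - \hat g\|_2 + \|\hat g - \tilde g\|_2$. Writing $f_1 = \phi_1(W_1^\top x)$ and $\tilde f_1 = \phi_1(W_1^\top S_1^\top S_1 x)$, Lipschitzness of $\phi_1$ plus the $S_1$ embedding gives $\|f_1-\tilde f_1\|_2\lesssim \epsilon_1 LBA$; propagating this through $\phi_2$ and $W_2$ yields $\|g-\hat g\|_2\lesssim \epsilon_1 L^2 B^2 A$, while the layer-$2$ sketching error is $\|\hat g-\tilde g\|_2\lesssim \epsilon_2 L^2 B^2 A$. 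Multiplying by $\|v\|_2\le B$ and combining the three pieces gives a total of order $(\epsilon_1+\epsilon_2+\epsilon_3)L^2 B^3 A$; setting each $\epsilon_j = \Theta(\epsilon/(L^2 B^3 A))$ makes this at most $\epsilon$ and reproduces the stated row count $s_j = O(d L^4 B^6 A^2 \epsilon^{-2}\poly\log(\cdots))$.

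The main obstacle is making the embedding guarantees hold \emph{uniformly} over all $x$, since $\tilde f_1(x)$ and $\tilde g(x)$ sweep out nonlinear image sets rather than subspaces. The recursive net argument is what resolves this: the pointwise bound is established on each finite net $\mathcal{N}_j$ and then transferred to the entire image by Lipschitz continuity, with the fineness sharpened by a multiplicative $L(1+\epsilon)B$ factor at each layer. Tracking these compounding factors is the delicate part — it is precisely what turns the single-layer $L^2 B^4$ dependence into the $L^4 B^6$ dependence here, and more generally into the $L^{2q}B^{2q+2}$ of Theorem~\ref{thm:subspace_multiple_hidden_informal}. Care is also needed to confirm that the $1/\sqrt{n_{j+1}}$ normalization exactly cancels the $\sqrt{n_{j+1}}$ factor incurred when passing from $\ell_\infty$ to $\ell_2$ in the net estimates, so that the net sizes, and hence the $s_j$, stay free of any polynomial dependence on the layer widths.
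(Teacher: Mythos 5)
Your proposal is correct and follows essentially the same route as the paper: the recursive $\epsilon$-net construction over the layer images $P_1, P_2, P_3$ (with the $1/\sqrt{n_{j+1}}$ normalization cancelling the $\ell_\infty$-to-$\ell_2$ blow-up), followed by a three-term triangle-inequality decomposition through the hybrids $\phi_2(W_2^\top\phi_1(W_1^\top S_1^\top S_1 x))$ and $\phi_2(W_2^\top S_2^\top S_2\phi_1(W_1^\top S_1^\top S_1 x))$, each contributing $O(\epsilon_j L^2 B^3 A)$. Your choice $\epsilon_j=\Theta(\epsilon/(L^2B^3A))$ and the resulting $s_j=O(dL^4B^6A^2\epsilon^{-2}\poly\log(\cdot))$ match the paper's Lemma~\ref{lem:two_hidden_bounding_error} and Claims~\ref{cla:two_hidden_1}--\ref{cla:two_hidden_3} exactly.
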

\begin{proof}
It follows by combining Section~\ref{sec:two_hidden_recursive_net} and Lemma~\ref{lem:two_hidden_bounding_error}.
\end{proof}

\subsubsection{Recursive $\epsilon$-net argument}\label{sec:two_hidden_recursive_net}

Using Theorem~\ref{thm:sparse_embedding_subspace}, we choose $s_1 = d \epsilon_1^{-2} \poly \log (n d / (\epsilon_1 \delta))$, with probability $1-\delta/2$, we have : for $n_2$ fixed vectors $w_1,\cdots,w_{n_2}$ and for all $x \in \text{colspan}(U)$,
\begin{align*}
| \langle S_1 w_i, S_1 x \rangle - \langle w_i, x \rangle |\leq  \epsilon_1 \| w_i \|_2 \| x \|_2,  \forall i \in [n_2].
\end{align*} 

We define $P_1$ as follows
\begin{align*}
P_1 = \{ y_1 \in \R^{n_1} ~|~ y_1 = U x, \| y_1 \|_2 \leq 1 , \forall x \in \R^d \}. 
\end{align*}
Let ${\cal N}_1$ denote the $\gamma_1$-net of $P_1$, by Lemma~\ref{lem:net_lemma}, $|{\cal N}_1|= 2^{O(d \log ( 1/\gamma_1 ))}$.
By definition For each $y_1 \notin P_1$, there exists a vector $z \in {\cal N}_1$ such that
\begin{align*}
\| y_1 - z_1 \|_2 \leq \gamma_1. 
\end{align*}

Let $P_2$ be defined as follows,
\begin{align*}
P_2 = \{ y_2 \in \R^{n_2} ~|~ y_2 = \phi_1( W_1^\top S_1^\top S_1 y_1) , \forall y_1 \in P_1 \}.  
\end{align*}
We define ${\cal N}_2$ to be 
\begin{align*}
{\cal N}_2 = \{ z_2 \in \R^{n_2} ~|~ z_2 = \phi_1( W_1^\top S_1^\top S_1 z_1 ) , \forall z_1 \in {\cal N}_1 \}. 
\end{align*}
Then we want to show

\begin{claim}
Let $\gamma_2 = \gamma_1 / ( L (1+\epsilon_1) B)$, then
${\cal N}_2$ is an $\gamma_2$-net to $P_2$.
\end{claim}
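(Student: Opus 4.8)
The plan is to reproduce, essentially verbatim, the recursive $\epsilon$-net argument already used to establish the one-hidden-layer Claim~\ref{cla:subspace_one_hidden_II_recursive}; the only structural change is that here each activation $\phi_{1,i}$ carries the $1/\sqrt{n_2}$ normalization, so the dimension factor $\sqrt{n_2}$ produced by the $\ell_2$--$\ell_\infty$ comparison cancels and the net radius does not blow up with the width of the layer. I will take as given the subspace embedding guarantee for $S_1$ established at the start of Section~\ref{sec:two_hidden_recursive_net}, namely that $|\langle S_1 w_i, S_1 x\rangle - \langle w_i, x\rangle| \le \epsilon_1 \|w_i\|_2 \|x\|_2$ holds simultaneously for the fixed weight vectors $w_{1,i}$ and all $x \in \mathrm{colspan}(U)$.

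First I would fix an arbitrary $y_2 \in P_2$ and pull back to its defining preimage $y_1 \in P_1$ with $y_2 = \phi_1(W_1^\top S_1^\top S_1 y_1)$. Since ${\cal N}_1$ is a $\gamma_1$-net of $P_1$ (Lemma~\ref{lem:net_lemma}), I can choose $z_1 \in {\cal N}_1$ with $\|y_1 - z_1\|_2 \le \gamma_1$, and by the definition of ${\cal N}_2$ the point $z_2 = \phi_1(W_1^\top S_1^\top S_1 z_1)$ lies in ${\cal N}_2$. The remaining task is to bound $\|y_2 - z_2\|_2$ through the chain
\begin{align*}
\|y_2 - z_2\|_2 &\le \sqrt{n_2}\,\|y_2 - z_2\|_\infty = \sqrt{n_2}\max_{i\in[n_2]}\left|\phi_{1,i}(w_{1,i}^\top S_1^\top S_1 y_1) - \phi_{1,i}(w_{1,i}^\top S_1^\top S_1 z_1)\right| \\
&\le \sqrt{n_2}\cdot\frac{L}{\sqrt{n_2}}\max_{i\in[n_2]}\left|\langle S_1 w_{1,i}, S_1(y_1 - z_1)\rangle\right| \le L(1+\epsilon_1)\max_{i\in[n_2]}\|w_{1,i}\|_2\,\|y_1 - z_1\|_2 \le L(1+\epsilon_1)B\gamma_1,
\end{align*}
where the first step is Fact~\ref{fac:l2_sqrtn_linf}, the second uses the $(L/\sqrt{n_2})$-Lipschitz continuity of the normalized activation (the $\sqrt{n_2}$ cancelling), the third is the $(1\pm\epsilon_1)$ subspace embedding property of $S_1$ applied to $y_1 - z_1 \in \mathrm{colspan}(U)$ together with Cauchy--Schwarz, and the last uses $\|w_{1,i}\|_2 \le B$ and $\|y_1 - z_1\|_2 \le \gamma_1$. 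Identifying the covering radius $L(1+\epsilon_1)B\gamma_1$ with $\gamma_2$ then yields the claim (equivalently, one chooses $\gamma_1$ a factor $L(1+\epsilon_1)B$ smaller than the target radius $\gamma_2$, matching the scaling convention of Claim~\ref{cla:subspace_one_hidden_II_recursive}).

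I do not anticipate any genuine obstacle: every ingredient — the net from Lemma~\ref{lem:net_lemma}, the embedding guarantee for $S_1$, and the coordinatewise Lipschitz bound — is already in hand, and the derivation is a short sequence of elementary inequalities. The only point demanding care is the bookkeeping of the $1/\sqrt{n_2}$ normalization, since it is precisely this cancellation of $\sqrt{n_2}$ that keeps the net radius from degrading with the layer width. This is exactly the feature that makes the argument iterable layer by layer, which is why it is isolated as a standalone claim here before being chained together in the multi-layer analysis.
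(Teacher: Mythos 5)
Your proposal matches the paper's own proof essentially line for line: the same pull-back from $y_2\in P_2$ to a net point $z_1\in{\cal N}_1$, the same chain $\ell_2\le\sqrt{n_2}\,\ell_\infty$, Lipschitz-with-normalization (cancelling the $\sqrt{n_2}$), and the subspace-embedding bound on $\langle S_1 w_{1,i}, S_1(y_1-z_1)\rangle$, arriving at the same radius $L(1+\epsilon_1)B\gamma_1$. You also correctly flag (and resolve) the paper's inverted scaling convention in the stated definition of $\gamma_2$; no further changes are needed.
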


\begin{proof}
For each point $y_2 \in P_2$, there must exists a point $y_1 \in P_1$ such that
\begin{align*}
y_2 = \phi (W_1^\top S_1^\top S_1 y_1). 
\end{align*}
Since $y_1 \in P_1$ and ${\cal N}_1$ is the $\epsilon_1$-net of $P_1$. Thus, there must exists a vector $z_1 \in {\cal N}_1$ such that
\begin{align*}
\| y_1 - z_1 \|_2 \leq \gamma_1. 
\end{align*}
According to the definition ${\cal N}_2$, there must exists a point $z_2 \in {\cal N}_2$ such that
\begin{align*}
z_2 = \phi(W_1^\top S_1^\top S_1 z_1).
\end{align*}
Now, let's consider the $\| y_2-z_2\|_2$,
\allowdisplaybreaks 
\begin{align*}
 \| y_2-z_2\|_2
\leq & ~ \sqrt{n_2} \| y_2 - z_2 \|_{\infty} \\
= & ~ \sqrt{n_2} \| \phi_1 (W_1^\top S_1^\top S_1 y_1) - \phi_1 (W_1^\top S_1^\top S_1 z_1) \|_{\infty} \\
= & ~ \sqrt{n_2} \max_{i \in [n_2]} | \phi_1 (w_{1,i_1}^\top S_1^\top S_1 y_1) - \phi_1 (w_{1,i_1}^\top S_1^\top S_1 z_1) | \\
= & ~ \max_{i_1 \in [n_2]}  L | w_{1,i_1}^\top S_1^\top S_1 y_1 - w_{1,i_1}^\top S_1^\top S_1 z_1 | \\
= & ~  \max_{i_1 \in [n_2]} L | \langle S_1 w_{1,i_1}, S_1 (y_1 - z_1) \rangle| \\
\leq & ~  \max_{i_1 \in [n_2]} L (1+\epsilon_1) \| w_{1,i_1} \|_2 \| y_1 - z_1 \|_2 \\
\leq & ~  L (1+\epsilon_1) B \gamma_1 \\
\leq & ~ \gamma_2, 
\end{align*}
where the first step follows by $\| \cdot \|_2 \leq \sqrt{\text{dim}} \cdot \| \cdot \|_{\infty}$, the second step follows by definition of $y_1$ and $z_1$, the third step follows by definition, the fourth step follows by property of activation function ($L$-lipshitz and normalization), the sixth step follows by $S_1$ provides a subspace embedding, the last step follows by definition of $\gamma_2$.
\end{proof}

It is obvious that $| {\cal N}_2 | = | {\cal N}_1 |$.

Let $P_3$ be defined as follows,
\begin{align*}
P_3 = \{ y_3 \in \R^{n_3} ~|~ y_3 = \phi_2( W_2^\top S_2^\top S_2 y_2) , \forall y_2 \in P_2 \}. 
\end{align*}
We define ${\cal N}_2$ to be 
\begin{align*}
{\cal N}_3 = \{ z_3 \in \R^{n_3} ~|~ z_3 = \phi_2( W_2^\top S_2^\top S_2 z_2 ) , \forall z_2 \in {\cal N}_2 \}. 
\end{align*}
Then we want to show
\begin{claim}
Let $\gamma_3 = \gamma_2 / ( L (1+\epsilon_2) B )$, then
${\cal N}_3$ is an $\gamma_3$-net to $P_3$.
\end{claim}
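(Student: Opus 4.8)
The plan is to repeat, one layer higher, exactly the argument used to prove that ${\cal N}_2$ is a $\gamma_2$-net of $P_2$, with the roles of $(S_1,\phi_1,P_1,{\cal N}_1,P_2,{\cal N}_2)$ now played by $(S_2,\phi_2,P_2,{\cal N}_2,P_3,{\cal N}_3)$. First I would fix an arbitrary point $y_3\in P_3$ and unfold its definition: there is some $y_2\in P_2$ with $y_3=\phi_2(W_2^\top S_2^\top S_2 y_2)$. Invoking the previous claim, ${\cal N}_2$ is a $\gamma_2$-net of $P_2$, so I may choose $z_2\in{\cal N}_2$ with $\|y_2-z_2\|_2\le\gamma_2$; setting $z_3=\phi_2(W_2^\top S_2^\top S_2 z_2)\in{\cal N}_3$, the task reduces to bounding $\|y_3-z_3\|_2$ by $\gamma_3$. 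Since ${\cal N}_3$ is defined by pushing ${\cal N}_2$ through the same map, this also immediately gives $|{\cal N}_3|=|{\cal N}_2|$.

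Then I would chain the same four estimates as in the $P_2$ computation. First pass from $\ell_2$ to $\ell_\infty$ via $\|y_3-z_3\|_2\le\sqrt{n_3}\,\|y_3-z_3\|_\infty$. Next apply the Lipschitz-with-normalization property of each $\phi_{2,i_2}$: because $\phi_2$ is $L$-Lipschitz and normalized by $1/\sqrt{n_3}$, the factor $\sqrt{n_3}$ cancels and the bound becomes $\max_{i_2\in[n_3]}L\,|w_{2,i_2}^\top S_2^\top S_2(y_2-z_2)|=\max_{i_2\in[n_3]}L\,|\langle S_2 w_{2,i_2},S_2(y_2-z_2)\rangle|$. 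Then apply the subspace-embedding guarantee of $S_2$ together with Cauchy--Schwarz to get $|\langle S_2 w_{2,i_2},S_2(y_2-z_2)\rangle|\le(1+\epsilon_2)\|w_{2,i_2}\|_2\,\|y_2-z_2\|_2$. Finally use $\|w_{2,i_2}\|_2\le B$ and $\|y_2-z_2\|_2\le\gamma_2$ to collect the bound $L(1+\epsilon_2)B\gamma_2$, which by the definition $\gamma_3=\gamma_2/(L(1+\epsilon_2)B)$ closes the estimate $\|y_3-z_3\|_2\le\gamma_3$ in the same form as the preceding claim.

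The step I expect to be the main obstacle is the application of the subspace-embedding inequality of $S_2$ to the pair $(w_{2,i_2},\,y_2-z_2)$. The difficulty is that $y_2$ ranges over all of $P_2=\phi_1(W_1^\top S_1^\top S_1 P_1)$, which is the image of $P_1$ under a \emph{coordinatewise nonlinear} map $\phi_1$ and hence is \emph{not} contained in any fixed low-dimensional subspace; so the embedding property of $S_2$ cannot be invoked on $P_2$ verbatim. I would resolve this the way the recursive net construction is designed to support it: take $S_2$ to be a subspace (equivalently JL) embedding for the enlarged but still controlled-dimension subspace spanned by the $n_3$ weight columns $\{w_{2,i_2}\}$ together with the finite net ${\cal N}_2$ and the comparison points $y_2$ actually used, so that $\langle S_2 a,S_2 b\rangle=\langle a,b\rangle\pm\epsilon_2\|a\|_2\|b\|_2$ holds for every $(a,b)=(w_{2,i_2},y_2-z_2)$ that appears. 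This is precisely where the cardinality $|{\cal N}_2|=2^{O(d\log(1/\gamma))}$, and thus the dependence of $s_2$ on $d$, enters; the remaining steps are a routine transcription of the one-layer-lower proof.
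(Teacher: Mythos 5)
Your proof is essentially identical to the paper's: fix $y_3=\phi_2(W_2^\top S_2^\top S_2 y_2)$, choose $z_2\in\mathcal{N}_2$ with $\|y_2-z_2\|_2\le\gamma_2$, set $z_3=\phi_2(W_2^\top S_2^\top S_2 z_2)\in\mathcal{N}_3$, and chain the $\sqrt{n_3}\cdot\ell_\infty$ bound, the normalized $L$-Lipschitz property, the embedding guarantee of $S_2$ via Cauchy--Schwarz, and $\|w_{2,i_2}\|_2\le B$ to obtain $\|y_3-z_3\|_2\le L(1+\epsilon_2)B\gamma_2$. The subtlety you flag --- that $P_2$ is the image of $P_1$ under a coordinatewise nonlinearity, so the subspace-embedding property of $S_2$ cannot be invoked on the pair $(w_{2,i_2},\,y_2-z_2)$ verbatim and must instead be secured for the relevant finite collection of points --- is genuine and is glossed over in the paper's own proof (as is the direction of the final step: with $\gamma_3$ \emph{defined} as $\gamma_2/(L(1+\epsilon_2)B)$ the bound $L(1+\epsilon_2)B\gamma_2$ is larger than $\gamma_3$, so the recursion should really be read as $\gamma_2=\gamma_3/(L(1+\epsilon_2)B)$, i.e.\ the net coarsens going up the layers), but your argument matches the paper's.
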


\begin{proof}
For each point $y_3 \in P_3$, there must exists a point $y_2 \in P_2$ such that
\begin{align*}
y_3 = \phi (W_2^\top S_2^\top S_2 y_2). 
\end{align*}
Since $y_2 \in P_2$ and ${\cal N}_2$ is the $\epsilon_2$-net of $P_2$. Thus, there must exists a vector $z_2 \in {\cal N}_2$ such that
\begin{align*}
\| y_2 - z_2 \|_2 \leq \gamma_2. 
\end{align*}
According to the definition ${\cal N}_3$, there must exists a point $z_3 \in {\cal N}_3$ such that
\begin{align*}
z_3 = \phi(W_2^\top S_2^\top S_2 z_2).
\end{align*}
Now, let's consider the $\| y_3-z_3\|_2$,
\allowdisplaybreaks 
\begin{align*}
 \| y_3-z_3\|_2 
\leq & ~ \sqrt{n_3} \| y_3 - z_3 \|_{\infty} \\
= & ~ \sqrt{n_3} \| \phi_2 (W_2^\top S_2^\top S_2 y_2) - \phi_2 (W_2^\top S_2^\top S_2 z_2) \|_{\infty} \\
= & ~ \sqrt{n_3} \max_{i \in [n_3]} | \phi_2 (w_{2,i_2}^\top S_2^\top S_2 y_2) - \phi_1 (w_{2,i_2}^\top S_2^\top S_2 z_2) | \\
= & ~ \max_{i_2 \in [n_3]}  L | w_{2,i_2}^\top S_2^\top S_2 y_2 - w_{2,i_2}^\top S_2^\top S_2 z_2 | \\
= & ~  \max_{i_2 \in [n_3]} L | \langle S_2 w_{2,i_2}, S_2 (y_2 - z_2) \rangle| \\
\leq & ~  \max_{i_2 \in [n_3]} L (1+\epsilon_2) \| w_{2,i_2} \|_2 \| y_2 - z_2 \|_2 \\
\leq & ~  L (1+\epsilon_2) B \gamma_2 \\
\leq & ~ \gamma_3, 
\end{align*}
where the first step follows by $\| \cdot \|_2 \leq \sqrt{\text{dim}} \cdot \| \cdot \|_{\infty}$, the second step follows by definition of $y_2$ and $z_2$, the third step follows by definition, the fourth step follows by property of activation function ($L$-lipshitz and normalization), the sixth step follows by $S_2$ provides a subspace embedding, the last step follows by definition of $\gamma_3$.
\end{proof}

\subsubsection{Bounding the error}

\begin{lemma}[Bounding the error]\label{lem:two_hidden_bounding_error}
\begin{align*}
| \langle v, \phi_2 (W_2^\top \phi_1(W_1^\top x) ) \rangle - \langle S_3 v, S_3 \phi_2 (W_2^\top S_2^\top S_2  \phi_1 (W_1^\top S_1^\top S_1 x) ) \rangle | \leq 2(\epsilon_1 + \epsilon_2 + \epsilon_3 ) L^2 B^3 A.
\end{align*}
\end{lemma}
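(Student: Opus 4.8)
The plan is to bound the total error by a telescoping decomposition through a chain of intermediate networks, each differing from its neighbor by the insertion of a single sketch, so that every gap is controlled by exactly one of $\epsilon_1,\epsilon_2,\epsilon_3$. Concretely, I would introduce
\begin{align*}
T_0 &= \langle v, \phi_2(W_2^\top \phi_1(W_1^\top x)) \rangle, \\
T_1 &= \langle v, \phi_2(W_2^\top \phi_1(W_1^\top S_1^\top S_1 x)) \rangle, \\
T_2 &= \langle v, \phi_2(W_2^\top S_2^\top S_2 \phi_1(W_1^\top S_1^\top S_1 x)) \rangle, \\
T_3 &= \langle S_3 v, S_3 \phi_2(W_2^\top S_2^\top S_2 \phi_1(W_1^\top S_1^\top S_1 x)) \rangle,
\end{align*}
where $T_0$ is the original network and $T_3$ the fully sketched one, and apply the triangle inequality $|T_0 - T_3| \le |T_0 - T_1| + |T_1 - T_2| + |T_2 - T_3|$. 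The $S_1$ and $S_2$ sketches are thus turned on one at a time, and each gap is handled exactly as in the one-hidden-layer Claims~\ref{cla:subspace_one_hidden_II_1} and~\ref{cla:subspace_one_hidden_II_2}.

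For the output gap $|T_2 - T_3|$ I would invoke the $(1\pm\epsilon_3)$ subspace-embedding property of $S_3$ to bound $|T_2 - T_3| \le \epsilon_3 \|v\|_2 \|\phi_2(W_2^\top S_2^\top S_2 \phi_1(W_1^\top S_1^\top S_1 x))\|_2$, then control the output-activation norm layer by layer: using $\|\phi_2(z)\|_2 \le L\|z\|_\infty$ (the consequence of the $1/\sqrt{n_3}$ normalization that cancels the $\sqrt{n_3}$ from the dimension), together with the embedding bounds for $S_2$ and $S_1$ and the assumptions $\|v\|_2,\|w_{j,i}\|_2 \le B$, $\|x\|_2 \le A$, this yields $|T_2 - T_3| \lesssim \epsilon_3 L^2 B^3 A$. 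For the middle gap $|T_1 - T_2|$, Cauchy--Schwarz against $v$ followed by the $\|\cdot\|_\infty$-Lipschitz bound on $\phi_2$ reduces the difference to $\max_{i_2}|\langle w_{2,i_2}, \wt{f}^{(1)}\rangle - \langle S_2 w_{2,i_2}, S_2 \wt{f}^{(1)}\rangle|$ with $\wt{f}^{(1)} = \phi_1(W_1^\top S_1^\top S_1 x)$; the embedding property of $S_2$ together with $\|\wt{f}^{(1)}\|_2 \lesssim LBA$ gives $|T_1 - T_2| \lesssim \epsilon_2 L^2 B^3 A$.

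The only genuinely new term, and the one I expect to be the main obstacle, is $|T_0 - T_1|$, where the layer-$1$ sketching error must be propagated through \emph{both} activation layers. Here I would apply the $\|\cdot\|_\infty$-Lipschitz bound for $\phi_2$ first, then Cauchy--Schwarz against each column $w_{2,i_2}$, then the $\|\cdot\|_\infty$-Lipschitz bound for $\phi_1$, arriving at $\max_{i_1}|\langle w_{1,i_1}, x\rangle - \langle S_1 w_{1,i_1}, S_1 x\rangle| \le \epsilon_1 B A$ by the embedding property of $S_1$; the two Lipschitz passes and two uses of $\|w_{j,i}\|_2 \le B$ contribute the factor $L^2 B^2$, so $|T_0 - T_1| \lesssim \epsilon_1 L^2 B^3 A$. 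The delicate point throughout is to pass to the $\ell_\infty$ norm \emph{before} invoking Lipschitz continuity, since it is precisely the normalization $1/\sqrt{n_{j+1}}$ that cancels the dimension factor $\sqrt{n_{j+1}}$ and keeps the constants dimension-free; one must also remember that the embedding inequalities for $S_2$ and $S_3$ are applied to data-dependent vectors such as $\wt{f}^{(1)}$, which is legitimate because the recursive $\epsilon$-net argument of Section~\ref{sec:two_hidden_recursive_net} guarantees these embeddings hold uniformly over the relevant image sets $P_2$ and $P_3$. Summing the three bounds and absorbing the harmless $(1+\epsilon_j)$ factors into the constant gives $|T_0 - T_3| \le 2(\epsilon_1 + \epsilon_2 + \epsilon_3) L^2 B^3 A$, as claimed.
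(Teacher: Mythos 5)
Your proposal is correct and follows essentially the same route as the paper: the chain $T_0,T_1,T_2,T_3$ is exactly the paper's triangle-inequality decomposition, and your bounds on the three gaps correspond to Claims~\ref{cla:two_hidden_1}, \ref{cla:two_hidden_2}, and \ref{cla:two_hidden_3} (Cauchy--Schwarz against $v$, passing to $\|\cdot\|_\infty$ before invoking Lipschitz continuity so the normalization cancels the dimension, and applying the relevant subspace-embedding guarantee at each stage). The constants also match, with the paper likewise absorbing the $(1+\epsilon_j)$ factors into the leading factor of $2$.
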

\begin{proof}
\allowdisplaybreaks
\begin{align*}
& ~ | \langle v, \phi_2 (W_2^\top \phi_1 (W_1^\top x) ) \rangle - \langle S_3 v, S_3 \phi_2 (W_2^\top S_2^\top S_2  \phi_1 (W^\top S_1^\top S_1 x) ) \rangle | \\
\leq & ~ | \langle v, \phi_2 (W_2^\top \phi_1 (W_1^\top x) ) \rangle - \langle v, \phi_2 (W_2^\top \phi_1 (W_1^\top S_1^\top S_1 x) ) \rangle | \\
+ & ~ | \langle v, \phi_2 (W_2^\top \phi_1 (W_1^\top S_1^\top S_1 x) ) \rangle - \langle v, \phi_2 (W_2^\top S_2^\top S_2 \phi_1 (W_1^\top S_1^\top S_1 x) ) \rangle | \\
+ & ~ | \langle v, \phi_2 (W_2^\top S_2^\top S_2 \phi_1 (W_1^\top S_1^\top S_1 x) ) - \langle S_3 v, S_3 \phi_2 (W_2^\top S_2^\top S_2  \phi_1(W_1^\top S_1^\top S_1 x) ) \rangle | \\
\leq & ~ 2(\epsilon_1 + \epsilon_2 + \epsilon_3) L^2 B^3 A,
\end{align*}
where the first step follows by triangle inequality, and the last step follows by Claim~\ref{cla:two_hidden_1}, \ref{cla:two_hidden_2}, \ref{cla:two_hidden_3}.
\end{proof}

\begin{claim}\label{cla:two_hidden_1}
$| \langle v, \phi_2 (W_2^\top \phi_1 (W_1^\top x) ) \rangle - \langle v, \phi_2 (W_2^\top \phi_1 (W_1^\top S_1^\top S_1 x) ) \rangle | \leq \epsilon_1 L^2 B^3 A$.
\end{claim}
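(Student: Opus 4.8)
The plan is to bound the scalar gap by $\|v\|_2$ times the $\ell_2$ distance between the two hidden representations, and then to propagate that distance backward through the two layers, using only the Lipschitz property of the activations together with the subspace-embedding guarantee for $S_1$ set up at the start of Section~\ref{sec:two_hidden_recursive_net}. Write $u = \phi_1(W_1^\top x)$ and $u' = \phi_1(W_1^\top S_1^\top S_1 x)$. The first move is Cauchy--Schwarz,
\[
| \langle v, \phi_2(W_2^\top u) \rangle - \langle v, \phi_2(W_2^\top u') \rangle | \le \|v\|_2 \, \| \phi_2(W_2^\top u) - \phi_2(W_2^\top u') \|_2 \le B \, \| \phi_2(W_2^\top u) - \phi_2(W_2^\top u') \|_2,
\]
using $\|v\|_2 \le B$.

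Next I would convert the outer $\ell_2$ norm into a coordinatewise estimate. Since $\| \cdot \|_2 \le \sqrt{n_3} \| \cdot \|_\infty$ and each $\phi_{2,i}$ is $L$-Lipschitz carrying the $1/\sqrt{n_3}$ normalization, the per-coordinate difference is at most $(L/\sqrt{n_3}) |w_{2,i_2}^\top u - w_{2,i_2}^\top u'| \le (LB/\sqrt{n_3}) \|u - u'\|_2$, where I used $\|w_{2,i_2}\|_2 \le B$ and Cauchy--Schwarz. The $\sqrt{n_3}$ from the norm conversion cancels the normalization, giving $\| \phi_2(W_2^\top u) - \phi_2(W_2^\top u') \|_2 \le LB \|u - u'\|_2$. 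Applying exactly the same argument one layer deeper—now with $\| \cdot \|_2 \le \sqrt{n_2} \| \cdot \|_\infty$, the $L$-Lipschitzness and $1/\sqrt{n_2}$ normalization of $\phi_1$, and $\|w_{1,i_1}\|_2 \le B$—reduces $\|u - u'\|_2$ to $L \max_{i_1 \in [n_2]} |w_{1,i_1}^\top x - \langle S_1 w_{1,i_1}, S_1 x \rangle|$.

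The final step invokes the subspace-embedding property of $S_1$: for every column $w_{1,i_1}$ we have $|\langle S_1 w_{1,i_1}, S_1 x\rangle - \langle w_{1,i_1}, x\rangle| \le \epsilon_1 \|w_{1,i_1}\|_2 \|x\|_2 \le \epsilon_1 B A$, so that $\|u-u'\|_2 \le L \epsilon_1 B A$. Chaining the three bounds yields $B \cdot LB \cdot L\epsilon_1 BA = \epsilon_1 L^2 B^3 A$, which is the claimed estimate.

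I do not expect a genuine obstacle; the computation is a routine Lipschitz-plus-embedding chain mirroring Claim~\ref{cla:subspace_one_hidden_II_2}. The one point demanding care is the bookkeeping of the normalization constants: at each layer the $1/\sqrt{n_{j+1}}$ factor inside $\phi_j$ must be matched against the $\sqrt{n_{j+1}}$ lost in passing from $\|\cdot\|_\infty$ to $\|\cdot\|_2$, and it is exactly this cancellation that keeps the bound independent of the widths $n_2, n_3$. Ensuring those two factors cancel cleanly at both layers is the only place where an index or $\sqrt{\cdot}$ slip would corrupt the estimate.
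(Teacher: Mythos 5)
Your proof is correct and follows essentially the same route as the paper's: Cauchy--Schwarz on $v$, the $\ell_2$-to-$\ell_\infty$ conversion cancelling the $1/\sqrt{n_{j+1}}$ normalization against the Lipschitz constant at each of the two layers, and the subspace-embedding guarantee for $S_1$ applied to the columns of $W_1$ at the innermost step, yielding the same chain of constants $B\cdot LB\cdot L\epsilon_1 BA=\epsilon_1L^2B^3A$. Your bookkeeping of the normalization factors is in fact slightly cleaner than the paper's intermediate displays.
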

\begin{proof}
We have,
\begin{align*}
 & ~ | \langle v, \phi_2 (W_2^\top \phi_1 (W_1^\top x) ) \rangle - \langle v, \phi_2 (W_2^\top \phi_1 (W_1^\top S_1^\top S_1 x) ) \rangle | \\
\leq & ~ \| v \|_2 \cdot \| \phi_2 (W_2^\top \phi_1 (W_1^\top x) ) - \phi_2 (W_2^\top \phi_1 (W_1^\top S_1^\top S_1 x) ) \|_{2} \\
\leq & ~ \| v \|_2 \cdot \sqrt{n_3} \cdot \max_{i_2\in [n_3]}  | \phi_2 (w_{2,i_2}^\top \phi_1 (W_1^\top x) ) - \phi_2(w_{2,i_2}^\top \phi_1 (W_1^\top S_1^\top S_1 x) )  | \\
\leq & ~ \| v \|_2 \cdot L \cdot \max_{i_2\in [n_3]} | w_{2,i_2}^\top \phi_1 (W_1^\top x) - w_{2,i_2}^\top \phi_1(W_1^\top S_1^\top S_1 x) | \\
\leq & ~ \| v \|_2 \cdot L \cdot \max_{i_2\in [n_3]} \| w_{2,i_2} \|_2 \| \phi_1(W_1^\top x) - \phi_1(W_1^\top S_1^\top S_1 x) \|_2 \\
\leq & ~ \| v \|_2 \cdot L \cdot \max_{i_2\in [n_3]} \| w_{2,i_2} \|_2 \max_{i_1 \in [n_2]} \sqrt{n_2} L | \phi_1 ( w_{1,i_1}^\top x ) - \phi_1 ( w_{1,i_1}^\top S_1^\top S_1 x ) | \\
\leq & ~ \| v \|_2 \cdot L \cdot \max_{i_2\in [n_3]} \| w_{2,i_2} \|_2 \max_{i_1 \in [n_2]} L |  w_{1,i_1}^\top x  - w_{1,i_1}^\top S_1^\top S_1 x  | \\
\leq & ~ \| v \|_2 \cdot L \cdot \max_{i_2\in [n_3]} \| w_{2,i_2} \|_2 \max_{i_1 \in [n_2]} L \epsilon_1 \| w_{1,i_1} \|_2 \| x \|_2 \\
\leq & ~ \epsilon_1 L^2 B^3 A,
\end{align*}
where the first step follows by Cauchy-Shawrz inequality, the second step follows by Fact~\ref{fac:l2_sqrtn_linf}, the third step follows by property of activation function ($L$-Lipshitz and normalization), the fourth step follows by Cauchy-Shawrz inequality, the fifth step follows by Fact~\ref{fac:l2_sqrtn_linf}, the sixth step follows by property of activation function, the seventh step follows by $S_1$ is a $(1\pm \epsilon_1)$ $\ell_2$ subspace embedding, and the last step follows by $\| v\|_2, \| w_{2,i_2} \|_2, \| w_{1,i_1} \|_2 \leq B$, $\| x \|_2 \leq A$.
\end{proof}

\begin{claim}\label{cla:two_hidden_2}
$| \langle v, \phi_2 (W_2^\top \phi_1 (W_1^\top S_1^\top S_1 x) ) \rangle - \langle v, \phi_2 (W_2^\top S_2^\top S_2 \phi_1 (W_1^\top S_1^\top S_1 x) ) \rangle | \leq 2 \epsilon_2 L^2 B^3 A$.
\end{claim}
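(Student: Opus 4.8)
The plan is to treat this as the one-hidden-layer Part II bound applied to the \emph{second} layer, exactly mirroring Claim~\ref{cla:subspace_one_hidden_II_2} and the first two layers of Claim~\ref{cla:two_hidden_1}. The key observation is that, once the first (already sketched) layer is fixed, the intermediate activation $u := \phi_1(W_1^\top S_1^\top S_1 x) \in \R^{n_2}$ is a single deterministic vector, and the left-hand side is precisely the error incurred by replacing $W_2^\top u$ with $W_2^\top S_2^\top S_2 u$ inside $\langle v, \phi_2(\cdot)\rangle$. Thus the task splits into (i) bounding $|\langle v, \phi_2(W_2^\top u)\rangle - \langle v, \phi_2(W_2^\top S_2^\top S_2 u)\rangle|$ in terms of $\|u\|_2$, and (ii) bounding $\|u\|_2$ itself.

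First I would peel off the outer layer. By Cauchy--Schwarz, $\|v\|_2 \le B$, Fact~\ref{fac:l2_sqrtn_linf}, and the $L$-Lipschitz/$1/\sqrt{n_3}$-normalization of $\phi_2$ (so the $\sqrt{n_3}$ cancels as in Claim~\ref{cla:two_hidden_1}), the left-hand side is at most $B L \max_{i_2}|\langle w_{2,i_2}, u\rangle - \langle S_2 w_{2,i_2}, S_2 u\rangle|$. Invoking that $S_2$ is a $(1\pm\epsilon_2)$ $\ell_2$-subspace embedding preserving the inner products between the fixed columns $w_{2,i_2}$ and every $u\in P_2$, this is bounded by $B L \epsilon_2 \max_{i_2}\|w_{2,i_2}\|_2 \|u\|_2 \le \epsilon_2 L B^2 \|u\|_2$.

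Second, I would bound the norm of the intermediate activation $\|u\|_2 = \|\phi_1(W_1^\top S_1^\top S_1 x)\|_2$. Using Fact~\ref{fac:l2_sqrtn_linf}, $\phi_1(0)=0$, and the $L$-Lipschitz/$1/\sqrt{n_2}$-normalization of $\phi_1$ (again the $\sqrt{n_2}$ cancels), this is at most $L\max_{i_1}|\langle S_1 w_{1,i_1}, S_1 x\rangle|$; the $(1\pm\epsilon_1)$ subspace-embedding property of $S_1$ then gives $\|u\|_2 \le L(1+\epsilon_1)\max_{i_1}\|w_{1,i_1}\|_2\|x\|_2 \le L(1+\epsilon_1)BA \le 2LBA$. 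Combining the two bounds yields $\epsilon_2 L B^2 \cdot 2LBA = 2\epsilon_2 L^2 B^3 A$, as claimed.

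I expect the main obstacle to be the subspace-embedding step for $S_2$: because $u$ ranges over the \emph{nonlinear} image set $P_2$ rather than a genuine subspace, one cannot apply the embedding guarantee to all such $u$ directly. This is exactly what the recursive $\epsilon$-net argument of Section~\ref{sec:two_hidden_recursive_net} resolves, since it shows $\mathcal{N}_2$ forms a $\gamma_2$-net of $P_2$, so the finite JL guarantee on the net points, combined with the Lipschitz continuity of $\phi_1$ controlling how far a general $u\in P_2$ sits from the net, transfers the embedding property to all of $P_2$. Everything else is the routine Cauchy--Schwarz/Lipschitz/normalization bookkeeping already carried out in Claim~\ref{cla:two_hidden_1} and Claim~\ref{cla:subspace_one_hidden_II_2}.
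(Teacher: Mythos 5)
Your proposal is correct and follows essentially the same route as the paper's proof: Cauchy--Schwarz to peel off $v$, the $\ell_2$-to-$\ell_\infty$ fact combined with the $L$-Lipschitz/$1/\sqrt{n_3}$-normalization of $\phi_2$, the $(1\pm\epsilon_2)$ embedding property of $S_2$ against the fixed columns $w_{2,i_2}$, and then the same chain (Fact~\ref{fac:l2_sqrtn_linf}, Lipschitzness of $\phi_1$, and the $S_1$ embedding) to bound $\|\phi_1(W_1^\top S_1^\top S_1 x)\|_2 \le (1+\epsilon_1)LBA$, giving $2\epsilon_2 L^2 B^3 A$. Your remark about needing the recursive $\epsilon$-net to justify applying the $S_2$ guarantee over the nonlinear image set $P_2$ correctly identifies where the paper's Section~\ref{sec:two_hidden_recursive_net} enters.
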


\begin{proof}
We have,
\allowdisplaybreaks
\begin{align*}
& ~ | \langle v, \phi_2 (W_2^\top \phi_1 (W_1^\top S_1^\top S_1 x) ) \rangle - \langle v, \phi_2 (W_2^\top S_2^\top S_2 \phi_1 (W_1^\top S_1^\top S_1 x) ) \rangle | \\
\leq & ~ \| v \|_2  \| \phi_2 (W_{2}^\top \phi_1 ( W_1^\top S_1^\top S_1 x ) ) - \phi_2 (W_{2}^\top S_2^\top S_2 \phi_1 ( W_1^\top S_1^\top S_1 x ) )  \|_2 \\
\leq & ~ \| v \|_2 \cdot \max_{i_2 \in [n_3] } \sqrt{n_3} \cdot | \phi_2( w_{2,i_2}^\top \phi_1 ( W_1^\top S_1^\top S_1 x ) ) - \phi_2( w_{2,i_2}^\top S_2^\top S_2 \phi_1 ( W_1^\top S_1^\top S_1 x ) ) | \\
\leq & ~ \| v \|_2 \cdot \max_{i_2 \in [n_3] } L \cdot | w_{2,i_2}^\top \phi_1 ( W_1^\top S_1^\top S_1 x ) - w_{2,i_2}^\top S_2^\top S_2 \phi_1 ( W_1^\top S_1^\top S_1 x ) | \\
\leq & ~ \| v \|_2 \cdot \max_{i_2 \in [n_3] } L \cdot \epsilon_2 \cdot \| w_{2,i_2} \|_2 \| \phi_1 (W_{1}^\top S_1^\top S_1 x) \|_2 \\
\leq & ~ \| v \|_2 \cdot \max_{i_2 \in [n_3] } L \cdot \epsilon_2 \cdot \| w_{2,i_2} \|_2 \max_{i_1\in [n_2]} \sqrt{n_2} | \phi_1 (w_{1,i_1}^\top S_1^\top S_1 x) | \\
\leq & ~ \| v \|_2 \cdot \max_{i_2 \in [n_3] } L \cdot \epsilon_2 \cdot \| w_{2,i_2} \|_2 \max_{i_1\in [n_2]} L | w_{1,i_1}^\top S_1^\top S_1 x | \\
\leq & ~ \| v \|_2 \cdot \max_{i_2 \in [n_3] } L \cdot \epsilon_2 \cdot \| w_{2,i_2} \|_2 \max_{i_1 \in [n_2]} (1+\epsilon_1) L \| w_{1,i_1} \|_2  \| x \|_2 \\
\leq & ~ 2 \epsilon_2 L^2 B^3 A,
\end{align*}
where the first step follows by Cauchy-Shawrz inequality, the second step follows by Fact~\ref{fac:l2_sqrtn_linf}, the third step follows by property of activation function, the fourth step follows by $S_2$ is a $(1\pm \epsilon_2)$ $\ell_2$ subspace embedding, the fifth step follows by Fact~\ref{fac:l2_sqrtn_linf}, the sixth step follows by property of activation function, the seventh step follows by $S_1$ is $a$ $(1\pm \epsilon_1)$ $\ell_2$ subspace embedding, and the last step follows by $\| v \|_2, \| w_{2,i_2} \|_2 , \| w_{1,i_1} \|_2 \leq B$ and $\| x \|_2 \leq A$.
\end{proof}

\begin{claim}\label{cla:two_hidden_3}
$| \langle v, \phi_2 (W_2^\top S_2^\top S_2 \phi_1(W_1^\top S_1^\top S_1 x) ) - \langle S_3 v, S_3 \phi_2(W_2^\top S_2^\top S_2  \phi_1 (W^\top S_1^\top S_1 x) ) \rangle | \leq 2 \epsilon_3 L^2 B^3 A $.
\end{claim}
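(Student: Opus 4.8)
The plan is to treat this as a single application of the inner-product-preservation guarantee of $S_3$, after which the entire task reduces to bounding the Euclidean norm of the second-layer output. Write $y_3 := \phi_2(W_2^\top S_2^\top S_2 \phi_1(W_1^\top S_1^\top S_1 x)) \in \R^{n_3}$, so that the quantity to control is $|\langle v, y_3\rangle - \langle S_3 v, S_3 y_3\rangle|$. Since $S_3$ is a $(1\pm\epsilon_3)$ $\ell_2$ subspace embedding, I would first invoke
\begin{align*}
|\langle S_3 v, S_3 y_3\rangle - \langle v, y_3\rangle| \leq \epsilon_3 \|v\|_2 \|y_3\|_2,
\end{align*}
exactly as in the first step of Claim~\ref{cla:two_hidden_2}. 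Together with $\|v\|_2 \leq B$, the whole claim then follows once I show $\|y_3\|_2 \lesssim L^2 B^2 A$.

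The bulk of the work is therefore the norm bound on $y_3$, which I would obtain by peeling off the two nonlinear layers one at a time, mirroring the chain in Claims~\ref{cla:two_hidden_1} and \ref{cla:two_hidden_2}. First, by Fact~\ref{fac:l2_sqrtn_linf} and the $L$-Lipshitz/$1/\sqrt{n_3}$-normalization of $\phi_2$ (so that $\sqrt{n_3}\,|\phi_2(a)| \leq L|a|$),
\begin{align*}
\|y_3\|_2 \leq \sqrt{n_3}\,\|y_3\|_\infty \leq L \max_{i_2 \in [n_3]} |w_{2,i_2}^\top S_2^\top S_2 \phi_1(W_1^\top S_1^\top S_1 x)|.
\end{align*}
Next I would use that $S_2$ is a $(1\pm\epsilon_2)$ subspace embedding to write $|\langle S_2 w_{2,i_2}, S_2 \phi_1(\cdots)\rangle| \leq (1+\epsilon_2)\|w_{2,i_2}\|_2 \|\phi_1(W_1^\top S_1^\top S_1 x)\|_2$, and bound the inner factor by the same recipe: Fact~\ref{fac:l2_sqrtn_linf}, the properties of $\phi_1$, the $S_1$ embedding, and $\|x\|_2 \leq A$ give $\|\phi_1(W_1^\top S_1^\top S_1 x)\|_2 \leq (1+\epsilon_1) L B A$. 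Combining with $\|w_{2,i_2}\|_2 \leq B$ yields $\|y_3\|_2 \leq (1+\epsilon_1)(1+\epsilon_2) L^2 B^2 A \leq 2 L^2 B^2 A$, where the last inequality uses that $\epsilon_1,\epsilon_2$ are small.

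Plugging these two norm bounds into the $S_3$ guarantee gives $\epsilon_3 \|v\|_2 \|y_3\|_2 \leq \epsilon_3 \cdot B \cdot 2 L^2 B^2 A = 2\epsilon_3 L^2 B^3 A$, as claimed. The only genuinely delicate point is the legitimacy of applying the embedding guarantee of $S_3$ to $y_3$: since $y_3$ traces out the nonlinear image $P_3$ rather than a fixed subspace as $x$ varies, $S_3$ should properly be viewed as a JL transform that preserves inner products against $v$ over the finite net ${\cal N}_3$ built in Section~\ref{sec:two_hidden_recursive_net}, with the extension to arbitrary $y_3 \in P_3$ supplied by the $\gamma_3$-net approximation together with Lipshitz continuity. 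For the fixed-$x$ statement of this claim I would simply invoke the embedding property; the uniformity over all $x$ is precisely what the recursive $\epsilon$-net argument of the preceding subsection is there to furnish. The remaining effort is purely the bookkeeping of the norm factors through the two layers, which is routine once the layer-peeling order above is fixed.
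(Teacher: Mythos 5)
Your proposal is correct and follows essentially the same route as the paper's proof: apply the $(1\pm\epsilon_3)$ inner-product guarantee of $S_3$ to reduce the claim to bounding $\|\phi_2(W_2^\top S_2^\top S_2\phi_1(W_1^\top S_1^\top S_1 x))\|_2$, then peel the two layers via Fact~\ref{fac:l2_sqrtn_linf}, the Lipshitz/normalization property of each $\phi_j$, and the embedding guarantees of $S_2$ and $S_1$, arriving at the same $(1+\epsilon_1)(1+\epsilon_2)L^2B^3A\epsilon_3\le 2\epsilon_3L^2B^3A$ bound. Your closing remark correctly identifies that uniformity over $x$ rests on the recursive $\epsilon$-net of Section~\ref{sec:two_hidden_recursive_net}, which is exactly how the paper organizes the argument.
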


\begin{proof}
We have
\allowdisplaybreaks
\begin{align*}
& ~ | \langle v, \phi_2 (W_2^\top S_2^\top S_2 \phi_1 (W_1^\top S_1^\top S_1 x) ) - \langle S_3 v, S_3 \phi_2 (W_2^\top S_2^\top S_2  \phi_1 (W^\top S_1^\top S_1 x) ) \rangle | \\
\leq & ~ \epsilon_3 \| v \|_2 \cdot \| \phi_2 (W_2^\top S_2^\top S_2 \phi_1 (W_1^\top S_1^\top S_1 x) )  \|_2 \\
\leq & ~ \epsilon_3 \| v \|_2 \cdot L \max_{i_2 \in [n_3]} | w_{2,i_2}^\top S_2^\top S_2 \phi_1 (W_1^\top S_1^\top S_1 x) | \\
\leq & ~ \epsilon_3 \| v \|_2 \cdot L \max_{i_2 \in [n_3]} (1+\epsilon_2)  \| w_{2,i_2} \|_2 \cdot \| \phi_1 (W_1^\top S_1^\top S_1 x) \|_2 \\
\leq & ~ \epsilon_3 \| v \|_2 \cdot L \max_{i_2 \in [n_3]} (1+\epsilon_2)  \| w_{2,i_2} \|_2 \cdot L \max_{i_1 \in [n_2]} | w_{1,i_1}^\top S_1^\top S_1 x | \\
\leq & ~ \epsilon_3 \| v \|_2 \cdot L \max_{i_2 \in [n_3]} (1+\epsilon_2)  \| w_{2,i_2} \|_2 \cdot L \max_{i_1 \in [n_2]} (1+\epsilon_1) \| w_{1,i_1} \|_2  \| x \|_2 \\
\leq & ~ 2\epsilon_3 L^2 B^3 A,
\end{align*}
where the first step follows by $S_3$ is a $(1\pm \epsilon_3)$ $\ell_2$ subspace embedding, the second step follows by Fact~\ref{fac:l2_sqrtn_linf} and the property of activation function, the third step follows by $S_2$ is a $(1\pm\epsilon_2)$ $\ell_2$ subspace embedding, the fourth step follows by Fact~\ref{fac:l2_sqrtn_linf}, the fifth step follows by $S_1$ is a $(1\pm\epsilon_1)$ $\ell_1$ subspace embedding, and last step follows $\| v\|_2, \| w_{2,i_2} \|_2, \| w_{1,i_1} \|_2 \leq B$ and $\| x \|_2 \leq A$.
\end{proof}

\subsection{Multiple hidden layers}\label{sec:subspace_multiple_hidden}

The goal of section is to present Theorem~\ref{thm:subspace_multiple_hidden_formal}.
\begin{theorem}[Oblivious neural subspace embedding multiple hidden layer neural networks, restate of Theorem~\ref{thm:subspace_multiple_hidden_informal}]\label{thm:subspace_multiple_hidden_formal}
Given parameters $q\geq 1$, $n_j \geq 1,\forall j \in [q+1] ,\epsilon \in (0,1), \delta \in (0,1)$ and $n=\max_{j \in [q+1]} n_j$. For each $j \in [q]$, for each $i \in [n_{j+1}]$
let $\phi_{j,i} :\R \rightarrow \R$ denote an activation function  with $L$-Lipshitz and normalized by $1/\sqrt{n_{j+1}}$. Given a fixed matrix $U \in \R^{n_1 \times d}$, $q$ weight matrices $W_j \in \R^{n_{j+1} \times n_j}, \forall j \in [q]$ with (the $i_j$-th column of $W_j$) $w_{j,i_j} \in {\cal B}_2(B,n_j)$, a weight vector $v\in \R^{n_{j+1}}$ with $v \in {\cal B}_{2}(B,n_{j+1})$. For each $j \in [q+1]$, we choose a \textsc{SparseEmbedding} matrix $S_1 \in \R^{s_j \times n_j}$ with 
 \begin{align*}
 s_j= O( d q^2 L^{2q} B^{2q+2} A^2 \epsilon^{-2} \poly \log (n q L B A / (\epsilon \delta) )  ). 
 \end{align*}
Then with probability $1-\delta$, we have : for all $x \in \mathrm{colspan}(U) \cap {\cal B}_2(A, n_1)$, 
\begin{align*}
| \langle v, \phi_q ( W_q^\top \cdots \phi_2 (W_2^\top \phi_1(W_1^\top x) ) ) \rangle - \langle S_{q+1} v, S_{q+1} \phi_q( W_q^\top S_q^\top S_q \cdots \phi_2 (W_2^\top S_2^\top S_2  \phi_1 (W_1^\top S_1^\top S_1 x) ) ) \rangle | \leq  \epsilon.
\end{align*}
\end{theorem}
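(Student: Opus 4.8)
The plan is to carry out the two-step strategy used for two hidden layers (Section~\ref{sec:two_hidden_recursive_net} together with Lemma~\ref{lem:two_hidden_bounding_error}) inductively over all $q$ layers: first a recursive $\epsilon$-net construction that reduces the uniform-over-inputs statement to a finite set of points at each layer, and then a telescoping triangle-inequality bound that accounts for the error introduced by each sketch. I would begin with the recursive net. Set $P_1 = \{Ux : \|Ux\|_2 \le 1\}$ and let ${\cal N}_1$ be a $\gamma_1$-net of $P_1$, of size $2^{O(d\log(1/\gamma_1))}$ by Lemma~\ref{lem:net_lemma}. For each $j \in [q]$ define the sketched layer output set $P_{j+1} = \{\phi_j(W_j^\top S_j^\top S_j y_j) : y_j \in P_j\}$ and the candidate net ${\cal N}_{j+1} = \{\phi_j(W_j^\top S_j^\top S_j z_j) : z_j \in {\cal N}_j\}$. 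The inductive claim, generalizing the two claims of Section~\ref{sec:two_hidden_recursive_net}, is that ${\cal N}_{j+1}$ is a $\gamma_{j+1}$-net of $P_{j+1}$ with $\gamma_{j+1} = \gamma_j/(L(1+\epsilon_j)B)$. Each step repeats the two-layer computation verbatim: for $y_{j+1} = \phi_j(W_j^\top S_j^\top S_j y_j)$ pick the nearest net point $z_j$ with $\|y_j - z_j\|_2 \le \gamma_j$, bound $\|y_{j+1}-z_{j+1}\|_2 \le \sqrt{n_{j+1}}\|\cdot\|_\infty$, peel off the normalization $1/\sqrt{n_{j+1}}$ and the Lipschitz constant $L$, and use that $S_j$ is a subspace embedding so that $\|S_j w_{j,i}\|_2 \le (1+\epsilon_j)B$. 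Since every net is a deterministic image of ${\cal N}_1$, one has $|{\cal N}_{j+1}| = |{\cal N}_1|$ throughout; unrolling gives $\gamma_{q+1} = \gamma_1/(L(1+\epsilon)B)^q$, so $\log(1/\gamma_{q+1}) = O(q\log(LB/\gamma_1))$ and all the relevant finite point sets have size $2^{O(dq\log(LB\cdots))}$.

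Next I would fix the probabilistic guarantees. Choose each $S_j$ to be a \textsc{SparseEmbedding} matrix with failure probability $\delta/(q+1)$; by Theorem~\ref{thm:sparse_embedding_subspace} this needs $s_j = O(d\epsilon_j^{-2}\poly\log(n(q+1)/(\epsilon_j\delta)))$ rows. A union bound over the $q+1$ sketches yields, with probability $1-\delta$, that every $S_j$ simultaneously preserves inner products to relative error $\epsilon_j$ on the finite net ${\cal N}_j$ together with the fixed weight vectors $w_{j,i}$ and $v$. The $\gamma_j$-net property of the recursive construction, combined with Lipschitz continuity, then transfers these finite-set guarantees to all of $P_j$ and hence to every input $x \in \mathrm{colspan}(U) \cap {\cal B}_2(A,n_1)$. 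Conditioned on this event I would bound the total error by telescoping exactly as in Lemma~\ref{lem:two_hidden_bounding_error}, writing $|\langle v, f^{(q)}(x)\rangle - \langle S_{q+1}v, S_{q+1}\tilde{f}^{(q)}(x)\rangle|$ as a sum of $q+1$ terms, where the $j$-th term inserts the single sketch $S_j^\top S_j$ into layer $j$ (and the last term inserts $S_{q+1}$ around the outer inner product). Each term is controlled by a claim generalizing Claims~\ref{cla:two_hidden_1}--\ref{cla:two_hidden_3}: the error $\epsilon_j$ is amplified by $L$ and $B$ through each of the $q-j$ subsequent layers and then paired with $v$, giving a per-term bound of $O(\epsilon_j L^q B^{q+1} A)$, the $\sqrt{n_{j+1}}$ factors from $\|\cdot\|_2 \le \sqrt{n}\|\cdot\|_\infty$ being cancelled by the $1/\sqrt{n_{j+1}}$ normalization at each layer. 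Setting $\epsilon_j = \Theta(\epsilon/((q+1)L^q B^{q+1}A))$ makes the sum at most $\epsilon$, and substituting into $s_j = O(d\epsilon_j^{-2}\poly\log(\cdots))$ reproduces the stated $s_j = O(dq^2 L^{2q}B^{2q+2}A^2\epsilon^{-2}\poly\log(nqLBA/(\epsilon\delta)))$; the consistency check at $q=2$ recovers Theorem~\ref{thm:subspace_two_hidden} exactly.

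The main obstacle is the bookkeeping of how each sketch's error propagates forward through the remaining nonlinear layers. Unlike the two-layer case, where only three terms need to be enumerated, here one must verify uniformly in $j$ that inserting $S_j^\top S_j$ at a single intermediate layer produces an error which, after composing $q-j$ Lipschitz layers with operator-norm-$B$ weights and pairing with $v$, closes to exactly $O(\epsilon_j L^q B^{q+1}A)$ — and that the $\sqrt{n_{j+1}}$ blowups are absorbed cleanly by the per-layer normalization at every level rather than accumulating. Getting this telescoping to yield the claimed dependence on $q$, $L$, and $B$ (rather than a cruder bound) is where the care lies; once the single-layer inductive step is isolated, the recursive net construction and the union bound are routine, if notationally heavier than their two-layer counterparts.
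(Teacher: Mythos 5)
Your proposal follows essentially the same route as the paper: a recursive $\gamma_j$-net construction propagated through the layers (with $\gamma_{j+1}=\gamma_j/(L(1+\epsilon_j)B)$ and net size preserved), a union bound over the $q+1$ sketches, and a telescoping decomposition of the error into $q+1$ hybrid terms each bounded by inserting one sketch and pushing $\epsilon_j$ through the remaining Lipschitz layers, exactly as in the paper's $f^{(q,l)}\circ\wt{f}^{(l-1)}$ argument. The only cosmetic difference is that your per-term bound carries a factor $A$ where the paper writes $A^q$; your version is the one consistent with the stated $s_j$ (which contains $A^2$), so this is if anything a small improvement in bookkeeping rather than a gap.
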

To provide a clear proof, we define some notations to simplify the multiple hidden layer neural network,
\begin{definition}
We first define the base
\begin{align*}
f^{(0)}(x) = \widetilde{f}^{(0)}(x) = x.
\end{align*}
Then we define the general case in a recursive way,
\begin{align*}
f^{(j)}(x) = & ~ \phi_j(W_j^\top f^{(j-1)}(x) ), \forall j \in [q] \\
\widetilde{f}^{(j)}(x) = & ~ \phi_j(W_j^\top S_j^\top S_j \widetilde{f}^{(j-1)}(x) ), \forall j \in [q] \\
f^{(j,l)} \circ \wt{f}^{(l-1)}(x) = & ~ \phi_j(W_j^\top \cdots \phi_{l} ( W_{l}^\top \wt{f}^{(l-1)}(x) ) ), \forall j \in [q], l\in \{1, \cdots, j +1\}.
\end{align*}
Note that $f^{(j,1)} \circ \wt{f}^{(0)}(x) = f^{(j)}(x)$ and $f^{(j,j+1)} \circ \wt{f}^{(j)}(x) = \wt{f}^{(j)}(x)$.
\end{definition}
Using the above definition, it is easy to see that
\begin{align*}
\langle v , \phi_q (W_q^\top \cdots \phi_2 (W_2^\top \phi_1 ( W_1^\top x )  ) ) \rangle = \langle v, f^{(q)}(x) \rangle,
\end{align*}
and
\begin{align*}
 \langle S_{q+1} v, S_{q+1} \phi_q( W_q^\top S_q^\top S_q \cdots \phi_2 (W_2^\top S_2^\top S_2  \phi_1 (W_1^\top S_1^\top S_1 x) ) ) \rangle = \langle S_{q+1} v, S_{q+1} \wt{f}^{(q)}(x) \rangle.
\end{align*}
Thus, we have
\begin{align*}
& ~ |  \langle v, f^{(q)}(x) \rangle - \langle S_{q+1} v, S_{q+1} \wt{f}^{(q)}(x) \rangle | \\
\leq & ~ \sum_{l=1}^q | \langle v, f^{(q,l)} \circ \wt{f}^{(l-1)} (x) \rangle - \langle v, f^{(q,l+1)} \circ \wt{f}^{(l)} (x) \rangle | + | \langle v, \wt{f}^{(q)}(x) - \langle S_{q+1} v, S_{q+1} \wt{f}^{(q)}(x)|. 
\end{align*}
Now the question is how to bound the above $q+1$ terms.

\begin{claim}
$ | \langle v, \wt{f}^{(q)}(x) - \langle S_{q+1} v, S_{q+1} \wt{f}^{(q)}(x)| \leq 4 \epsilon_q L^q B^{q+1} A^q$.
\end{claim}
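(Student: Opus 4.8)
The plan is to strip off the outer sketch $S_{q+1}$ first and then reduce the whole estimate to a single quantity, namely the norm $\|\wt f^{(q)}(x)\|_2$. Since $S_{q+1}$ is a $(1\pm\epsilon_{q+1})$ $\ell_2$-subspace embedding, the inner-product form of Definition~\ref{def:subspace} applied to the pair $v,\wt f^{(q)}(x)$ gives
\begin{align*}
\left| \langle v, \wt f^{(q)}(x)\rangle - \langle S_{q+1}v, S_{q+1}\wt f^{(q)}(x)\rangle \right| \le \epsilon_{q+1}\,\|v\|_2\,\|\wt f^{(q)}(x)\|_2 \le \epsilon_{q+1} B\,\|\wt f^{(q)}(x)\|_2,
\end{align*}
where the last step uses $v\in{\cal B}_2(B,n_{q+1})$. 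Thus the claim follows once we show $\|\wt f^{(q)}(x)\|_2\lesssim L^q B^q A$, which is exactly the analogue of the chain of inequalities used at the end of Claim~\ref{cla:two_hidden_3} in the two-hidden-layer case.

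For the norm bound I would prove, by induction on $j$, the one-step estimate
\begin{align*}
\|\wt f^{(j)}(x)\|_2 \le L(1+\epsilon_j)B\,\|\wt f^{(j-1)}(x)\|_2, \qquad \forall j\in[q].
\end{align*}
To establish it, first convert the $\ell_2$ norm into $\sqrt{n_{j+1}}$ times the $\ell_\infty$ norm using Fact~\ref{fac:l2_sqrtn_linf}; each coordinate of $\wt f^{(j)}(x)$ is $\phi_{j,i_j}(w_{j,i_j}^\top S_j^\top S_j \wt f^{(j-1)}(x))$, and the $L$-Lipschitz property together with the $1/\sqrt{n_{j+1}}$ normalization (so that $|\phi_{j,i_j}(z)|\le \tfrac{L}{\sqrt{n_{j+1}}}|z|$) bounds it by $\tfrac{L}{\sqrt{n_{j+1}}}|w_{j,i_j}^\top S_j^\top S_j \wt f^{(j-1)}(x)|$, canceling the $\sqrt{n_{j+1}}$ factor. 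Finally, writing $w_{j,i_j}^\top S_j^\top S_j \wt f^{(j-1)}(x)=\langle S_j w_{j,i_j}, S_j \wt f^{(j-1)}(x)\rangle$ and invoking the subspace-embedding property of $S_j$ (its inner-product form, together with $\|w_{j,i_j}\|_2\le B$) yields the factor $(1+\epsilon_j)B$. This mirrors Claims~\ref{cla:two_hidden_1}--\ref{cla:two_hidden_3} step by step.

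Composing the one-step bound over all $q$ layers and using the base case $\wt f^{(0)}(x)=x$ with $\|x\|_2\le A$ gives
\begin{align*}
\|\wt f^{(q)}(x)\|_2 \le L^q B^q A \prod_{j=1}^q (1+\epsilon_j).
\end{align*}
Because each $\epsilon_j$ is taken of order $\epsilon/\poly(n,q,L,B,A)$, the product $\prod_{j=1}^q(1+\epsilon_j)$ stays bounded by a small constant (at most $2$), and plugging this into the first display produces the desired bound $\le 4\epsilon_{q+1}L^qB^{q+1}A$. The only place requiring care is this accumulation: one must verify that the multiplicative $(1+\epsilon_j)$ factors picked up at every layer do not compound into an exponential-in-$q$ blow-up, which is where the assumption that the $\epsilon_j$ are chosen small (and the recursive $\epsilon$-net of Section~\ref{sec:two_hidden_recursive_net}, guaranteeing each $S_j$ embeds the relevant vectors $\wt f^{(j-1)}(x)$) does the work. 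Everything else is a routine propagation of Cauchy--Schwarz, Lipschitz continuity, and the dimension-cancelling normalization.
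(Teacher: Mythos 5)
Your proposal is correct and follows essentially the same route as the paper: peel off $S_{q+1}$ via the subspace-embedding inner-product bound to reduce to $\epsilon_{q+1}\|v\|_2\|\wt f^{(q)}(x)\|_2$, then bound $\|\wt f^{(q)}(x)\|_2$ by recursively applying the per-layer estimate $L(1+\epsilon_j)B$ (the paper invokes this as ``applying the argument of Lemma~\ref{lem:two_hidden_bounding_error} recursively''), and finally control $\prod_{j=1}^q(1+\epsilon_j)$ by a constant since each $\epsilon_j<1/q$. Your final exponent $A$ rather than $A^q$ is in fact the tighter (and, matching the two-layer Claim~\ref{cla:two_hidden_3}, the more natural) constant, so there is no gap.
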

\begin{proof}
We have,
\begin{align*}
 | \langle v, \wt{f}^{(q)}(x) - \langle S_{q+1} v, S_{q+1} \wt{f}^{(q)}(x)| 
\leq & ~ \epsilon_{q+1} \| v \|_2 \cdot \| \wt{f}^{(q)}(x) \|_2 \\
\leq & ~ \epsilon_{q+1} L^q B^{q+1} A^q \prod_{j=1}^q (1+\epsilon_j) \\
\leq & ~ 4 \epsilon_{q+1} L^q B^{q+1} A^q,
\end{align*}
where the first step follows by $S_{q+1}$ is a $(1\pm \epsilon_{q+1})$ $\ell_2$ subspace embedding, the second step follows by applying the argument in the proof of Lemma~\ref{lem:two_hidden_bounding_error} recursively, and the last step follows by $\epsilon_j < 1/q$.
\end{proof}

Similarly, we have
\begin{claim}
$\forall l \in [q], | \langle v, f^{(q,l)} \circ \wt{f}^{(l-1)} (x) \rangle - \langle v, f^{(q,l+1)} \circ \wt{f}^{(l)} (x) \rangle | \leq 4 \epsilon_l L^{q} B^{q+1} A^q$.
\end{claim}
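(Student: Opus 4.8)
The plan is to read this as a single step of a hybrid (telescoping) argument: the two quantities $f^{(q,l)}\circ\wt f^{(l-1)}(x)$ and $f^{(q,l+1)}\circ\wt f^{(l)}(x)$ are built from exactly the same operations except at layer $l$. Writing $u=\wt f^{(l-1)}(x)$ for the common sketched forward pass through the first $l-1$ layers, the first quantity applies the exact map $\phi_l(W_l^\top u)$ at layer $l$ while the second applies the sketched map $\phi_l(W_l^\top S_l^\top S_l u)$, and both then pass through the identical clean (unsketched) layers $l+1,\dots,q$. So the entire difference is driven by $\|\phi_l(W_l^\top u)-\phi_l(W_l^\top S_l^\top S_l u)\|_2$, propagated forward through the upper layers. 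First I would apply Cauchy--Schwarz to peel off $\|v\|_2\le B$, reducing the goal to bounding $\|f^{(q,l)}\circ\wt f^{(l-1)}(x)-f^{(q,l+1)}\circ\wt f^{(l)}(x)\|_2$.

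Next I would propagate the layer-$l$ discrepancy through the clean layers $l+1,\dots,q$. For each such layer and any two inputs $p,p'$, Fact~\ref{fac:l2_sqrtn_linf} gives $\|\phi_j(W_j^\top p)-\phi_j(W_j^\top p')\|_2\le\sqrt{n_{j+1}}\max_i|\phi_j(w_{j,i}^\top p)-\phi_j(w_{j,i}^\top p')|$, and the $L$-Lipschitz property together with the $1/\sqrt{n_{j+1}}$ normalization collapses the $\sqrt{n_{j+1}}$ factor to yield $\le LB\|p-p'\|_2$ via $\|w_{j,i}\|_2\le B$ and Cauchy--Schwarz. Iterating over the $q-l$ upper layers produces a factor $(LB)^{q-l}$, exactly as in Claims~\ref{cla:two_hidden_1}--\ref{cla:two_hidden_3}. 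This leaves the single-layer term $\|\phi_l(W_l^\top u)-\phi_l(W_l^\top S_l^\top S_l u)\|_2$, which the same Lipschitz-plus-normalization step bounds by $L\max_{i_l}|\langle w_{l,i_l},u\rangle-\langle S_l w_{l,i_l},S_l u\rangle|$; invoking the $(1\pm\epsilon_l)$ subspace-embedding guarantee of $S_l$ then gives $\le L\epsilon_l B\|u\|_2$.

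It remains to control $\|u\|_2=\|\wt f^{(l-1)}(x)\|_2$, the norm of the sketched forward pass. Starting from $\|x\|_2\le A$ and applying the same Lipschitz/normalization estimate at each sketched layer $j\le l-1$ (now using that $S_j$ only inflates inner products by $(1+\epsilon_j)$), each layer contributes a factor $L(1+\epsilon_j)B$, so $\|u\|_2\le (LB)^{l-1}A\prod_{j=1}^{l-1}(1+\epsilon_j)$; the hypothesis $\epsilon_j<1/q$ bounds the product by an absolute constant (at most $4$). Multiplying the contributions --- $\|v\|_2\le B$, the upper-layer factor $(LB)^{q-l}$, the subspace-embedding factor $L\epsilon_l B$, and the forward-pass norm $\lesssim (LB)^{l-1}A$ --- makes the powers of $L$ and $B$ add up to exactly $L^qB^{q+1}$ (one checks the exponents $(q-l)+1+(l-1)=q$ and $1+(q-l)+1+(l-1)=q+1$), leaving $4\epsilon_l L^q B^{q+1}A$; the stated power $A^q$ is a loose upper bound on this single factor of $A$, valid whenever $A\ge1$. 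For $l=1$ this reproduces the coefficient and $A$-power of Claim~\ref{cla:two_hidden_1} exactly.

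The main obstacle is not the per-$x$ algebra above, which merely recycles the two-hidden-layer claims, but ensuring the subspace-embedding step is valid \emph{uniformly} over all admissible $x$: the vector $u=\wt f^{(l-1)}(x)$ ranges over the nonlinear image $P_l$, not a fixed low-dimensional subspace, so a single application of the subspace-embedding theorem to one subspace does not suffice. I would handle this exactly as in Section~\ref{sec:two_hidden_recursive_net}: recursively build an $\epsilon$-net $\mathcal N_l$ of $P_l$ whose cardinality matches $|\mathcal N_1|=2^{O(d\log(1/\gamma_1))}$, using Lipschitz continuity to carry net-closeness in $P_{l-1}$ forward to $P_l$, apply the embedding guarantee simultaneously to the finitely many net points together with the columns $w_{l,i_l}$, and finally transfer the estimate from a net point to an arbitrary $u\in P_l$ by one additional Lipschitz bound. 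This shared recursive net construction across all $q$ layers is the crux that legitimizes treating each $S_l$ as a genuine subspace embedding on the data-dependent sets it actually acts on.
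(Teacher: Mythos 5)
Your proposal is correct and follows essentially the same route as the paper: the paper's own proof is just the one-line instruction to ``apply the argument of Lemma~\ref{lem:two_hidden_bounding_error} recursively'' plus the bound $\prod_i(1+\epsilon_i)\le 4$, and your write-up is exactly that recursion made explicit (Cauchy--Schwarz on $v$, a factor $LB$ per clean upper layer, the $(1\pm\epsilon_l)$ embedding at layer $l$, and the norm of the sketched forward pass), together with the same recursive $\epsilon$-net construction the paper invokes for uniformity over $x$. Your observation that the derivation actually produces a single factor of $A$ (so the stated $A^q$ is only a loose upper bound requiring $A\ge 1$) is accurate and consistent with the exponents in Claims~\ref{cla:two_hidden_1}--\ref{cla:two_hidden_3}.
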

\begin{proof}
We have 
\begin{align*}
 | \langle v, f^{(q,l)} \circ \wt{f}^{(l-1)} (x) \rangle - \langle v, f^{(q,l+1)} \circ \wt{f}^{(l)} (x) \rangle | 
\leq & ~ \epsilon_l L^{q} B^{q+1} A^q \prod_{i=l-1}^{1} (1+\epsilon_i) \\
\leq & ~ 4 \epsilon_l L^{q} B^{q+1} A^q,
\end{align*}
where the first step follows by applying the argument in the proof of Lemma~\ref{lem:two_hidden_bounding_error} recursively, and the last step follows by $\epsilon_i < 1/p$.
\end{proof}
Putting it all together, 
\begin{align*}
|  \langle v, f^{(q)}(x) \rangle - \langle S_{q+1} v, S_{q+1} \wt{f}^{(q)}(x) \rangle | \leq 4 \left( \sum_{j=1}^{q+1} \epsilon_j \right) L^q B^{q+1} A^q.
\end{align*}
Therefore, we can show the error is small.

We also need the $\epsilon$-net argument for all $j \in [q+1]$, it also follows by applying the proof of Section~\ref{sec:two_hidden_recursive_net} recursively. 

Therefore, we complete the proof.

\section{Recovery Guarantee}\label{app:recovery}

\begin{figure}[tb]
\centering 
{\includegraphics[width=0.37\textwidth]{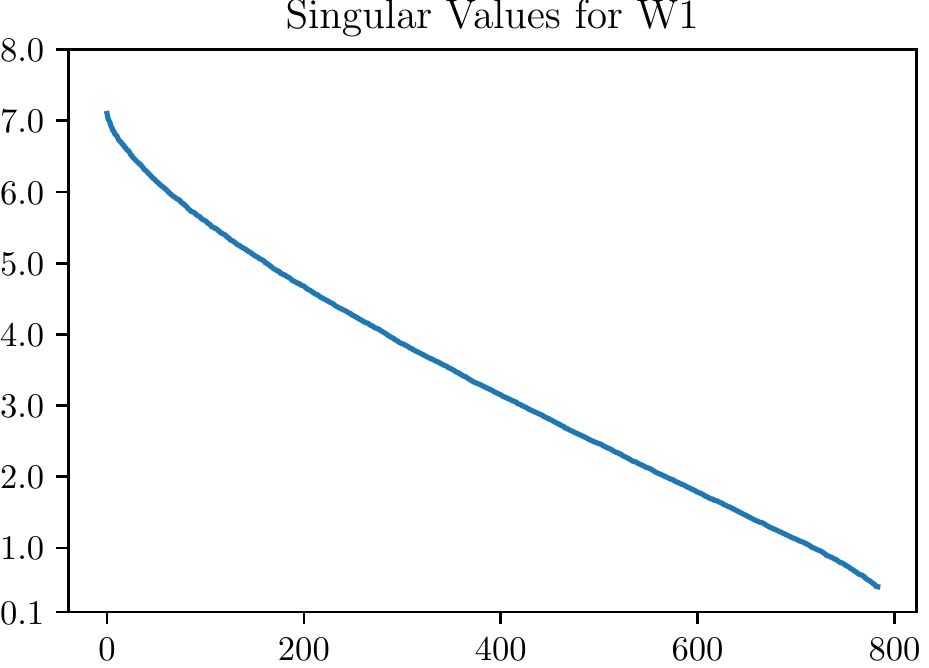}}\label{fig:Singular_Values_for_W1_seed100_CR64:nhu1000}
    \hspace{.3in}
{\includegraphics[width=0.37\textwidth]{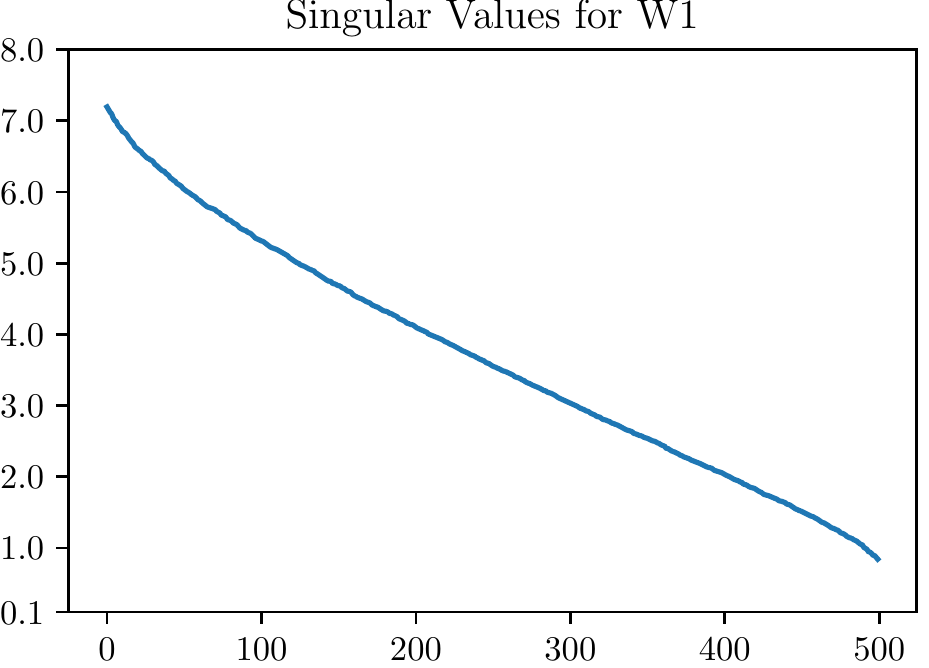}}\label{fig:Singular_Values_for_W1_seed100_CR64:nhu500}
\caption{\small Input dimension is 784. Example of singular values of weight matrix $W_1$ in HashedNets with (a) (left) 1000 hidden units and (b) (right) 500 hidden units from large to small for one random seed. }
\label{fig:Singular_Values_for_W1_seed100_CR64}
\end{figure}

\subsection{Preliminaries}

\begin{theorem}[\cite{br94}]\label{thm:br94_theorem}
Let $k$ be an even integer, and let $X$ be the sum of $n$ $k$-wise independent random variables taking values in $[0,1]$. Let $\mu \E[X]$ and $a >0$. Then we have
\begin{align*}
\Pr[ | X - \mu | > a ] < & ~ 1.1 \left( \frac{nk}{a^2} \right)^{k/2}, \\
\Pr[ | X - \mu | > a ] < & ~ 8 \left( \frac{ k \mu + k^2 }{a^2} \right)^{k/2}.
\end{align*}
\end{theorem}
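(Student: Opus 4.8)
The plan is to prove both tail bounds by the \emph{$k$-th moment method}, exploiting the fact that $k$-wise independence is exactly enough to evaluate a degree-$k$ central moment. Since $k$ is even, $(X-\mu)^k \ge 0$, so Markov's inequality gives
\begin{align*}
\Pr[|X-\mu| > a] = \Pr[(X-\mu)^k > a^k] \le \frac{\E[(X-\mu)^k]}{a^k}.
\end{align*}
Everything therefore reduces to bounding the central moment $M_k := \E[(X-\mu)^k]$, and the two displayed inequalities will follow from two upper bounds on $M_k$: one of the form $M_k \le 1.1\,(nk)^{k/2}$ and one of the form $M_k \le 8\,(k\mu+k^2)^{k/2}$.

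Writing $X = \sum_{i=1}^n X_i$ and setting $Y_i := X_i - \E[X_i]$, I would first record that $\E[Y_i]=0$, $|Y_i|\le 1$, and $\E[Y_i^2] = \Var(X_i) \le \E[X_i]$ because $X_i^2 \le X_i$ on $[0,1]$. Hence $\sigma^2 := \sum_i \E[Y_i^2] = \Var(X)$ satisfies both $\sigma^2 \le n$ and $\sigma^2 \le \mu$; these are the two slack choices that produce the two inequalities. Expanding $M_k = \E[(\sum_i Y_i)^k] = \sum_{i_1,\dots,i_k} \E[Y_{i_1}\cdots Y_{i_k}]$, the central point is that every monomial $Y_{i_1}\cdots Y_{i_k}$ involves at most $k$ distinct indices, so $k$-wise independence lets me factor each expectation exactly as in the fully independent case. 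In particular, any monomial in which some index occurs with multiplicity one contributes a factor $\E[Y_i]=0$ and vanishes, so only monomials whose distinct indices each occur with multiplicity at least two survive. Consequently the surviving monomials use at most $k/2$ distinct indices, and I bound each surviving factor via $\E[Y_i^c] \le \E[Y_i^2]$ for $c \ge 2$, which holds since $|Y_i|\le 1$.

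The heart of the argument is then a purely combinatorial count of the surviving monomials, organized by the number $r \le k/2$ of distinct indices and their multiplicity pattern. Choosing which $r$ of the $n$ indices appear costs at most $\binom{n}{r}$, distributing the $k$ positions among these $r$ groups (each of size $\ge 2$) is governed by a multinomial coefficient, and the product of the associated factors is at most $\prod_l \E[Y_{j_l}^2]$, whose sum over index choices is controlled by powers of $\sigma^2$. Collecting terms and applying Stirling-type estimates on the factorials yields a clean bound of the form $M_k \le C\bigl((k\sigma^2)^{k/2} + k^k\bigr)$. Substituting $\sigma^2 \le \mu$ and rewriting $k^k=(k^2)^{k/2}$ gives the second inequality $M_k \le 8(k\mu + k^2)^{k/2}$ after tracking constants, while substituting $\sigma^2 \le n$ and using that the meaningful regime $k \le n$ forces $k^k=(k^2)^{k/2}\le (nk)^{k/2}$ gives the first inequality $M_k \le 1.1(nk)^{k/2}$.

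I expect the main obstacle to be the bookkeeping in this combinatorial count --- specifically, extracting the explicit constants $1.1$ and $8$ rather than unspecified absolute constants. This requires summing the multiplicity patterns carefully and balancing the variance-dominated regime (the $(k\sigma^2)^{k/2}$ contribution) against the large-multiplicity regime (the $k^k$ contribution), since it is the interplay of these two that pins down both the $k\mu+k^2$ shape of the second bound and the sharp leading constant in the first.
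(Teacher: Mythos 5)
The paper does not actually prove this statement---it is imported verbatim from \cite{br94} and used as a black box in the proof of Lemma~\ref{lem:concentration_of_hashing_buckets}---and your $k$-th moment argument (Markov's inequality applied to $(X-\mu)^k$, factoring the expanded central moment via $k$-wise independence exactly as in the fully independent case, discarding monomials containing a singleton index, and counting the surviving monomials against the two variance bounds $\sigma^2\le\mu$ and $\sigma^2\le n$) is precisely the proof given in that reference, so your approach matches the source. The only piece you leave unexecuted is the combinatorial bookkeeping that extracts the explicit constants $1.1$ and $8$, which is where \cite{br94} spends its effort, but the skeleton you describe is sound and your handling of the regime $k\le n$ for the first bound (outside of which that bound is vacuous) is correct.
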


\subsection{Definitions}
\begin{definition}\label{def:viewing_weights_in_two_ways}
Let $\wh{w} \in \R^{n \times k}$ denote the weights. Let $\wh{w}_i \in \R^{n}$ denote the $i$-th column of $\wh{w}$. Let $\wh{w}_{i,j}$ denote the $j$-th entry of $\wh{w}_i$. Let $h$ denote a hash function that maps $[k] \times [n]$ to $[B]$. Let $w\in \R^{B}$ denote a vector. Then for each $i\in [k], j \in [n]$, we have $w_{ h(i,j) } = \wh{w}_{i,j}$.
\end{definition}

\begin{property}[Property 3.2 in \cite{zsjbd17}]\label{pro:rho}
Let $\alpha_q(\sigma) = \E_{z\sim {\cal N}(0,1) } [ \phi'(\sigma \cdot z) z^q ], \forall q \in \{0,1,2\} $, and $\beta_q(\sigma) = \E_{ z \sim {\cal N}(0,1) } [ \phi'^2(\sigma \cdot z) z^q ], \forall q \in \{0,2\} $. Let $\rho(\sigma)$ denote
\begin{align*}
\min \{ \beta_0(\sigma) - \alpha_0^2 (\sigma) - \alpha_1^2(\sigma) , \beta_2(\sigma) - \alpha_1^2(\sigma) - \alpha_2^2(\sigma), \alpha_0(\sigma) \cdot \alpha_2(\sigma) - \alpha_1^2 (\sigma) \}.
\end{align*}
The first derivative $\phi'(z)$ satisfies that, for all $\sigma > 0$, we have $\rho(\sigma) >0$.
\end{property}

\subsection{Local strong convexity}
Recall the definition
\begin{align}
F_{\cal D}(w) = \frac{1}{2} \E_{(x,y) \sim {\cal D}} \left[ \left( \sum_{i=1}^k v_i^* \cdot \phi \left( \sum_{j=1}^n w_{h(i,j)} \cdot x_j \right) - y \right)^2 \right], \label{def:app_hash_obj_F_D}
\end{align}
\begin{align}
F_{S}(w) = \frac{1}{2} \frac{1}{|S|}  \sum_{(x,y) \in S} \left[ \left( \sum_{i=1}^k v_i^* \cdot \phi \left( \sum_{j=1}^n w_{h(i,j)} \cdot x_j \right) - y \right)^2 \right], \label{def:app_hash_obj_F_S}
\end{align}
where $\phi$ is activation function. All of Section~\ref{app:recovery} assumes that $\phi$ satisfies the Property 3.1, 3.2 and 3.3 in \cite{zsjbd17}.

We use ${\bf 1}_{h(i,j) = p}$ to denote the indicator function which means
\begin{align*}
{\bf 1}_{h(i,j)} =
\begin{cases}
1, & \text{~if~} h(i,j) = p ; \\
0, & \text{~otherwise.}
\end{cases}
\end{align*}
For each $p\in [B]$, we have
\begin{align*}
\frac{\partial F }{ \partial w_p } = \E_{(x,y) \sim {\cal D}} \left[ \left( \sum_{i=1}^k v_i^* \cdot \phi \left( \sum_{j=1}^n w_{h(i,j)}  x_j \right) -y  \right)
\cdot \left( \sum_{i=1}^k v_i^* \cdot \phi' \left( \sum_{j=1}^n w_{h(i,j)} x_j \right) \cdot \left( \sum_{j=1}^n {\bf 1}_{h(i,j) = p} \cdot x_j \right)   \right)
 \right]. 
\end{align*}

For $q \neq p \in [B]$, have
\allowdisplaybreaks
\begin{align*}
\frac{\partial^2 F }{ \partial w_p \partial w_q } = & ~ \E_{(x,y) \sim {\cal D}}  \left[
\left( \sum_{i=1}^k v_i^* \cdot \phi' \left( \sum_{j=1}^n w_{h(i,j)} x_j \right) \cdot \left( \sum_{j=1}^n {\bf 1}_{h(i,j) = p} \cdot x_j \right) \right) \right. \\
& ~ \cdot
\left( \sum_{i=1}^k v_i^* \cdot \phi' \left( \sum_{j=1}^n w_{h(i,j)} x_j \right) \cdot \left( \sum_{j=1}^n {\bf 1}_{h(i,j) = q} \cdot x_j \right) \right) \\
& ~ + \left( \sum_{i=1}^k v_i^* \cdot \phi \left( \sum_{j=1}^n w_{h(i,j)}  x_j \right) -y  \right) \\
& ~ \cdot \left. \left( \sum_{i=1}^k v_i^* \cdot \phi'' \left( \sum_{j=1}^n w_{h(i,j)} x_j \right) \cdot \left( \sum_{j=1}^n {\bf 1}_{h(i,j) = p} \cdot x_j \right) \cdot \left( \sum_{j=1}^n {\bf 1}_{h(i,j) = q} \cdot x_j \right)   \right) \right] \\
= & ~ \E_{(x,y) \sim {\cal D}}  \left[
\left( \sum_{i=1}^k v_i^* \cdot \phi' \left( \sum_{j=1}^n w_{h(i,j)} x_j \right) \cdot \left( \sum_{j=1}^n {\bf 1}_{h(i,j) = p} \cdot x_j \right) \right) \right. \\
& ~ \cdot \left.
\left( \sum_{i=1}^k v_i^* \cdot \phi' \left( \sum_{j=1}^n w_{h(i,j)} x_j \right) \cdot \left( \sum_{j=1}^n {\bf 1}_{h(i,j) = q} \cdot x_j \right) \right) \right], 
\end{align*}
where the last step follows by $\phi'' = 0$.

For $p = q \in [B]$, we have
\allowdisplaybreaks
\begin{align*}
\frac{\partial^2 F }{ \partial w_p^2 } = & ~ \E_{(x,y) \sim {\cal D}}  \left[
\left( \sum_{i=1}^k v_i^* \cdot \phi' \left( \sum_{j=1}^n w_{h(i,j)} x_j \right) \cdot \left( \sum_{j=1}^n {\bf 1}_{h(i,j) = p} \cdot x_j \right) \right) \right. \\
& ~ \cdot
\left( \sum_{i=1}^k v_i^* \cdot \phi' \left( \sum_{j=1}^n w_{h(i,j)} x_j \right) \cdot \left( \sum_{j=1}^n {\bf 1}_{h(i,j) = p} \cdot x_j \right) \right) \\
& ~ + \left( \sum_{i=1}^k v_i^* \cdot \phi \left( \sum_{j=1}^n w_{h(i,j)}  x_j \right) -y  \right) \\
& ~ \cdot \left. \left( \sum_{i=1}^k v_i^* \cdot \phi'' \left( \sum_{j=1}^n w_{h(i,j)} x_j \right) \cdot \left( \sum_{j=1}^n {\bf 1}_{h(i,j) = p} \cdot x_j \right) \cdot \left( \sum_{j=1}^n {\bf 1}_{h(i,j) = p} \cdot x_j \right)   \right) \right] \\
= & ~ \E_{(x,y) \sim {\cal D}}  \left[
\left( \sum_{i=1}^k v_i^* \cdot \phi' \left( \sum_{j=1}^n w_{h(i,j)} x_j \right) \cdot \left( \sum_{j=1}^n {\bf 1}_{h(i,j) = p} \cdot x_j \right) \right) \right. \\
& ~ \cdot \left.
\left( \sum_{i=1}^k v_i^* \cdot \phi' \left( \sum_{j=1}^n w_{h(i,j)} x_j \right) \cdot \left( \sum_{j=1}^n {\bf 1}_{h(i,j) = p} \cdot x_j \right) \right) \right], \\
\end{align*}
where the last step follows by $\phi'' = 0$.

Before presenting the main theorem, we provide some definition here,
\begin{definition}[]
Given $w^* \in \R^B$ and hash function $h : [k] \times [n] \rightarrow [B]$. The matrix $\wh{w}^* \in \R^{k \times n}$ is defined in Definition~\ref{def:viewing_weights_in_two_ways}
For each $i\in [k]$, we use $\sigma_i$ to denote the $i$-th singular value of matrix $\wh{w}^*$. Let $\kappa = \sigma_1 /\sigma_k$, $\lambda = (\prod_{i=1}^k \sigma_i)/\sigma_k^k$. We define $v_{\max} = \max_{i\in [k]} |v_i^*|$ and $v_{\min} = \min_{i\in [k]} |v_i^*|$. For simplicity, let $\nu = v_{\max} / v_{\min}$. Let function $\rho(\sigma)$ be defined in Property~\ref{pro:rho}. Let $p$ be defined in Property 3.1 in \cite{zsjbd17}.
\end{definition}

We present our main theorem here,
\begin{theorem}
Let $F_{\cal D}$ be defined in Eq.~\eqref{def:app_hash_obj_F_D}. Let $N= nk$. Let ${\cal H}$ denote a family of $\Theta(\log N)$-wise independent hash functions that maps $[N] \rightarrow [B]$. Then choosing $h \sim {\cal H}$, we have
\begin{align*}
 \frac{kn}{2B} \cdot A_{\min} \cdot I_B \preceq  \nabla^2 F_{\cal D}(w^*) \preceq \frac{2kn}{B} \cdot A_{\max} \cdot  I_B 
\end{align*}
holds with probability at least $1-1/\poly(N)$, where 
\begin{align*}
A_{\min} = \Omega( \frac{ v_{\min}^2 \rho(\sigma_k) }{\kappa^2 \lambda} ), \text{~and~}  A_{\max} = O(k v_{\max}^2 \sigma_1^{2p}). 
\end{align*}
\end{theorem}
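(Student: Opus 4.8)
The plan is to reduce the Hessian of the HashedNets objective to the Hessian of an ordinary fully connected network and then invoke the local strong convexity and smoothness bounds of \cite{zsjbd17}. The bridge between the two is exactly the weight-sharing expansion of Definition~\ref{def:viewing_weights_in_two_ways}: given a test direction $a \in \R^B$, define its expansion $\wh{a} \in \R^{k\times n}$ by $\wh{a}_{i,j} = a_{h(i,j)}$, in the same way that $w^*$ expands to $\wh{W}^*$. The whole proof reduces to controlling one quadratic form in $a$ and then relating the geometry of $a$ to that of $\wh{a}$.

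First I would write out the quadratic form $a^\top \nabla^2 F_{\cal D}(w^*)\, a = \sum_{p,q\in[B]} a_p a_q\, \frac{\partial^2 F}{\partial w_p \partial w_q}$. Substituting the Hessian formula derived above (whose $\phi''$ term vanishes) and using the reindexing $\sum_{p\in[B]} a_p \sum_{j=1}^n {\bf 1}_{h(i,j)=p}\, x_j = \sum_{j=1}^n a_{h(i,j)}\, x_j = \wh{a}_i\cdot x$, the double sum over buckets assembles into a single square:
\begin{align*}
a^\top \nabla^2 F_{\cal D}(w^*)\, a = \E_{(x,y)\sim {\cal D}}\left[\left(\sum_{i=1}^k v_i^*\, \phi'(\wh{w}_i^*\cdot x)\,(\wh{a}_i\cdot x)\right)^2\right],
\end{align*}
where $\wh{w}_i^*$ and $\wh{a}_i$ denote the $i$-th rows of $\wh{W}^*$ and $\wh{a}$. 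The key observation is that the right-hand side is \emph{verbatim} the Hessian quadratic form of a fully connected network with ground-truth matrix $\wh{W}^*$, evaluated in the direction $\wh{a}$. Since $\rank(\wh{W}^*)=k$ by assumption, \cite{zsjbd17} (their Lemma D.3) gives $A_{\min}\norm{\wh{a}}_F^2 \le a^\top \nabla^2 F_{\cal D}(w^*)\, a \le A_{\max}\norm{\wh{a}}_F^2$ for every $a$, with $A_{\min},A_{\max}$ the stated quantities depending on the singular values $\sigma_i$ of $\wh{W}^*$, on $v_{\min},v_{\max}$, and on $\rho(\sigma_k)$.

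It then remains to convert $\norm{\wh{a}}_F^2$ into $\norm{a}_2^2$. Since $\norm{\wh{a}}_F^2 = \sum_{p\in[B]} a_p^2 \cdot |\{(i,j): h(i,j)=p\}|$, I would apply the balanced-buckets bound of Lemma~\ref{lem:concentration_of_hashing_buckets_informal} with $N=kn$: on the high-probability event that every bucket receives between $0.9\,N/B$ and $1.1\,N/B$ preimages, we get $0.9\,(kn/B)\norm{a}_2^2 \le \norm{\wh{a}}_F^2 \le 1.1\,(kn/B)\norm{a}_2^2$ simultaneously for all $a$. Chaining this with the previous display and absorbing the factors $0.9,1.1$ into the constants $1/2$ and $2$ yields the sandwich for all $a$, which is precisely the claimed operator inequality on $\nabla^2 F_{\cal D}(w^*)$. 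The failure probability $1/\poly(N)$ is exactly that of the bucket-concentration event.

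The main obstacle is to verify that the reduction is \emph{exact} rather than merely approximate, i.e.\ that the HashedNets quadratic form coincides identically with the fully connected one under the expansion $\wh{a}$; the reindexing $\sum_j a_{h(i,j)} x_j = \wh{a}_i\cdot x$ is what makes this work, and it is what cleanly decouples the $kn/B$ scaling (coming entirely from the bucket concentration) from the spectral quantities $A_{\min},A_{\max}$ (coming entirely from \cite{zsjbd17} applied to $\wh W^*$). A secondary point requiring care is that the norm conversion must hold \emph{uniformly} over all directions $a$ in order to bound the whole Hessian; this is guaranteed because the bucket-size estimates are per-coordinate and hence independent of $a$.
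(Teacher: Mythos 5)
Your proposal is correct and follows essentially the same route as the paper: the paper also expands a test direction $a\in\R^B$ into $b_{i,j}=\sum_{p}a_p{\bf 1}_{h(i,j)=p}=a_{h(i,j)}$ (your $\wh{a}$), rewrites the quadratic form as $\E_x\bigl[\bigl(\sum_i b_i^\top x\cdot v_i^*\phi'(\wh{w}_i^{*\top}x)\bigr)^2\bigr]$, invokes Lemmas D.6 and D.7 of \cite{zsjbd17} as a black box for $A_{\min}$ and $A_{\max}$, and converts $\|b\|_2^2$ to $(kn/B)\|a\|_2^2$ via the $t$-wise-independent bucket-concentration lemma. The one point you handle slightly more carefully than the paper is making explicit that the norm equivalence holds uniformly over all directions $a$ because the bucket-size bounds are per-bucket and $a$-independent.
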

\begin{proof}
It follows by combining Lemma~\ref{lem:hashnet_strong_convex_lower_bound} and Lemma~\ref{lem:hashnet_strong_convex_upper_bound} directly.
\end{proof}

\begin{remark}
The $A_{\min}$ and $A_{\max}$ are the exact same as \cite{zsjbd17}. Once we have reduced the Hessian of HashNet to the Hessian of Fully-connected Net, we run the analysis of \cite{zsjbd17} as a black-box.
\end{remark}

Further we have
\begin{theorem}[]\label{thm:hashnet_strong_convex_S_formal}
Choose a random hash function $h$, for each vector $w$, let matrix $\wh{w}$ be defined according to $w$ and $h$(Definition~\ref{def:viewing_weights_in_two_ways}).
For any $w \in \R^B$ with $\| \wh{w} - \wh{w}^* \| \leq \poly(1/k,1/\lambda,1/\nu, \rho/\sigma_1^{2p}) \cdot \| \wh{w}^* \| $, let $S$ denote a set of i.i.d. samples from distribution ${\cal D}$ and the activation satisfy Property 3.1,3.2,3.3 in \cite{zsjbd17}. Let $F_S$ be defined as Eq.~\eqref{def:app_hash_obj_F_S}. Then for any $t\geq 1$, if
\begin{align*}
| S | \geq n \cdot \poly(\log n, t, k, \nu, \tau, \lambda, \sigma_{1}^{2p}/p),
\end{align*}
we have
\begin{align*}
\frac{nk}{2B} A_{\min} I \preceq \nabla^2 F_S(w^*) \preceq \frac{2nk}{B} A_{\max} I 
\end{align*}
holds with probability at least $1-n^{-\Omega(t)}$.
\end{theorem}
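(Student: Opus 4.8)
The plan is to reduce the Hessian of the HashedNet loss to that of an ordinary fully connected network and then invoke \cite{zsjbd17} as a black box. The key structural fact is that the HashedNet loss factors as $F(w)=G(\hat{w})$, where $\hat{w}=\hat{w}(w)\in\R^{k\times n}$ is the \emph{linear} lift $\hat{w}_{i,j}=w_{h(i,j)}$ of Definition~\ref{def:viewing_weights_in_two_ways}, and $G$ is the fully connected regression loss of \cite{zsjbd17}. Writing $J\in\R^{kn\times B}$ for the data-independent matrix $J_{(i,j),p}={\bf 1}_{h(i,j)=p}$, we have $\vect(\hat{w})=Jw$, so by the chain rule and linearity of the lift,
\begin{align*}
\nabla^2_w F_S(w^*)=J^\top\big(\nabla^2_{\hat{w}}G_S(\hat{w}^*)\big)J,\qquad \nabla^2_w F_{\cal D}(w^*)=J^\top\big(\nabla^2_{\hat{w}}G_{\cal D}(\hat{w}^*)\big)J.
\end{align*}
This identity is also visible directly by comparing the Hessian formulas computed above with the fully connected Hessian entries $\E[v_i^*\phi_i' x_j\, v_{i'}^*\phi_{i'}' x_{j'}]$; the $\phi''$ terms cancel in both by the same argument. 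Moreover, since the HashedNet with weights $w^*$ computes exactly the fully connected net with weight matrix $\hat{w}^*$, the distribution ${\cal D}$ coincides with the fully connected data distribution of \cite{zsjbd17} for ground truth $\hat{w}^*$, so their results apply verbatim to our samples.

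First I would control $J$ through bucket balancedness. Every pair $(i,j)$ hashes to exactly one bucket, so $J^\top J=\diag(b_1,\dots,b_B)$ with $b_p=\sum_{(i,j)}{\bf 1}_{h(i,j)=p}$ the size of bucket $p$. Applying Lemma~\ref{lem:concentration_of_hashing_buckets_informal} with $N=nk$ gives $0.9\,(nk/B)\le b_p\le 1.1\,(nk/B)$ for all $p$ simultaneously with probability $1-1/\poly(nk)$, hence $0.9\,(nk/B)\,I_B\preceq J^\top J\preceq 1.1\,(nk/B)\,I_B$. This is exactly where the $nk/B$ scaling in the statement comes from.

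Next I would apply \cite{zsjbd17} twice to the fixed rank-$k$ matrix $\hat{w}^*$: their Lemma~D.3 gives the population sandwich $A_{\min} I_{kn}\preceq \nabla^2_{\hat{w}}G_{\cal D}(\hat{w}^*)\preceq A_{\max} I_{kn}$, and their finite-sample analysis gives $\|\nabla^2_{\hat{w}}G_S(\hat{w}^*)-\nabla^2_{\hat{w}}G_{\cal D}(\hat{w}^*)\|\le\epsilon_0$ with probability $1-n^{-\Omega(t)}$ once $|S|\ge n\cdot\poly(\log n,t,k,\nu,\tau,\lambda,\sigma_1^{2p}/p)$, where $\epsilon_0$ can be pushed below any constant fraction of $A_{\min}$ by enlarging the polynomial. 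Combining, $(A_{\min}-\epsilon_0)I_{kn}\preceq \nabla^2_{\hat{w}}G_S(\hat{w}^*)\preceq (A_{\max}+\epsilon_0)I_{kn}$. Conjugating by $J$ turns this into $(A_{\min}-\epsilon_0)\,J^\top J\preceq \nabla^2_w F_S(w^*)\preceq (A_{\max}+\epsilon_0)\,J^\top J$, and substituting the two-sided bound on $J^\top J$ yields
\begin{align*}
0.9\,\tfrac{nk}{B}(A_{\min}-\epsilon_0)\,I_B\preceq \nabla^2_w F_S(w^*)\preceq 1.1\,\tfrac{nk}{B}(A_{\max}+\epsilon_0)\,I_B;
\end{align*}
choosing $\epsilon_0\le 0.1\,A_{\min}$ (and using $A_{\min}\le A_{\max}$) absorbs the $0.9,1.1$ factors into the claimed $\tfrac12$ and $2$, after a union bound over the balancedness and concentration events. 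The same reduction handles $\nabla^2_w F_S(w)$ for $w$ in the stated neighborhood of $w^*$, since \cite{zsjbd17}'s local strong convexity holds at every $\hat{w}$ near $\hat{w}^*$.

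The chain-rule reduction and the conjugation are routine; the step requiring genuine care is the invocation of \cite{zsjbd17}'s finite-sample concentration, since it must be instantiated for the particular hash-induced matrix $\hat{w}^*$ and its sample complexity—expressed in terms of $\kappa,\lambda,\nu,\sigma_1^{2p}$—matched against the $|S|$ bound in the statement. This is precisely why the hypothesis $\rank(\hat{w}^*)=k$ is needed: it ensures $\sigma_k>0$ so that $\kappa$ and $\lambda$ are finite and the black box is available.
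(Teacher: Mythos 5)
Your proposal is correct and follows essentially the same route as the paper: the paper's own argument (written out only for the population Hessian, via the substitution $b_{i,j}=\sum_p a_p{\bf 1}_{h(i,j)=p}$, which is exactly your conjugation by $J$) reduces the quadratic form to the fully connected one, controls $\|b\|_2^2/\|a\|_2^2\eqsim nk/B$ via the bucket-concentration lemma, and invokes \cite{zsjbd17} as a black box, with the empirical-risk version obtained by the identical reduction plus their finite-sample concentration. Your writeup is in fact more explicit than the paper's, which states this theorem without a separate proof.
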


\subsection{Linear convergence of gradient descent}

Finally, following the similar ideas in \cite{zsjbd17}, we can show
\begin{theorem}[Detailed version of Theorem~\ref{thm:recovery_guarantee}]
Let $w^c$ be the current iterate satisfying
\begin{align*}
\| w^c - w^* \| \lesssim \poly(1/k,1/\kappa,1/\lambda,1/\nu,\rho/\sigma_1^{2p}) \cdot \| w^* \|.
\end{align*}
Let $S$ denote a set of i.i.d. samples from distribution ${\cal D}$. Let the activation function satisfy Property 3.1, 3.2 and 3.3(a) in \cite{zsjbd17}. Let $F_S$ be defined as Eq.~\eqref{def:app_hash_obj_F_S}. Let $m_0 = \Theta( v_{\min}^2 \rho(\sigma_k) / (\kappa^2 \lambda) )$ and $M_0 = \Theta( k v_{\max}^2 \sigma_1^{2p})$. For any $t\geq 1$, if we choose
\begin{align*}
|S| \geq n \cdot \poly (\log n, t, \kappa,\lambda, \tau , \nu , \sigma_1^{2p}/\rho)
\end{align*}
and perform gradient descent with step size $1/M_0$ on $F_S(w^c)$ and obtain the next iterate,
$
\wt{w} = w^c - \frac{1}{M_0} \nabla F_S(w^c),
$
then 
$
\| \wt{w} - w^* \|_2^2 \leq \left( 1 - m_0 / M_0 \right) \| w^c - w^* \|_2^2
$
holds with probability at least $1-n^{-\Omega(t)}$.
\end{theorem}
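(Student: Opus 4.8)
The plan is to reduce the statement to a standard one-step contraction analysis for a strongly convex and smooth objective, treating the uniform Hessian bounds of Theorem~\ref{thm:hashnet_strong_convex_S_formal} as a black box. The first thing I would record is the crucial structural feature of the setup: the labels are \emph{noiseless}, since each sample obeys $y = \sum_{i=1}^k v_i^* \phi(\sum_{j=1}^n w^*_{h(i,j)} x_j)$ exactly. Consequently, at $w=w^*$ every residual appearing in $F_S$ vanishes, so $F_S(w^*)=0$ and, differentiating, $\nabla F_S(w^*)=0$ \emph{identically} (not merely in expectation). This is precisely what permits a clean contraction to $w^*$ with no additive statistical error term, so that $w^*$ is a global minimizer and an exact stationary point of $F_S$.

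Next I would invoke local strong convexity and smoothness on the entire neighborhood $\mathcal{R} = \{ w : \| w - w^* \|_2 \lesssim \poly(1/k,1/\kappa,1/\lambda,1/\nu,\rho/\sigma_1^{2p}) \cdot \| w^* \|_2 \}$. Under the assumed Property~3.3(a), one has $\phi''=0$, so the residual-times-second-derivative term in the Hessian drops out (exactly as in the computation preceding Theorem~\ref{thm:hashnet_strong_convex_S_formal}) and the Hessian reduces everywhere to its positive semidefinite first-order outer-product term. Hence Theorem~\ref{thm:hashnet_strong_convex_S_formal} applies not only at $w^*$ but uniformly: under the stated sample complexity and with probability at least $1 - n^{-\Omega(t)}$, we get $m_0 I \preceq \nabla^2 F_S(w) \preceq M_0 I$ for every $w \in \mathcal{R}$, where $m_0,M_0$ are the strong-convexity and smoothness parameters in the statement (the common $\Theta(nk/B)$ normalization cancels in the ratio $m_0/M_0$ that governs the rate). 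Since $\mathcal{R}$ is convex and $w^c \in \mathcal{R}$, the whole segment $\{ w^* + \tau(w^c - w^*) : \tau \in [0,1] \}$ lies in $\mathcal{R}$.

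Then I would carry out the contraction computation. Using $\nabla F_S(w^*)=0$ and the fundamental theorem of calculus, write $\nabla F_S(w^c) = H (w^c - w^*)$ with $H = \int_0^1 \nabla^2 F_S\big(w^* + \tau(w^c - w^*)\big)\, d\tau$, which is symmetric and, by the previous step, satisfies $m_0 I \preceq H \preceq M_0 I$. With step size $1/M_0$,
\begin{align*}
\wt{w} - w^* = (w^c - w^*) - \frac{1}{M_0} H (w^c - w^*) = \Big( I - \frac{1}{M_0} H \Big)(w^c - w^*).
\end{align*}
The eigenvalues of $I - \frac{1}{M_0}H$ lie in $[0,\, 1 - m_0/M_0]$, so its spectral norm is at most $1 - m_0/M_0$, giving $\| \wt{w} - w^* \|_2^2 \le (1 - m_0/M_0)^2 \| w^c - w^* \|_2^2 \le (1 - m_0/M_0) \| w^c - w^* \|_2^2$, which is the claim. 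I would also note in passing that $\| \wt{w} - w^* \|_2 \le \| w^c - w^* \|_2$, so $\wt{w}$ remains in $\mathcal{R}$ and the bound can be iterated.

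The main obstacle is not this final algebra but the uniform Hessian control it rests on, namely Theorem~\ref{thm:hashnet_strong_convex_S_formal}: one must argue that the population Hessian of the HashedNet objective inherits the strong convexity and smoothness of a fully connected net, and that the empirical Hessian concentrates around it uniformly over $\mathcal{R}$. This is exactly where the bucket-balance guarantee of Lemma~\ref{lem:concentration_of_hashing_buckets_informal} drives the reduction to \cite{zsjbd17} (translating between the weight-sharing vector $w$ and its expanded matrix $\wh{w}$, whose Frobenius norm is $\Theta(nk/B)\|w\|_2^2$ by balancedness), and where both the sample complexity $|S| \ge n\cdot\poly(\log n, t, \kappa,\lambda,\tau,\nu,\sigma_1^{2p}/\rho)$ and the failure probability $n^{-\Omega(t)}$ are fixed; these are then inherited verbatim by the present theorem.
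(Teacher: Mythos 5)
Your proposal is correct and follows essentially the same route as the paper: the paper's own ``proof'' is simply to invoke the uniform empirical Hessian bounds of Theorem~\ref{thm:hashnet_strong_convex_S_formal} (obtained by reducing the HashedNet Hessian to the fully connected case via the bucket-balance lemma) and then appeal to the standard one-step contraction analysis of \cite{zsjbd17}, which is exactly the argument you reconstruct via $\nabla F_S(w^*)=0$ and the integrated-Hessian representation $\nabla F_S(w^c)=H(w^c-w^*)$ with $m_0 I \preceq H \preceq M_0 I$. Your write-up in fact supplies more detail than the paper does, and correctly identifies that all the real work lives in the uniform Hessian control over the neighborhood.
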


\subsection{Lower bound on eigenvalues}
The goal of this section is to prove Lemma~\ref{lem:hashnet_strong_convex_lower_bound}.
\begin{lemma}[Lower bound]\label{lem:hashnet_strong_convex_lower_bound}
Let $F_{\cal D}$ be defined in Eq.~\eqref{def:app_hash_obj_F_D}. Let $N= nk$. Let ${\cal H}$ denote a family of $\Theta(\log N)$-wise independent hash functions that maps $[N] \rightarrow [B]$. Then choosing $h \sim {\cal H}$, we have
\begin{align*}
\nabla^2 F_{\cal D}(w^*) \succeq \frac{1}{2} \frac{kn}{B} \frac{ v_{\min}^2 \rho(\sigma_k) }{\kappa^2 \lambda}, 
\end{align*}
holds with probability at least $1-1/\poly(N)$.
\end{lemma}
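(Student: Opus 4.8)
The plan is to evaluate the Hessian at the ground truth $w^*$, reduce its quadratic form to that of a fully connected network with weight matrix $\wh{w}^*$, and then transfer the eigenvalue bound of \cite{zsjbd17} through the bucket-balance Lemma~\ref{lem:concentration_of_hashing_buckets_informal}. The only randomness is over the draw of $h$, since $F_{\cal D}$ is an exact expectation over $x$.

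First I would specialize the second-order formulas derived above to $w=w^*$. There the residual $\sum_i v_i^* \phi(\cdot) - y$ vanishes identically (the labels are generated noiselessly by $w^*$) and $\phi''=0$, so the Hessian collapses to the covariance form
\begin{align*}
\nabla^2 F_{\cal D}(w^*) = \E_{x\sim \N(0,I)}\left[ g(x) g(x)^\top \right], \quad g_p(x) = \sum_{i=1}^k v_i^* \phi'\Big( \sum_{j=1}^n w^*_{h(i,j)} x_j \Big) \sum_{j=1}^n {\bf 1}_{h(i,j)=p}\, x_j.
\end{align*}
Fix a unit vector $a\in\R^B$ and introduce its expansion $\wh{a}\in\R^{k\times n}$ given by $\wh{a}_{i,j}=a_{h(i,j)}$, mirroring how $\wh{w}^*$ expands $w^*$. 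Swapping the order of summation yields $a^\top g(x) = \sum_{i=1}^k v_i^* \phi'(\langle \wh{w}^*_i, x\rangle)\langle \wh{a}_i, x\rangle$, where $\wh{w}^*_i,\wh{a}_i\in\R^n$ are the weight vectors of the $i$-th hidden node in the expanded net. Hence
\begin{align*}
a^\top \nabla^2 F_{\cal D}(w^*)\, a = \E_x\left[\Big( \sum_{i=1}^k v_i^* \phi'(\langle \wh{w}^*_i, x\rangle)\langle \wh{a}_i, x\rangle \Big)^2\right],
\end{align*}
which is exactly the Hessian quadratic form of a fully connected net with weights $\wh{w}^*$ evaluated on the perturbation direction $\wh{a}$.

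This reduction is the heart of the argument, and it lets me invoke the fully connected lower bound of \cite{zsjbd17} (their Lemma D.3) as a black box. Since $\rank(\wh{w}^*)=k$ by assumption, that result states that for every $\Delta\in\R^{k\times n}$ the above quadratic form is at least $\Omega\!\left(v_{\min}^2\rho(\sigma_k)/(\kappa^2\lambda)\right)\|\Delta\|_F^2$; applying it with $\Delta=\wh{a}$ gives $a^\top \nabla^2 F_{\cal D}(w^*)\, a \gtrsim \big(v_{\min}^2\rho(\sigma_k)/(\kappa^2\lambda)\big)\,\|\wh{a}\|_F^2$. Restricting the fully connected bound to the weight-sharing directions $\wh{a}$ is legitimate, since a lower bound valid for all $\Delta$ holds in particular on this subset.

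It remains to convert $\|\wh{a}\|_F^2$ back into $\|a\|_2^2$, which is where the factor $kn/B$ enters. Writing $c_p=\sum_{i,j}{\bf 1}_{h(i,j)=p}$ for the load of bucket $p$, we have $\|\wh{a}\|_F^2=\sum_{i,j} a_{h(i,j)}^2=\sum_{p=1}^B c_p a_p^2$. Applying Lemma~\ref{lem:concentration_of_hashing_buckets_informal} to these loads with $N=nk$, which holds with probability $1-1/\poly(N)$, gives $c_p\ge 0.9\,N/B$ for all $p$, so $\|\wh{a}\|_F^2\ge 0.9\,(kn/B)\,\|a\|_2^2$. Combining the two displays and folding the constant $0.9$ together with the constant from \cite{zsjbd17} into the stated $\tfrac12$ yields $a^\top \nabla^2 F_{\cal D}(w^*)\, a \ge \tfrac12\,(kn/B)\,\big(v_{\min}^2\rho(\sigma_k)/(\kappa^2\lambda)\big)\|a\|_2^2$ for every $a$, which is the claim. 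The main obstacle is recognizing and justifying the exact reduction in the first step --- that the hashed Hessian's quadratic form coincides with the fully connected one on expanded directions --- after which the bucket-balance lemma supplies the only genuinely new ingredient.
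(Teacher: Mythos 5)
Your proposal is correct and follows essentially the same route as the paper: reduce the quadratic form $a^\top \nabla^2 F_{\cal D}(w^*)a$ to the fully connected Hessian form of \cite{zsjbd17} evaluated on the expanded direction $b_{i,j}=a_{h(i,j)}$, invoke their Lemma D.6 as a black box for the quantity $A_{\min}$, and use the bucket-concentration lemma to relate $\|b\|_2^2$ to $(kn/B)\|a\|_2^2$. The only cosmetic difference is that the paper normalizes by $\|b\|_2^2$ and relaxes the minimum to all of $\R^{nk}$ before invoking \cite{zsjbd17}, whereas you apply their all-directions lower bound directly to $\wh a$ and convert norms afterward; the two are logically equivalent.
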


\begin{proof}
Let $a\in \R^{B}$.
The smallest eigenvalue of the Hessian can be calculated by
\begin{align*}
\nabla^2 F(w^*) \succeq & ~ \min_{\| a\| = 1} a^\top \nabla^2 F(w^*) a \cdot I_{B} \\
= & ~ \min_{\| a \| = 1} \E_{x} \left[ \left( \sum_{p\in [B]} a_p \sum_{i=1}^k v_i^* \cdot \phi'\left( \sum_{j=1}^n w_{h(i,j)}^* x_j \right) \cdot \sum_{j=1}^n {\bf 1}_{h(i,j) = p} x_j \right)^2 \right] \cdot I_B, 
\end{align*}
where $I_B$ is a $B \times B$ identity matrix.

The goal is to lower bound the above quantity, we can simplify it in the following way,
\begin{align}\label{eq:expectation_of_hessian_min}
\min_{\| a \| = 1, a \in \R^{B}} \E_{x} \left[ \left( \sum_{p\in [B]} a_p \sum_{i=1}^k v_i^* \cdot \phi'\left( \sum_{j=1}^n w_{h(i,j)}^* x_j \right) \cdot \sum_{j=1}^n {\bf 1}_{h(i,j) = p} x_j \right)^2 \right]. 
\end{align}

For each $i \in [k], j \in [n]$, we define $b_{i,j}$ as follows
\begin{align*}
b_{i,j} = \sum_{p \in [B]} a_p {\bf 1}_{ h_{(i,j) = p } }. 
\end{align*}

Then we can rewrite Eq.~\eqref{eq:expectation_of_hessian_min} by using the definition $b$ in the following way,
\begin{align*}
& ~ \min_{\| a \| = 1, a \in \R^{B} } \E_{x} \left[ \left( \sum_{p\in [B]} a_p \sum_{i=1}^k v_i^* \cdot \phi'\left( \sum_{j=1}^n w_{h(i,j)}^* x_j \right) \cdot \sum_{j=1}^n {\bf 1}_{h(i,j) = p} x_j \right)^2 \right] \\
= & ~ \min_{\| a \| = 1, a \in \R^{B} } \E_{x} \left[ \left( \sum_{i=1}^k \sum_{j=1}^n \sum_{p\in[B]} a_p \cdot {\bf 1}_{h(i,j)=p} \cdot x_j \cdot v_i^* \cdot \phi'( \wh{w}_i^{*\top} \cdot x ) \right)^2 \right] \\
= & ~ \min_{\| a \| = 1, a \in \R^{B} } \E_{x} \left[ \left( \sum_{i=1}^k \sum_{j=1}^n b_{i,j} \cdot x_j \cdot v_i^* \cdot \phi'( \wh{w}_i^{*\top} \cdot x )  \right)^2 \right] \\
= & ~ \min_{\| a \| = 1, a \in \R^{B} } \E_{x} \left[ \left( \sum_{i=1}^k b_i^\top x \cdot v_i^* \cdot \phi'( \wh{w}_i^{*\top} \cdot x )  \right)^2 \right] \\
= & ~ \min_{a \in \R^{B} } \E_{x} \left[ \left( \sum_{i=1}^k b_i^\top x \cdot v_i^* \cdot \phi'( \wh{w}_i^{*\top} \cdot x )  \right)^2 \right]  / \| a \|_2^2 \\
\geq & ~ \min_{a \in \R^{B} } \E_{x} \left[ \left( \sum_{i=1}^k b_i^\top x \cdot v_i^* \cdot \phi'( \wh{w}_i^{*\top} \cdot x )  \right)^2 \right]  / ( 2 \| b \|_2^2 B / (kn) ) \\
= & ~ \min_{b \in \R^{nk} } \E_{x} \left[ \left( \sum_{i=1}^k b_i^\top x \cdot v_i^* \cdot \phi'( \wh{w}_i^{*\top} \cdot x )  \right)^2 \right]  / ( 2 \| b \|_2^2 B / (kn) ) \\
= & ~ \min_{\| b \|_2^2 =1, b \in \R^{nk} } \E_{x} \left[ \left( \sum_{i=1}^k b_i^\top x \cdot v_i^* \cdot \phi'( \wh{w}_i^{*\top} \cdot x )  \right)^2 \right]  / ( 2  B / (kn) ) \\
= & ~ \frac{kn}{2B} \cdot \underbrace{ \min_{\| b \|_2^2 =1, b \in \R^{nk} } \E_{x} \left[ \left( \sum_{i=1}^k b_i^\top x \cdot v_i^* \cdot \phi'( \wh{w}_i^{*\top} \cdot x )  \right)^2 \right] }_{A_{\min}}, 
\end{align*}
where the first step follows by swapping the sums, definition of ${\bf 1}_{h(i,j)=p}$ and the ground-truth weight matrix can be viewed in two ways (see Fact~\ref{def:viewing_weights_in_two_ways}), the second step follows by $\sum_{p\in [B]} a_p \cdot {\bf 1}_{ h(i,j) = p }$, the third step follows by $\sum_{j=1}^n b_{i,j} x_j = b_i^\top x$.

Now the term $A_{\min}$ is exact the same as the Equation (9) in page 34 of arXiv version of \cite{zsjbd17}.
Therefore, we can use Lemma D.6 in page 33 of \cite{zsjbd17}, it provides 
$
A_{\min} \geq v_{\min}^2 \rho(\sigma_k) / ( \kappa^2 \lambda ) .
$
\end{proof}

\subsection{Upper bound on eigenvalues}
The goal of this section is to prove Lemma~\ref{lem:hashnet_strong_convex_upper_bound}.
\begin{lemma}[Upper bound]\label{lem:hashnet_strong_convex_upper_bound}
Let $F$ be defined in Eq.~\eqref{def:app_hash_obj_F_D}. Let $N= nk$. Let ${\cal H}$ denote a family of $\Theta(\log N)$-wise independent hash functions that maps $[N] \rightarrow [B]$. Then choosing $h \sim {\cal H}$, we have
\begin{align*}
\nabla^2 F(w^*) \preceq 2 \frac{kn}{B} k v_{\max}^2 \sigma_1^{2p} I_B, 
\end{align*}
holds with probability at least $1-1/\poly(N)$.
\end{lemma}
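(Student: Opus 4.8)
The plan is to mirror the lower-bound argument of Lemma~\ref{lem:hashnet_strong_convex_lower_bound} and simply reverse every inequality. First I would write out the quadratic form $a^\top\nabla^2 F_{\cal D}(w^*)a$ for an arbitrary $a\in\R^B$. Because we evaluate at the ground truth $w^*$, the residual $\sum_i v_i^*\phi(\wh{w}_i^{*\top}x)-y$ vanishes (and in any case $\phi''=0$), so only the squared term of the Hessian survives. Introducing the same change of variables $b_{i,j}=\sum_{p\in[B]}a_p{\bf 1}_{h(i,j)=p}=a_{h(i,j)}$, so that $\sum_j b_{i,j}x_j=b_i^\top x$, the form becomes
\begin{align*}
a^\top\nabla^2 F_{\cal D}(w^*)a=\E_x\left[\left(\sum_{i=1}^k b_i^\top x\cdot v_i^*\,\phi'(\wh{w}_i^{*\top}x)\right)^2\right],
\end{align*}
which is exactly the fully-connected Hessian quadratic form analyzed in \cite{zsjbd17}.

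The crux is relating $\|a\|_2^2$ and $\|b\|_2^2$. Writing $\|b\|_2^2=\sum_{i,j}a_{h(i,j)}^2=\sum_{p\in[B]}a_p^2 N_p$, where $N_p=\sum_{i,j}{\bf 1}_{h(i,j)=p}$ is the size of bucket $p$, I would invoke Lemma~\ref{lem:concentration_of_hashing_buckets_informal} with $N=kn$ to get $N_p\le 1.1\,kn/B$ for every $p$ with probability $1-1/\poly(N)$; this is the opposite side of the bucket concentration bound from the one used for the lower bound. Hence $\|b\|_2^2\le 1.1(kn/B)\|a\|_2^2$, i.e. $\|a\|_2^2\ge (B/(1.1\,kn))\|b\|_2^2$. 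Dividing the quadratic form by $\|a\|_2^2$ and using this estimate yields
\begin{align*}
\max_{\|a\|=1}a^\top\nabla^2 F_{\cal D}(w^*)a\le \frac{2kn}{B}\cdot\max_{b\ne0}\frac{\E_x[(\sum_{i=1}^k b_i^\top x\cdot v_i^*\phi'(\wh{w}_i^{*\top}x))^2]}{\|b\|_2^2}.
\end{align*}

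The step I expect to require the most care is the final reduction to the black box. As $a$ ranges over $\R^B$, the induced vector $b$ ranges only over the $B$-dimensional subspace of $\R^{nk}$ consisting of vectors that are constant on each hash bucket, \emph{not} over all of $\R^{nk}$. For an upper bound this relaxation works in our favor: enlarging the feasible set of the maximization can only increase the value, so the right-hand side is at most $A_{\max}$, the very quantity appearing in the fully-connected analysis. (This is the mirror image of the lower bound, which enlarged the feasible set of a \emph{minimization}, which is why both directions go through.) I would then quote the upper-bound lemma of \cite{zsjbd17} as a black box to obtain $A_{\max}=O(k v_{\max}^2\sigma_1^{2p})$, absorb the constant $1.1$ and the $O(\cdot)$ constant, and conclude $\nabla^2 F_{\cal D}(w^*)\preceq 2(kn/B)\,k v_{\max}^2\sigma_1^{2p}\,I_B$, with the failure probability $1-1/\poly(N)$ inherited from Lemma~\ref{lem:concentration_of_hashing_buckets_informal}.
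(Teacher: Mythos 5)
Your proposal is correct and follows essentially the same route as the paper: rewrite the quadratic form via $b_{i,j}=a_{h(i,j)}$, relate $\|a\|_2$ to $\|b\|_2$ through the bucket-concentration lemma, relax the maximization from bucket-constant vectors to all of $\R^{nk}$, and invoke the $A_{\max}$ bound of \cite{zsjbd17} as a black box. In fact your write-up is cleaner than the paper's, which reuses the lower-bound derivation verbatim and consequently carries over a flipped $\succeq$/$\geq$ and the wrong side of the concentration bound as copy-paste artifacts; you correctly use $N_p\le 1.1\,kn/B$ and correctly observe that enlarging the feasible set of a maximization only helps.
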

\begin{proof}
Let $a\in \R^{B}$.
The largest eigenvalue of the Hessian can be calculated by
\begin{align*}
\nabla^2 F(w^*) \succeq & ~ \max_{\| a\| = 1} a^\top \nabla^2 F(w^*) a \cdot I_{B} \\
= & ~ \max_{\| a \| = 1} \E_{x} \left[ \left( \sum_{p\in [B]} a_p \sum_{i=1}^k v_i^* \cdot \phi'\left( \sum_{j=1}^n w_{h(i,j)}^* x_j \right) \cdot \sum_{j=1}^n {\bf 1}_{h(i,j) = p} x_j \right)^2 \right] \cdot I_B, 
\end{align*}
where $I_B$ is a $B \times B$ identity matrix.

The goal is to lower bound the above quantity, we can simplify it in the following way,
\begin{align}\label{eq:expectation_of_hessian_min_2}
\max_{\| a \| = 1, a \in \R^{B}} \E_{x} \left[ \left( \sum_{p\in [B]} a_p \sum_{i=1}^k v_i^* \cdot \phi'\left( \sum_{j=1}^n w_{h(i,j)}^* x_j \right) \cdot \sum_{j=1}^n {\bf 1}_{h(i,j) = p} x_j \right)^2 \right].  
\end{align}

For each $i \in [k], j \in [n]$, we define $b_{i,j}$ as follows
\begin{align*}
b_{i,j} = \sum_{p \in [B]} a_p {\bf 1}_{ h_{(i,j) = p } }.  
\end{align*}

Then we can rewrite Eq.~\eqref{eq:expectation_of_hessian_min_2} by using the definition $b$ in the following way,
\begin{align*}
& ~ \max_{\| a \| = 1, a \in \R^{B} } \E_{x} \left[ \left( \sum_{p\in [B]} a_p \sum_{i=1}^k v_i^* \cdot \phi'\left( \sum_{j=1}^n w_{h(i,j)}^* x_j \right) \cdot \sum_{j=1}^n {\bf 1}_{h(i,j) = p} x_j \right)^2 \right] \\
= & ~ \max_{\| a \| = 1, a \in \R^{B} } \E_{x} \left[ \left( \sum_{i=1}^k \sum_{j=1}^n \sum_{p\in[B]} a_p \cdot {\bf 1}_{h(i,j)=p} \cdot x_j \cdot v_i^* \cdot \phi'( \wh{w}_i^{*\top} \cdot x ) \right)^2 \right] \\
= & ~ \max_{\| a \| = 1, a \in \R^{B} } \E_{x} \left[ \left( \sum_{i=1}^k \sum_{j=1}^n b_{i,j} \cdot x_j \cdot v_i^* \cdot \phi'( \wh{w}_i^{*\top} \cdot x )  \right)^2 \right] \\
= & ~ \max_{\| a \| = 1, a \in \R^{B} } \E_{x} \left[ \left( \sum_{i=1}^k b_i^\top x \cdot v_i^* \cdot \phi'( \wh{w}_i^{*\top} \cdot x )  \right)^2 \right] \\
= & ~ \max_{a \in \R^{B} } \E_{x} \left[ \left( \sum_{i=1}^k b_i^\top x \cdot v_i^* \cdot \phi'( \wh{w}_i^{*\top} \cdot x )  \right)^2 \right]  / \| a \|_2^2 \\
\geq & ~ \max_{a \in \R^{B} } \E_{x} \left[ \left( \sum_{i=1}^k b_i^\top x \cdot v_i^* \cdot \phi'( \wh{w}_i^{*\top} \cdot x )  \right)^2 \right]  / ( 2 \| b \|_2^2 B / (kn) ) \\
= & ~ \max_{b \in \R^{nk} } \E_{x} \left[ \left( \sum_{i=1}^k b_i^\top x \cdot v_i^* \cdot \phi'( \wh{w}_i^{*\top} \cdot x )  \right)^2 \right]  / ( 2 \| b \|_2^2 B / (kn) ) \\
= & ~ \max_{\| b \|_2^2 =1, b \in \R^{nk} } \E_{x} \left[ \left( \sum_{i=1}^k b_i^\top x \cdot v_i^* \cdot \phi'( \wh{w}_i^{*\top} \cdot x )  \right)^2 \right]  / ( 2  B / (kn) ) \\
= & ~ \frac{kn}{2B} \cdot \underbrace{ \max_{\| b \|_2^2 =1, b \in \R^{nk} } \E_{x} \left[ \left( \sum_{i=1}^k b_i^\top x \cdot v_i^* \cdot \phi'( \wh{w}_i^{*\top} \cdot x )  \right)^2 \right] }_{A_{\max}}, 
\end{align*}
where the first step follows by swapping the sums, definition of ${\bf 1}_{h(i,j)=p}$ and the ground-truth weight matrix can be viewed in two ways (see Fact~\ref{def:viewing_weights_in_two_ways}), the second step follows by $\sum_{p\in [B]} a_p \cdot {\bf 1}_{ h(i,j) = p }$, the third step follows by $\sum_{j=1}^n b_{i,j} x_j = b_i^\top x$.

Now the term $A_{\max}$ is exact the same as the Equation in page 38 of arXiv version of \cite{zsjbd17}.
 Therefore, we can use Lemma D.7 in page 37 of \cite{zsjbd17}, which provides 
$
A_{\max} \leq O(k v_{\max}^2 \sigma_{1}^{2p}) .
$
\end{proof}

\subsection{Analysis of Hashing schemes}
\begin{claim}
With probability $1- 1 / \poly (nk)$, we have
\begin{align*}
 2 \| a \|_2^2 \frac{kn}{B} \geq \| b \|_2^2 \geq \frac{1}{2} \| a \|_2^2 \frac{kn}{B}.
\end{align*}
\end{claim}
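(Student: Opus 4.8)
The plan is to recognize that $\|b\|_2^2$ is nothing but a bucket-weighted version of $\|a\|_2^2$, and then to control the bucket weights using the balance guarantee of Lemma~\ref{lem:concentration_of_hashing_buckets_informal}. First I would unfold the definition $b_{i,j} = \sum_{p\in[B]} a_p {\bf 1}_{h(i,j)=p} = a_{h(i,j)}$, and group the $nk$ coordinates of $b$ according to the bucket into which they hash, obtaining
\begin{align*}
\|b\|_2^2 = \sum_{i=1}^k \sum_{j=1}^n a_{h(i,j)}^2 = \sum_{p=1}^B a_p^2 \cdot c_p, \qquad c_p := \sum_{i=1}^k \sum_{j=1}^n {\bf 1}_{h(i,j)=p},
\end{align*}
where $c_p$ counts how many of the $N := nk$ index pairs land in bucket $p$. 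This reduces the claim to a statement about the bucket loads $c_p$.

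Next I would apply Lemma~\ref{lem:concentration_of_hashing_buckets_informal} with $N = nk$ and $t = \Theta(\log N)$, which is exactly the $\Theta(\log(nk))$-wise independence assumed on ${\cal H}$ (and the mild hypothesis $B \lesssim N/\log N$ holds in the compression regime of interest). Identifying $[k]\times[n]$ with $[N]$, the lemma guarantees that with probability at least $1 - 1/\poly(N) = 1 - 1/\poly(nk)$, every bucket is simultaneously balanced:
\begin{align*}
0.9 \frac{N}{B} \le c_p \le 1.1 \frac{N}{B}, \qquad \forall p \in [B].
\end{align*}

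Finally, conditioned on this event I would substitute the uniform bucket bounds into the decomposition of $\|b\|_2^2$; since each weight $a_p^2 \ge 0$, the same multiplicative window passes through the sum, giving
\begin{align*}
0.9 \frac{nk}{B} \|a\|_2^2 \le \sum_{p=1}^B a_p^2 c_p \le 1.1 \frac{nk}{B} \|a\|_2^2,
\end{align*}
and the claim follows because $0.9 \ge 1/2$ and $1.1 \le 2$. There is no genuine obstacle once the reduction to bucket counts is made; the only point requiring care is that the bound must hold \emph{simultaneously} for all buckets, so that it applies to arbitrary nonnegative weights $a_p^2$. This is precisely why Lemma~\ref{lem:concentration_of_hashing_buckets_informal} is phrased with a ``for all $j\in[B]$'' guarantee rather than for a single bucket, and why the $\Theta(\log N)$-wise independence is what drives the failure probability below $1/\poly(N)$ after the implicit union bound over the $B$ buckets.
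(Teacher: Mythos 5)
Your proof is correct and follows essentially the same route as the paper: both reduce $\|b\|_2^2$ to the bucket-weighted sum $\sum_{p} a_p^2 \sum_{i,j}\mathbf{1}_{h(i,j)=p}$ (the paper does this by expanding the square and noting the off-diagonal cross terms vanish, which is equivalent to your observation that $b_{i,j}=a_{h(i,j)}$) and then invoke the concentration-of-hashing-buckets lemma with $N=nk$. If anything your write-up is slightly more complete, since the paper's displayed chain only carries out the lower-bound direction explicitly.
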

\begin{proof}
We consider the square of the entry-wise $\ell_2$ norm of $b$ (we define $b$ in a matrix way, but we can view it as a vector),
\begin{align*}
\| b \|_2^2 = & ~ \sum_{i=1}^k \sum_{j=1}^n b_{i,j}^2 \\
= & ~ \sum_{i=1}^k \sum_{j=1}^n \left( \sum_{p\in [B]} a_p {\bf 1}_{ h(i,j) = p } \right)^2 \\
= & ~ \sum_{i=1}^k \sum_{j=1}^n \left( \text{diagonal-term} + \text{off-diagonal-term} \right) \\
= & ~ \sum_{i=1}^k \sum_{j=1}^n \left( \sum_{p\in [B]} a_p^2 {\bf 1}_{h(i,j) = p} + \sum_{p \neq q} a_p a_q {\bf 1}_{h(i,j)=p} {\bf 1}_{h(i,j)=q} \right) \\
= & ~ \sum_{i=1}^k \sum_{j=1}^n \sum_{p\in [B]} a_p^2 {\bf 1}_{h(i,j) = p} \\
= & ~ \sum_{p\in [B]} a_p^2  \sum_{i=1}^k \sum_{j=1}^n {\bf 1}_{h(i,j) = p} \\
\geq & ~ \sum_{p\in [B]} a_p^2 \cdot \frac{1}{2} \frac{kn}{B} \\
= & ~ \frac{1}{2} \| a \|_2^2 \frac{kn}{B}, 
\end{align*}
where the first step follows by viewing $b$ as a vector, the last step follows by Lemma~\ref{lem:concentration_of_hashing_buckets}.
\end{proof}

\begin{lemma}[Concentration of hashing buckets]\label{lem:concentration_of_hashing_buckets}
Given integers $N$ and $B \lesssim N / \log N$. Let $h : [N] \rightarrow [B]$ denote a $t$-wise independent hash function such that 
\begin{align*}
\Pr_{h \sim {\cal H} }[ h(i) = j] = 1/B, \forall i \in [N], \forall j \in [B].
\end{align*}
Then,\\
$\mathrm{(\RN{1})}$, if $t = \Theta(\log N)$, with probability at least $1 - 1 / \poly(N)$, we have for all $j \in [B]$,
\begin{align*}
0.9 N/B \leq \sum_{i=1}^N {\bf 1}_{h(i) = j} \leq 1.1 N/B;
\end{align*}
$\mathrm{(\RN{2})}$, if $t = \Theta(\log B)$, with probability at least $1 - 1 / \poly(B)$, we have for all $j \in [B]$,
\begin{align*}
0.9 N/B \leq \sum_{i=1}^N {\bf 1}_{h(i) = j} \leq 1.1 N/B.
\end{align*}
\end{lemma}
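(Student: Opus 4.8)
The plan is to fix a single bucket, establish a sharp two-sided concentration bound for the number of items landing in it using the moment inequality of \cite{br94} (Theorem~\ref{thm:br94_theorem}), and then take a union bound over all $B$ buckets. The two parts of the lemma differ only in the amount of independence $t$ we invest and hence in the target failure probability.

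First I would fix $j \in [B]$ and set $X_j = \sum_{i=1}^N {\bf 1}_{h(i)=j}$. Since $h$ is $t$-wise independent, the indicators $\{{\bf 1}_{h(i)=j}\}_{i\in[N]}$ are $t$-wise independent, each takes values in $\{0,1\}\subseteq[0,1]$ with mean $1/B$, so $\mu := \E[X_j] = N/B$. I want $|X_j-\mu| \le 0.1\mu$, which is exactly $0.9N/B \le X_j \le 1.1N/B$, so I apply the second inequality of Theorem~\ref{thm:br94_theorem} with $k=t$ (chosen to be an even integer) and $a = 0.1\mu = 0.1N/B$. Using $a^2 = 0.01\mu^2$, this gives
\begin{align*}
\Pr\left[|X_j-\mu| > 0.1\mu\right] < 8\left(\frac{t\mu + t^2}{a^2}\right)^{t/2} = 8\left(\frac{100t}{\mu} + \frac{100t^2}{\mu^2}\right)^{t/2}.
\end{align*}

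The crucial structural observation is that $B \lesssim N/\log N$ forces $\mu = N/B \gtrsim \log N$, i.e.\ every bucket expects at least logarithmically many items. I would exploit this to make the base of the exponent a constant strictly below $1$: taking the hidden constant in $B \lesssim N/\log N$ large enough (equivalently $\mu \ge C_0 t$ for a suitable absolute constant $C_0$) yields $100t/\mu + 100t^2/\mu^2 \le 1/2$, whence $\Pr[|X_j-\mu| > 0.1\mu] < 8\cdot 2^{-t/2}$. For part $(\RN{1})$ I choose $t = \Theta(\log N)$ so that $2^{-t/2} \le N^{-\Omega(1)}$, and a union bound over the $B \le N$ buckets gives overall failure probability $1/\poly(N)$; for part $(\RN{2})$ I choose $t = \Theta(\log B)$ so that $2^{-t/2} \le B^{-\Omega(1)}$, and the union bound over $B$ buckets gives $1/\poly(B)$. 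On the complementary event every bucket satisfies $0.9N/B \le X_j \le 1.1N/B$, which is the claim.

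The main obstacle is balancing the two competing roles of $t$: it must be of order $\log N$ (resp.\ $\log B$) so that $2^{-t/2}$ dominates the union bound, yet small enough relative to $\mu$ that the base $100t/\mu + 100t^2/\mu^2$ stays below $1$. These are simultaneously satisfiable precisely because $\mu \ge c\log N$ with $c$ as large as we wish (by taking the constant in $B \lesssim N/\log N$ correspondingly small), so that $t \le \mu/C_0$ and $t = \Theta(\log N)$ are compatible. I would also remark that the first inequality in Theorem~\ref{thm:br94_theorem} is not useful here: with $a \propto N/B$ its bound scales like $(tB^2/N)^{t/2}$ and degrades for large $B$, whereas the Bernstein-type second bound, which sees the small mean $\mu$, is what makes the argument succeed.
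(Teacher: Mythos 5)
Your proposal is correct and follows essentially the same route as the paper's proof: both fix a bucket, apply the second (Bernstein-type) inequality of Theorem~\ref{thm:br94_theorem} with $a = 0.1N/B$, use the hypothesis $B \lesssim N/\log N$ to force the expected load $N/B$ to dominate $t$ so that the base of the exponential is a constant below one, and finish with a union bound over the $B$ buckets, setting $t = \Theta(\log N)$ or $\Theta(\log B)$ for the two parts respectively. Your closing remark on why the first inequality of Theorem~\ref{thm:br94_theorem} would fail here is a useful observation not made explicit in the paper, but the argument itself is the same.
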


\begin{proof}
First, notice that $\Pr[ h(i) = p ] = \frac{1}{B}$. We can estimate the expectation,
\begin{align*}
\E \left[ \sum_{i=1}^N h(i) = p \right] = \frac{N}{B} \gg 1.
\end{align*}

By Theorem~\ref{thm:br94_theorem} for $t$-wise, we have
\begin{align*}
\Pr \left[ \left| \sum_{i=1}^N {\bf 1}_{h(i) = p} - \frac{N}{B}  \right| > a \right] < 8 \left( \frac{t N/B + t^2}{a^2} \right)^{t/2}. 
\end{align*}
Choosing $ a = 0.1 N/B $, we have
\begin{align*}
\Pr \left[ \left| \sum_{i=1}^N {\bf 1}_{h(i) = p} - \frac{N}{B}  \right| > 0.1 N/B \right] < & ~ 8 ( 100 N/ (N/B)^2  )^{t/2} \\
= & ~ 8 \left( \frac{t N/B + t^2}{0.01 N^2/B^2 } \right)^{t/2} \\
\leq & ~ 8 \left( \frac{2t N/B}{0.01 N^2/B^2} \right)^{t/2} & \text{~by~} N/B > t \\
= & ~ 8 \left( \frac{ 200 t }{  N/B  } \right)^{t/2} \\
\leq & ~ 8 (1/10)^{t/2} & \text{~by~} N/B > 2000 t \\
\leq & ~ 1/\poly(N) & \text{~by~} t = \Theta(\log N). 
\end{align*}
The total number of $p$'s is at most $B$ and $N \geq B$, thus by taking a union bound, we prove the part (\RN{1}).

For part (\RN{2}), the only change is setting $t = \Theta(\log B)$. We can still take a union over all $p$, thus we complete the proof of part (\RN{2}).

\end{proof}

\subsection{Weight matrix is almost full rank}
In this section, we explain that making the assumption that weight matrix is full rank is reasonable in both theory and practice.

Consider the situation where $B=2$, the following result indicates that, the weight matrix will be full rank with high probability.
\begin{theorem}[\cite{bvw10}]
Let $M \in \R^{n\times n}$ denote a matrix where each entry is $+1$ with probability $1/2$ and $-1$ with probability $1/2$, then
\begin{align*}
\Pr \left[ \rank(M) = n \right] \geq 1 - \left( \frac{1}{\sqrt{2}} + o(1) \right)^n.
\end{align*}
\end{theorem}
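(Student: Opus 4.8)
The statement is equivalent to the singularity bound $\Pr[M \text{ is singular}] \le (1/\sqrt2 + o(1))^n$, so the plan is to bound the probability that the rows $R_1,\dots,R_n \in \{\pm 1\}^n$ are linearly dependent. The natural framing is through a \emph{null vector}: $M$ is singular iff some row lies in the span of the others, so, exposing the rows one at a time, the dominant danger is the last step, where $R_n$ must land in the hyperplane $H = \mathrm{span}(R_1,\dots,R_{n-1})$. The quantity to control is therefore the small-ball concentration $\Pr_R[\langle R, v\rangle = 0]$, where $v$ is the unit normal of $H$. First I would recall the Littlewood--Offord anti-concentration bound, which gives $\Pr_R[\langle R, v\rangle = 0] = O(1/\sqrt{\|v\|_0})$ and, more generally, shows this probability is governed by how arithmetically spread the coordinates of $v$ are.

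The central difficulty is that a naive union bound fails: a single hyperplane such as $\{x : x_i = x_j\}$ already captures $R_n$ with probability $1/2$, which is why singularity is only exponentially (not super-exponentially) rare, and why the sharp exponent is decided by a tension --- such a high-concentration hyperplane is itself unlikely to be exactly the span of the first $n-1$ rows. I would make this precise by a structural dichotomy on $v$: the \emph{incompressible} case, where $v$ is far from sparse and arithmetically unstructured, so the concentration is $O(n^{-1/2})$ and the exposure probabilities multiply to something far below $(1/\sqrt2)^n$; and the \emph{compressible/structured} case, where $\Pr_R[\langle R, v\rangle = 0]$ can be as large as a constant. The latter is the heart of the matter.

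For the structured case the plan is to invoke inverse Littlewood--Offord theory (bounds phrased via the essential least common denominator of $v$): a large concentration forces $v$ to lie in a set of small metric entropy, so one can union bound over an $\epsilon$-net of such normals, trading the large per-vector concentration against the small number of structured directions and the small probability that the random subspace $H$ actually points in one of them. Optimizing this trade-off is exactly what produces the constant $1/\sqrt2$, with the dominant contribution coming from the sparsest structured directions (the $e_i \pm e_j$-type normals underlying row coincidences). The hard part will be making this structure-versus-count balance quantitatively sharp: a crude swapping/replacement argument only yields weaker exponents such as $(3/4 + o(1))^n$, and squeezing out $(1/\sqrt2 + o(1))^n$ requires the finer enumeration of structured normals at each concentration scale. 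Since the paper only uses this result as the cited black box \cite{bvw10}, in practice I would simply quote it, with the above as the route by which I would reconstruct the bound.
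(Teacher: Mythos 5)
The paper offers no proof of this theorem at all --- it is imported verbatim from \cite{bvw10} as a black box, exactly as you conclude at the end of your proposal --- so your approach coincides with the paper's. Your accompanying sketch (Littlewood--Offord anti-concentration, the compressible/incompressible dichotomy on the normal vector, and the inverse-theorem entropy count needed to beat the cruder $(3/4+o(1))^n$ exponent) is a fair roadmap of how the cited result is actually established, but it is a description of the literature rather than a proof, and neither you nor the paper is expected to supply one here.
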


Using Theorem 9.2 in \cite{bvw10} and Lemma in \cite{sggc14}, we have that with high probability the above Theorem also holds for general $B > 2$.

In addition, we also run experiments, we do observe that condition number of the weight matrix is not too big.


\section{Experiments}



\begin{figure*}[!h]
\centering
{\includegraphics[width=0.25\textwidth]{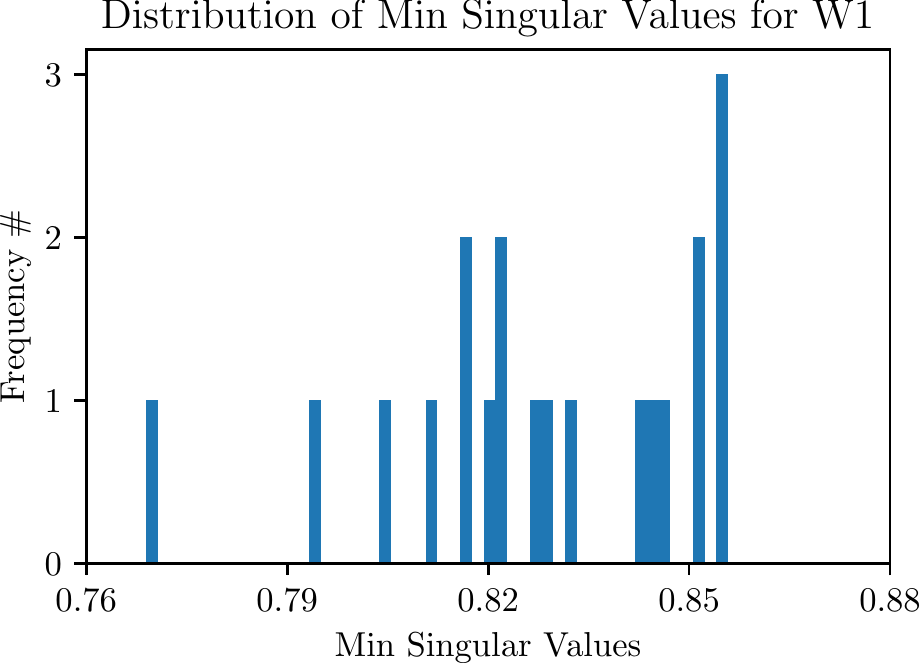}}\label{fig:Min_Singular_Values_for_W1_nhu500_CR64}%
{\includegraphics[width=0.25\textwidth]{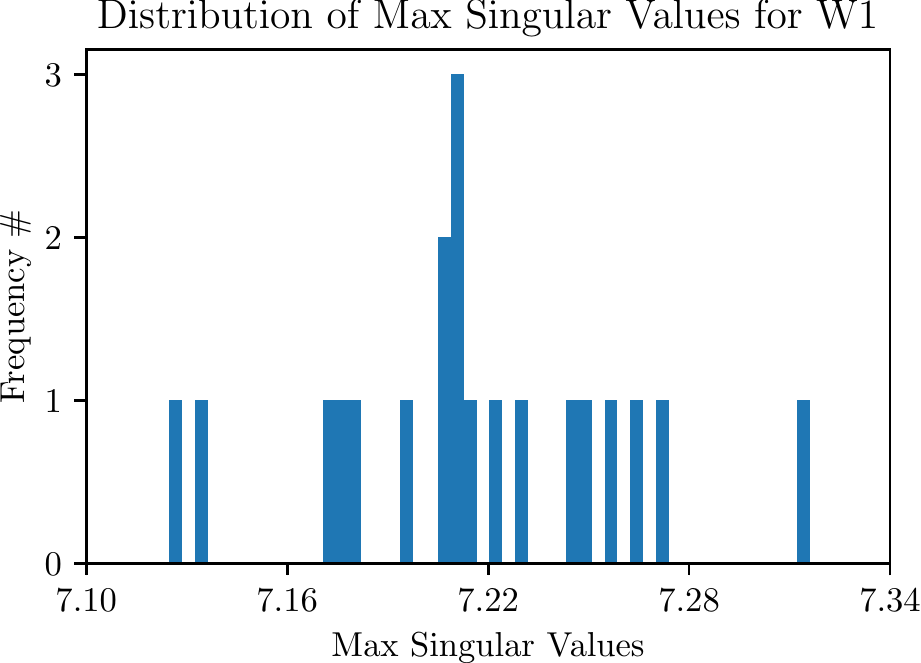}}\label{fig:Max_Singular_Values_for_W1_nhu500_CR64}%
{\includegraphics[width=0.25\textwidth]{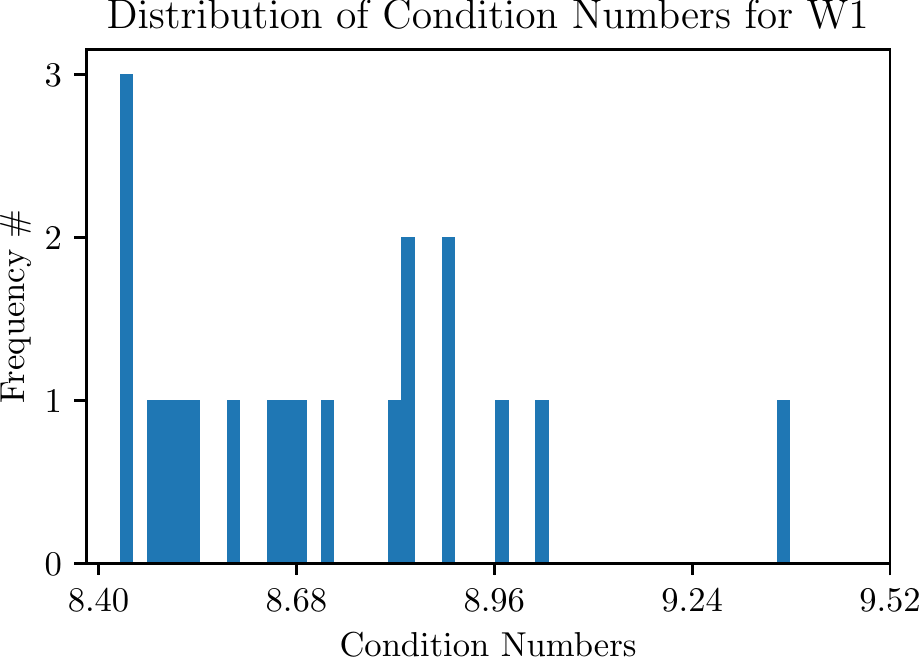}}\label{fig:Condition_Numbers_for_W1_nhu500_CR64}%
{\includegraphics[width=0.25\textwidth]{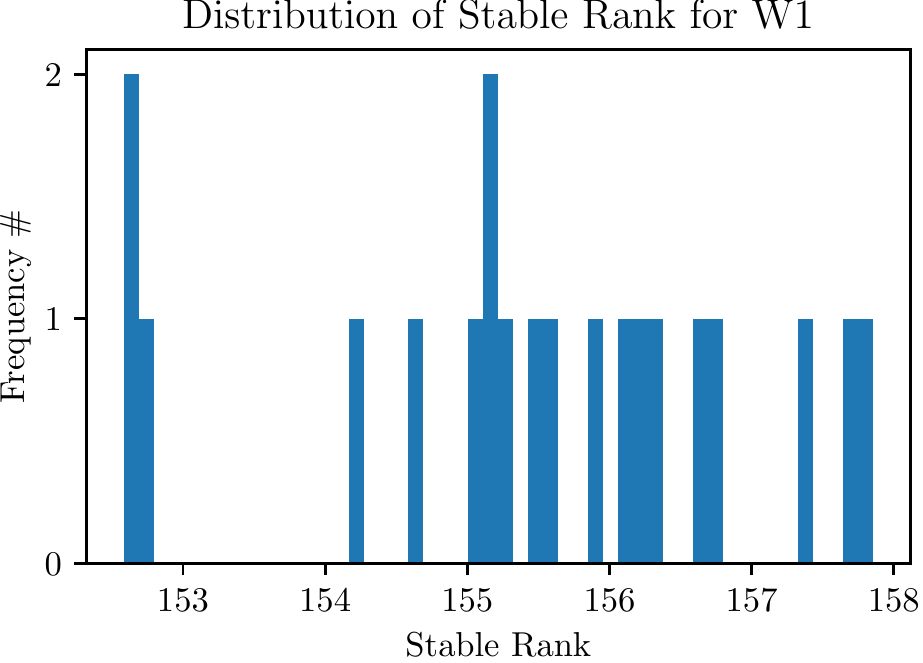}}\label{fig:Stable_Rank_for_W1_nhu500_CR64}\\
\vspace{.1in}
{\includegraphics[width=0.25\textwidth]{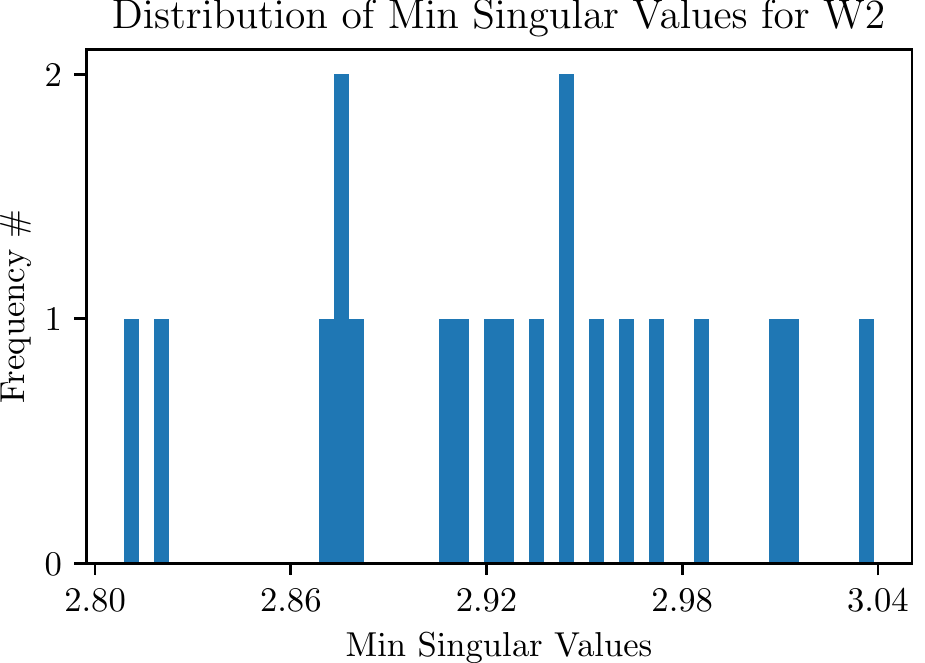}}\label{fig:Min_Singular_Values_for_W2_nhu500_CR64}%
{\includegraphics[width=0.25\textwidth]{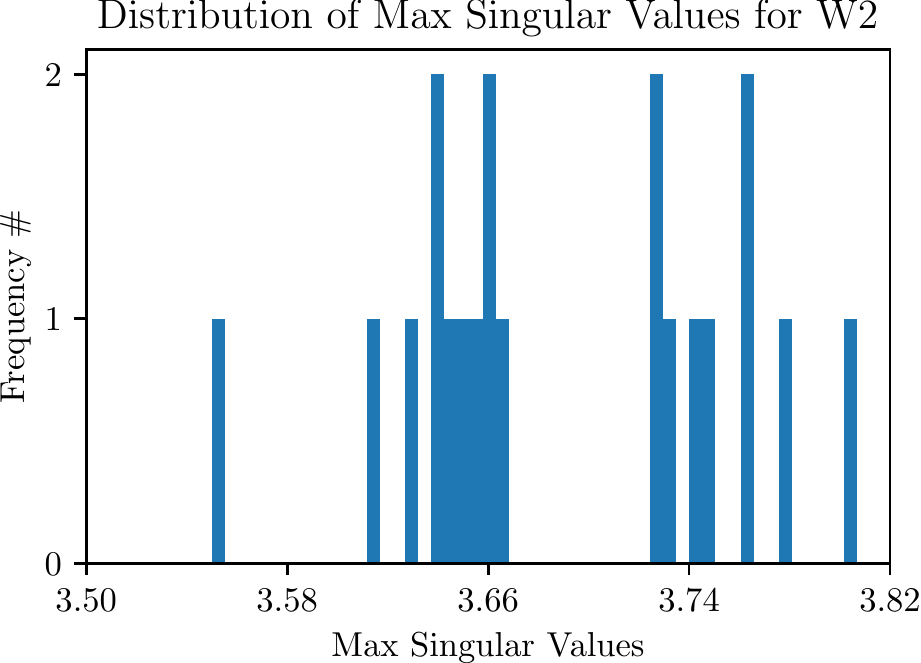}}\label{fig:Max_Singular_Values_for_W2_nhu500_CR64}%
{\includegraphics[width=0.25\textwidth]{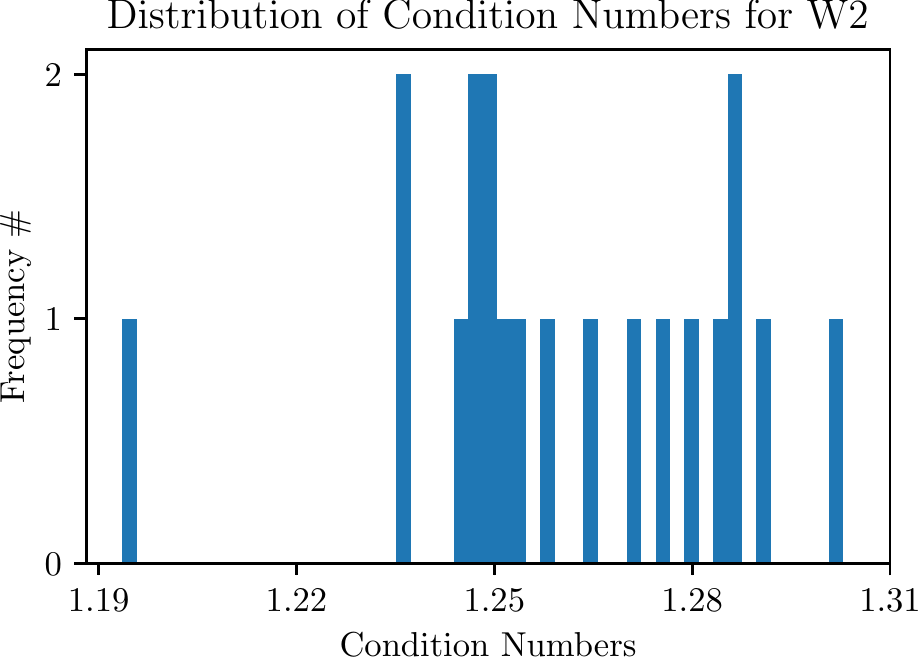}}\label{fig:Condition_Numbers_for_W2_nhu500_CR64}%
{\includegraphics[width=0.25\textwidth]{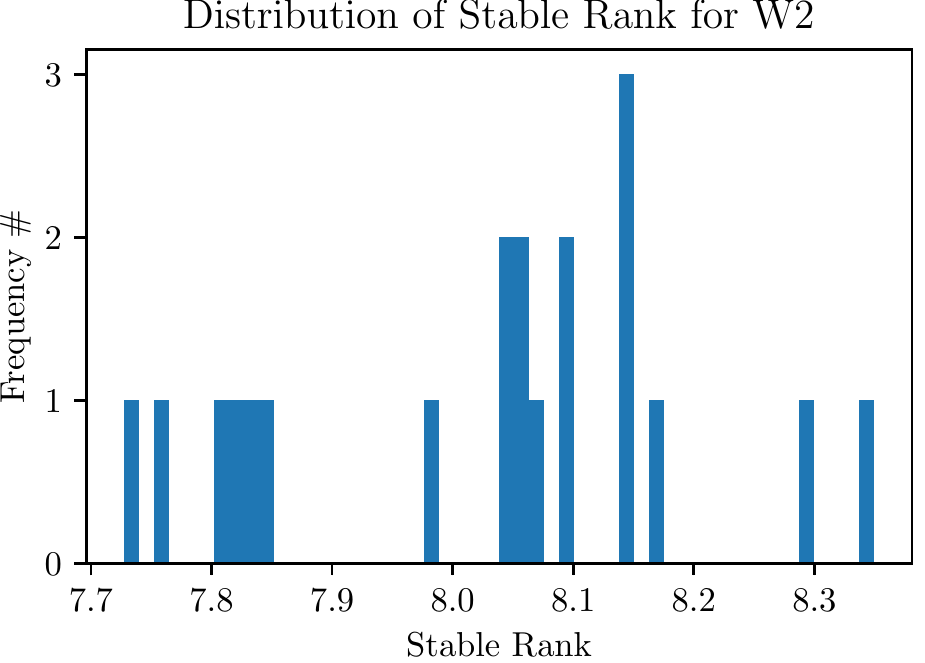}}\label{fig:Stable_Rank_for_W2_nhu500_CR64}
\caption{\small Input dimension is 784. Distributions of singular values, condition numbers, and stable ranks for two weight matrices $W_1$ and $W_2$ in HashedNets with 500 hidden units for 20 random seeds.}
\label{fig:ConditionNumbers_nhu500}
\end{figure*}




\end{document}